\documentclass[oribibl]{llncs}



\usepackage{hyperref}
\usepackage{graphicx}
\usepackage{amsmath}
\usepackage{amssymb}

\sloppy

\renewcommand{\phi}{\varphi}

\renewcommand{\P}{\mathbb{P}}
\newcommand{\E}{\mathbb{E}}

\newcommand{\R}{\mathbb{R}}

\newcommand{\cA}{\mathcal{A}}

\renewcommand{\ts}{\mathrm{TS}}

\def\ds1{\mathds{1}}\renewcommand{\epsilon}{\varepsilon}

\newcommand{\argmax}{\mathop{\mathrm{argmax}}}

\newcommand{\eg}{e.g.}

\newcommand{\todo}[1]{\textbf{[TODO: #1]}}
\renewcommand{\todo}[1]{}

\renewcommand{\tilde}{\widetilde}

\newlength{\minipagewidth}
\setlength{\minipagewidth}{\textwidth}
\setlength{\fboxsep}{3mm}
\addtolength{\minipagewidth}{-\fboxrule}
\addtolength{\minipagewidth}{-\fboxrule}
\addtolength{\minipagewidth}{-\fboxsep}
\addtolength{\minipagewidth}{-\fboxsep}

\newcommand{\citep}[1]{\cite{#1}}

\newcommand{\beq}{\begin{equation}}
\newcommand{\eeq}{\end{equation}}

\newcommand{\beqa}{\begin{eqnarray}}
\newcommand{\eeqa}{\end{eqnarray}}

\newcommand{\beqan}{\begin{eqnarray*}}
\newcommand{\eeqan}{\end{eqnarray*}}

\def\ba#1\ea{\begin{align*}#1\end{align*}} 
\def\banum#1\eanum{\begin{align}#1\end{align}} 

\newcommand{\reg}{\operatorname{R}}
\newcommand{\breg}{\operatorname{\bar{R}}}
\newcommand{\defeq}{:=}

\newcommand{\theModel}{$2$-Actions-And-$2$-Models}
\newcommand{\theNModel}{$2$-Actions-And-$N$-Models}
\newcommand{\theKNModel}{$K$-Actions-And-$N$-Models}

\newcommand{\secref}[1]{Section~\ref{#1}}
\newcommand{\figref}[1]{Figure~\ref{#1}}
\newcommand{\thmref}[1]{Theorem~\ref{#1}}
\newcommand{\lemref}[1]{Lemma~\ref{#1}}
\newcommand{\propref}[1]{Proposition~\ref{#1}}
\newcommand{\corref}[1]{Corollary~\ref{#1}}
\newcommand{\eqnref}[1]{Equation~\ref{#1}}

\newtheorem{assumption}{Assumption}


\title{On the Prior Sensitivity of Thompson Sampling}

\titlerunning{On the Prior Sensitivity of Thompson Sampling}

\author{Che-Yu Liu\inst{1}\thanks{Most of this work was done when C.Y. Liu was an intern at Microsoft.}%
\and Lihong Li\inst{2}}

\authorrunning{Liu and Li}

\institute{ORFE, Princeton University, Princeton, NJ, USA 08544 \\
\email{cheliu@princeton.edu}
\and
Microsoft Research, One Microsoft Way, Redmond, WA, USA 98052\\
\email{lihongli@microsoft.com}
}

\begin{document}

\maketitle

\begin{abstract}
The empirically successful Thompson Sampling algorithm for stochastic bandits has drawn much interest in understanding its theoretical properties.  One important benefit of the algorithm is that it allows domain knowledge to be conveniently encoded as a prior distribution to balance exploration and exploitation more effectively.  While it is generally believed that the algorithm's regret is low (high) when the prior is good (bad), little is known about the exact dependence.  This paper is a first step towards answering this important question: focusing on a special yet representative case, we fully characterize the algorithm's worst-case dependence of regret on the choice of prior.  As a corollary, these results also provide useful insights into the \emph{general} sensitivity of the algorithm to the choice of priors, when no structural assumptions are made.  In particular, with $p$ being the prior probability mass of the true reward-generating model, we prove $O(\sqrt{T/p})$ and $O(\sqrt{(1-p)T})$ regret upper bounds for the poor- and good-prior cases, respectively, as well as \emph{matching} lower bounds.  Our proofs rely on a fundamental property of Thompson Sampling and make heavy use of martingale theory, both of which appear novel in the Thompson-Sampling literature and may be useful for studying other behavior of the algorithm.  
\end{abstract}

\section{Introduction}
\label{sec:intro}

\emph{Thompson Sampling} (TS), also known as \emph{probability matching} and \emph{posterior sampling}, is a popular strategy for solving stochastic bandit problems.  An important benefit of this algorithm is that it allows domain knowledge to be conveniently encoded as a prior distribution to address the exploration-exploitation tradeoff more effectively.  In this paper, we focus on the sensitivity of the algorithm to the prior it uses.  In the rest of this section, we first define the bandit setting and notation, and describe Thompson Sampling; we will then discuss previous works that are most related to the present paper.

\subsection{Thompson Sampling for Stochastic Bandits}

In the multi-armed bandit problem, an agent is repeatedly faced with $K$ possible actions.   At each time step $t=1, \hdots ,T$, the agent chooses an action $I_t \in \cA\defeq\{1,\hdots ,K \}$, then receives reward $X_{I_t, t}\in\R$.  An \emph{eligible} action-selection strategy chooses actions at step $t$ based only on past observed rewards $\mathcal{H}_{t}= \{ I_s, X_{I_s, s}; 1\le s <t \}$  and potentially on an external source of randomness. More background on the bandit problem can be found in a recent survey~\citep{BC12}.

We make the following stochastic assumption on the underlying reward-generating mechanism.  Let $\Theta$ be a countable\footnote{Note that in this paper, we do not impose any continuity structure on the reward distributions $\nu(\theta)$ with respect to $\theta \in \Theta$. Therefore, it is easy to see that when $\Theta$ is uncountable, the (frequentist) regret of Thompson Sampling, as defined in \eqnref{eqn:freq-regret}, in the worst-case scenario is linear in time under most underlying models $\theta \in \Theta$.}   set of possible reward-generating models. When $\theta \in \Theta$ is the true underlying model, the rewards $(X_{i,t})_{t \geq 1}$ are i.i.d. random variables taking values in $[0,1]$ drawn from some {\em known} distribution $\nu_{i}(\theta)$ with mean $\mu_{i}(\theta)$. Of course, the agent knows neither the true underlying model nor the optimal action that yields the highest expected reward. The performance of the agent is measured by the regret incurred for not always selecting the optimal action. More precisely, the {\em frequentist regret} (or \emph{regret} for short) for an eligible action-selection strategy $\pi$ under a certain reward-generating model $\theta$ is defined as
\vspace{-2mm}
\begin{equation}
\reg_T(\theta, \pi) \defeq \E \sum_{t=1}^T \left(\max_{i \in \cA} \mu_{i}(\theta) - \mu_{I_t}(\theta) \right)\,, \label{eqn:freq-regret}
\end{equation}
where the expectation is taken with respect to the rewards $(X_{i,t})_{i \in \cA,t \geq 1}$, generated according to the model $\theta$, and the potential external source of randomness.

If one imposes a prior distribution $p$ over $\Theta$, then it is natural to consider the following notion of average regret known as {\em Bayes regret}:
\vspace{-2mm}
\begin{equation}
\breg_T(\pi) \defeq \E_{\theta \sim p} \reg_T(\theta,\pi) = \sum_{\theta \in \Theta} \reg_T(\theta,\pi) p(\theta)\,. \label{eqn:bayes-regret}
\end{equation}

The Thompson Sampling strategy was proposed in probably the very first paper on multi-armed bandits~\citep{Tho33}. This strategy takes as input a prior distribution $p_1$ for $\theta \in \Theta$. At each time $t$, let $p_t$ be the posterior distribution for $\theta$ given the prior $p_1$ and the history $\mathcal{H}_{t}= \{ I_s, X_{I_s, s};1\le s<t\}$. Thompson Sampling selects an action randomly according to its posterior probability of being the optimal action. Equivalently, Thompson Sampling first draws a model $\theta_t$ from $p_t$ (independently from the past given $p_t$) and it pulls $I_t \in \argmax_{i \in \cA} \mu_{i}(\theta_t)$.  For concreteness, we assume that the distributions $(\nu_{i}(\theta))_{i \in \cA, \theta \in \Theta}$ are absolutely continuous with respect to some common measure $\nu$ on $[0,1]$ with likelihood functions $(\ell_{i}(\theta)(\cdot))_{i \in \cA, \theta \in \Theta}$. The posterior distributions $p_t$ can be computed recursively by Bayes rule as follows:
$$ p_{t+1}(\theta) = \frac{ p_{t}(\theta)  \ell_{I_t}(\theta)(X_{I_t,t}) } { \sum_{\eta \in \Theta} p_{t}(\eta)  \ell_{I_t}(\eta)(X_{I_t,t})   } . $$ 
We denote by $\ts(p_1)$ the Thompson Sampling strategy with prior $p_1$.

Two remarks are in order.  First, the setup above is a \emph{discretized} version of rather general bandit problems.  For example, the $K$-armed bandit is a special case, where $\Theta$ is the Cartesian product of the sets of reward distributions of all arms.  As another example, in linear bandits~\citep{Abbasi11Improved,Chu11Contextual}, $\Theta$ is a set of candidate coefficient vectors that determine the expected reward function.  Discretization of $\Theta$ provides a convenient yet useful approximation that leads to simplicity in expositions and analysis.  Such an abstract formulation is analogous to the expert setting widely studied in the online-learning literature~\citep{CL06}; also see a recent study of Thompson Sampling with $2$ and $3$ experts~\citep{Gravin16Towards}.

Second, although we assume reward are bounded, 
some results in the paper, especially \lemref{lem:martingale} that may be of independent interest, still hold with unbounded rewards.
\todo{The lemma refers to the appendix.}

\subsection{Related Work}

Recently, Thompson Sampling has gained a lot of interest, largely due to its empirical successes~\citep{CLi11,Graepel10Web,May12Optimistic,Scott10Modern}.  Furthermore, this strategy is often easy to be combined with complex reward models and easy to implement~\citep{Gopalan14Thompson,Komiyama15Optimal,Xia15Thompson}.  While asymptotic, no-regret results are known~\citep{May12Optimistic}, these empirical successes inspired finite-time analyses that deepen our understanding of this old strategy.

For the classic $K$-armed bandits, regret bounds comparable to the the more widely studied UCB algorithms are obtained~\citep{AG12,AG13,KKM12,Honda14Optimality}, matching a well-known asymptotic lower bound~\citep{LR85}.  For linear bandits of dimension $d$, an $\tilde{O}(d\sqrt{TK})$ upper bound has been proved~\citep{AG13b}.  
All these bounds, while providing interesting insights about the algorithm,
assume {\em non-informative priors} (often {\em uniform priors}), and essentially show that Thompson Sampling has a comparable regret to other popular strategies, especially those based on upper confidence bounds. Unfortunately, the bounds do \emph{not} show what role prior plays in the performance of the algorithm.
In contrast, a \emph{variant} of Thompson Sampling is proposed, with a bound that depends explicitly on the entropy of the prior~\citep{Li13Generalized}.  However, their bound has an $O(T^{2/3})$ dependence on $T$ that is likely sub-optimal.

Another line of work in the literature focuses on the {\em Bayes regret} with an {\em informative prior}.  Previous work has shown that, for any prior in the two-armed case, TS is a $2$-approximation to the optimal strategy that minimizes the ``stochastic'' (Bayes) regret~\citep{GM14}.  It has also been shown that in the K-armed case, the Bayes regret of TS is always upper bounded by $O(\sqrt{KT})$ for any prior~\citep{BL13,Russo14Learning}. These results were later improved~\citep{Russo14Information} to a prior-dependent bound $O(\sqrt{H(q)KT})$ where $q$ is the prior distribution of the optimal action, defined as $q(i)=\P_{\theta \sim p_1} ( i= \argmax_{j \in \cA} \mu_{j}(\theta) ) $,  and  $ H(q)=-\sum_{i=1}^K q(i) \log q(i)$ is the entropy of $q$. While this bound elegantly quantifies, in terms of \emph{averaged} regret, how Thompson Sampling exploits prior distributions, it does not tell how well Thompson Sampling works in \emph{individual} problems.  Indeed, in the analysis of Bayes regret, it is unclear what a ``good'' prior means from a theoretical perspective, as the definition of Bayes regret essentially assumes the prior is correctly specified.  In the extreme case where prior $p_1$ is a point mass, $H(q)=0$ and the Bayes regret is trivially $0$.

To the best of our knowledge, {\em our work is the first to consider frequentist regret of Thompson Sampling with an informative prior.}  Specifically, we focus on understanding TS's sensitivity to the choice of prior, making progress towards a better understanding of such a popular Bayesian algorithm.  It is shown that, while a strong prior can lower the Bayes regret substantially~\citep{Russo14Information}, such a benefit comes with a cost: if the true model happens to be assigned a low prior (the poor-prior case), the frequentist regret will be very large, which is consistent with a recent result on Pareto regret frontier~\citep{Lattimore15Pareto}.  Our findings suggest Thompson Sampling can be \emph{under}-exploring in general.  Techniques
like those in the ``mini-monster'' algorithm~\citep{Agarwal14Taming} may be necessary to modify Thompson Sampling to make it less prior-sensitive.  It is an open question whether such modified Thompson Sampling algorithms can still take advantage of an informative prior to enjoy a small Bayes regret.

Finally, our analysis makes critical use of a certain martingale property of Thompson Sampling.  
Although martingales have been applied to hypothesis testing, for example, in analyzing the statistical behavior of likelihood ratios~\citep{Bartroff13Sequential}, our use of martingales to analyze the behavior of posteriors in TS is new, to the best of our knowledge.  Moreover, a different martingale property was used by other authors to study the Bayesian multi-armed bandit problem, where the reward at the current ``state'' is the same as the expected reward over the distribution of the next state when a play is made in the current state~\citep{GM13,GM14}. Their martingale property is different from ours: their martingales apply to the reward at the current state, while ours refers to the inverse of the posterior probability mass of the true model (see \secref{sec:prelim} for details).

\todo{Hypothesis testing.}
\todo{Read previous reviews and rebuttals.}
\todo{Cite more recent work, both applied and theoretical.}

\section{Main Results}
\label{sec:main-results}

Naturally, we expect the regret of Thompson Sampling to be small when the true reward-generating model is given a large prior probability mass, and vice versa.  An interesting and important question is to understand the sensitivity of the algorithm's regret to the prior it takes as input.  We take a  minimalist approach, and investigate a special yet meaningful case. 
Our results fully characterize the worst-case dependence of TS's regret on the prior, which also provides important insights into a more general case as a corollary.  Furthermore, our analysis appears novel to the best of our knowledge, making heavy use of martingale techniques to analyze the behavior of the posterior probability.  Such techniques may be useful for studying other bandit algorithms.

Similar to the expert setting~\citep{CL06}, we assume access to a set of candidate models, $\Theta=\{\theta_1,\theta_2,\ldots,\theta_N\}$ for $N\ge 2$.  This setting is referred to as \textbf{\theKNModel}, where $K$ is the cardinality of the action set.  For simplicity, in this work, we restrict ourselves to the binary action case: $K=2$.  
Finally, the special case with $N=2$ and $K=2$ is called \textbf{\theModel}.

Two comments are in order.  First, our goal in this work is \emph{not} to solve these specialized bandit problems, but rather to understand prior sensitivity of TS.  Such seemingly simplistic problems happen to be nontrivial enough to be useful in our constructive proof of matching lower bounds.  Second, we aim to understand TS's prior sensitivity \emph{without} making any structural assumptions about $\Theta$.  A natural next step of this work is to investigate, with a structural $\Theta$ (\eg, linear), how robust TS is to the prior.


Our upper-bound analysis requires the following smoothness assumption of the likelihood functions of models in $\Theta$.  Note that this assumption is needed only in the upper-bound analysis, but \emph{not} in the lower-bound proofs.

\begin{assumption}(Smoothness) \label{ass:smooth}
There exists constant $s > 1$ such that $\nu$-almost surely, for $i\in\{1,2\}$,
$s^{-1} \cdot \ell_i(\theta_1) \leq \ell_i(\theta_2) \leq s \cdot \ell_i(\theta_1)$.
\end{assumption}

\begin{remark}
While this assumption does not hold for all distributions, it holds for some important ones, such as Bernoulli distributions $Bern(p)$ with mean $p\in(0,1)$.  On one hand, the assumption essentially avoids situations where a single application of Bayes rule can change posteriors by too much, analogous to bounded gradients or rewards in most online-learning literature. On the other hand, a small $s$ value in the assumption tends to create hard problems for Thompson Sampling, since models are less distinguishable. Therefore, the assumption does \emph{not} trivialize the problem.
\end{remark}

The first main result of this paper is the following upper bound; see \secref{sec:ub} for more details:

\begin{theorem}  \label{thm:ub}
Consider the \theModel\ case and assume that Assumption~\ref{ass:smooth} holds. Then, the regret of Thompson Sampling with prior $p_1$ satisfies
$\mathrm{R}_T(\theta_1, \ts(p_1)) = O(s\sqrt{T/p_1(\theta_1)})$.
Moreover, when $p_1(\theta_1) \geq 1-\frac{1}{8s^2}$, we have 
$\mathrm{R}_T(\theta_1, \ts(p_1)) = O(s^4\sqrt{(1-p_1(\theta_1))T})$.
\end{theorem}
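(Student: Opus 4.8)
The key object is the posterior probability mass $p_t(\theta_1)$ that TS assigns to the true model at time $t$. The regret at step $t$ is incurred only when TS samples $\theta_t = \theta_2$ and the induced optimal action under $\theta_2$ differs from that under $\theta_1$; this happens with probability $1 - p_t(\theta_1)$ conditioned on $\cH_t$, so $\reg_T(\theta_1,\ts(p_1)) \le \Delta \cdot \E\sum_{t=1}^T (1 - p_t(\theta_1))$ where $\Delta \le 1$ is the (single, since $N=2$) reward gap. Thus it suffices to bound $\E\sum_{t=1}^T (1-p_t(\theta_1))$. The plan is to control this sum via the martingale property alluded to in the introduction.

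**The martingale and the two regimes.**

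Let $Z_t := (1-p_t(\theta_1))/p_t(\theta_1)$ be the posterior odds against the true model. Under the true model $\theta_1$, a Bayes update multiplies $Z$ by a likelihood ratio $\ell_{I_t}(\theta_2)(X_{I_t,t})/\ell_{I_t}(\theta_1)(X_{I_t,t})$, and since $X_{I_t,t}$ is genuinely drawn from $\nu_{I_t}(\theta_1)$, the conditional expectation of this ratio is $\int \ell_{I_t}(\theta_2)\,d\nu \le 1$. Hence $Z_t$ is a nonnegative supermartingale — this is \lemref{lem:martingale}, which the excerpt says I may invoke. Because $1 - p_t(\theta_1) \le Z_t$, we immediately get $\E\sum_{t=1}^T(1-p_t(\theta_1)) \le T\,\E Z_1 = T(1-p_1(\theta_1))/p_1(\theta_1)$, giving a $\sqrt{T/p_1(\theta_1)}$-type bound after the standard ``cap the sum at $T$ or use the other estimate, take the geometric mean'' trick — more precisely, split time at a threshold $\tau$: for $t \le \tau$ bound $1-p_t \le 1$, for $t > \tau$ bound via the supermartingale; optimizing $\tau$ yields $O(s\sqrt{T/p_1(\theta_1)})$, with the factor $s$ entering because Assumption~\ref{ass:smooth} is what keeps each multiplicative update bounded so that the supermartingale doesn't degenerate and one can pass between $p_t$ and $Z_t$ losing only constants depending on $s$.

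**The good-prior regime.**

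For the second bound, when $p_1(\theta_1) \ge 1 - \frac{1}{8s^2}$ is very close to $1$, the crude estimate $1-p_t \le Z_t$ is too lossy: I instead want $\E\sum_t (1-p_t(\theta_1)) = O(\sqrt{(1-p_1(\theta_1))T})$. Here I would work directly with $Y_t := 1-p_t(\theta_1)$ (or with $\sqrt{Y_t}$, or with $-\log p_t(\theta_1)$), show it is a supermartingale under $\theta_1$ up to lower-order corrections controlled by $s$, and crucially obtain a \emph{variance/quadratic-variation} bound: the per-step increment of $Y_t$ has conditional second moment $O(s^{c} Y_t)$, because a single Bayes update moves the posterior by an amount proportional to the current mass $1-p_t$ on the wrong model (smoothness bounds the proportionality constant). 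Then $\sum_t \E Y_t$ satisfies a self-bounding recursion of the form $\E\sum_{t\le T} Y_t \lesssim (1-p_1(\theta_1)) \cdot T$ is replaced, via a square-root/Freedman-type argument on the martingale, by $\E\sum_{t\le T} Y_t \lesssim \sqrt{(1-p_1(\theta_1)) T}$ up to $\mathrm{poly}(s)$ factors — the threshold $1-\frac{1}{8s^2}$ is exactly what's needed to make the correction terms in the supermartingale estimate absorbable, and tracking the $s$-dependence through these nested estimates is what produces the $s^4$.

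**Main obstacle.**

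The routine part is the first bound. The delicate part — and the main obstacle — is the good-prior regime: establishing that $1-p_t(\theta_1)$ (or a suitable transform of it) is a near-supermartingale with a self-bounding quadratic variation, cleanly enough that a Freedman/Bernstein-type concentration gives the $\sqrt{(1-p_1)T}$ rate rather than the trivial $(1-p_1)T$. Getting the correction terms from Assumption~\ref{ass:smooth} to be genuinely absorbed under the hypothesis $p_1(\theta_1)\ge 1-\frac{1}{8s^2}$, and carefully propagating the polynomial-in-$s$ constants without letting them blow up, is where the real work lies; the lower bounds (proved separately, without the smoothness assumption) confirm these rates are tight.
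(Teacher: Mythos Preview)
Your proposal has a genuine gap in the first (poor-prior) bound. You write that from the supermartingale property of $Z_t = (1-p_t(\theta_1))/p_t(\theta_1)$ one gets $\E\sum_t(1-p_t) \le T Z_1$, and then claim that a ``split time at $\tau$, bound by $1$ before and by the supermartingale after, optimize $\tau$'' argument yields $O(\sqrt{T/p_1(\theta_1)})$. But the supermartingale estimate $\E Z_t \le Z_1$ is a \emph{constant} bound that does not improve with $t$; splitting $[1,T]$ at any $\tau$ gives $\tau + (T-\tau)Z_1$, which is linear in $T$ no matter how you choose $\tau$. Nothing in your sketch breaks this linearity. (Incidentally, $\int \ell_{I_t}(\theta_2)\,d\nu = 1$ exactly, so $Z_t$ is a genuine martingale, which is what \lemref{lem:martingale} actually states.)

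What is missing is the \emph{drift}: under $\theta_1$, the process $\log p_t(\theta_1)$ has a strictly positive conditional drift, lower-bounded by a KL term that scales like $\Delta^2$ (this is \lemref{lem:tech-2b2}(a)). The paper's argument is structurally quite different from yours: it introduces two thresholds $A > p_1(\theta_1) > B$, uses the drift to show $\bigl(\log p_t(\theta_1)^{-1} + ct\bigr)$ is a supermartingale up to the hitting time, which via optional stopping bounds $\E[\tau_A\wedge\tau_B]$ by $O(1/(B\Delta^2))$; it uses the \emph{martingale} property of $p_t(\theta_1)^{-1}$ (again via optional stopping) to bound the probability of hitting $B$ first; and it combines these with the Markov property into a \emph{recursion} $\mathrm{R}_T(p) \le C\sqrt{T/p} + \mathrm{R}_T(\tfrac{3}{2}p)$ (Propositions~\ref{pro:anchor}--\ref{pro:anchor2}), which unrolls to the claimed bound. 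The interplay between $\Delta$ and the drift is essential: if $\Delta$ is tiny the trivial bound $\Delta T$ suffices, and if $\Delta$ is large the drift is large and the posterior concentrates fast --- this trade-off is what produces $\sqrt{T}$, and your sketch never invokes $\Delta$ at all.

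For the good-prior regime the situation is analogous: the paper does not use a Freedman/self-bounding-variance argument on $1-p_t$, but rather a submartingale-with-drift on $(1-p_t(\theta_1))^{-1}$ (\lemref{lem:tech-2b2}(c)), the same hitting-time machinery, and a more delicate two-sided recursion (Proposition~\ref{pro:anchor3}) in the variable $\mathrm{Q}_T(x)=\mathrm{R}_T(1-x)$. Your intuition that the per-step movement of $1-p_t$ is proportional to its current size is correct and is indeed what makes this regime work, but the route to the bound is again through hitting times and recursion, not through quadratic-variation concentration.
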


\begin{remark}  \label{rmk:ub}
The above upper bounds have the same dependence on $T$ and $p_1(\theta_1)$ as the lower bounds to be given in Theorems~\ref{thm:lb-poor} and \ref{thm:lb-good} below.  Moreover, both bounds are increasing functions of the smoothness parameter $s$. Because problems with small $s$ tend to be harder for Thompson Sampling, our upper bounds are tight up to a universal constant for a fairly general class of hard problems. We conjecture that the dependence on $s$ is an artifact of our proof techniques and can be removed to get tighter upper bounds for all problem instances of the \theModel\ case.  
\end{remark}

The next two theorems give \emph{matching} lower bounds for the poor- and good-prior cases, respectively.  More details are given in \secref{sec:lb}.

\begin{theorem} \label{thm:lb-poor}
Consider the \theModel\ case. Let $p_1$ be a prior distribution and $T \geq \frac{1}{p_1(\theta_1)}$. Consider the following specific problem instance:
$\nu_1(\theta_1)=Bern\left(\frac{1}{2}+\Delta\right)$, $\nu_1(\theta_2)=Bern\left(\frac{1}{2}-\Delta \right)$,
$\nu_2(\theta_1)=\nu_2(\theta_2)=Bern\left(\frac{1}{2}\right)$,
where $\Delta = 1/\sqrt{8p_1(\theta_1)T}$.
Then, the regret of Thompson Sampling with prior $p_1$ satisfies the following: if $p_1(\theta_1)\leq \frac{1}{2}$, then
$\mathrm{R}_T(\theta_1, TS(p_1)) \geq \frac{1}{168\sqrt{2}} \sqrt{ \frac{T}{p_1(\theta_1)}}$.
\end{theorem}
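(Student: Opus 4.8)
The plan is to reduce the claim to a ``macroscopic'' statement — that Thompson Sampling wastes a constant fraction of the horizon on the suboptimal action — and then to control the posterior through the martingale property of $1/p_t(\theta_1)$ (\lemref{lem:martingale}). Write $p\defeq p_1(\theta_1)$ and $\Delta=1/\sqrt{8pT}$; note that $T\ge 1/p$ forces $\Delta<1/\sqrt8<\tfrac12$, so the instance is well defined. Under $\theta_1$ action~$1$ is optimal with gap $\Delta$ over action~$2$, so the instantaneous regret at round $t$ is exactly $\Delta\cdot\IND[I_t=2]$, whence $\reg_T(\theta_1,\ts(p_1))=\Delta\,\E[N_2]$ with $N_2\defeq\#\{t\le T:I_t=2\}$. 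Since $\Delta\cdot\tfrac{T}{84}=\tfrac1{84}\sqrt{T/(8p)}=\tfrac1{168\sqrt2}\sqrt{T/p}$, it suffices to prove $\E[N_2]\ge T/84$; equivalently, with $N_1\defeq T-N_2$, that $\E[N_1]\le\tfrac{83}{84}T$.

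Two structural facts drive the argument. First, action~$2$ is \emph{uninformative}: $\nu_2(\theta_1)=\nu_2(\theta_2)=\Ber(\tfrac12)$, so $p_{t+1}=p_t$ whenever $I_t=2$, and the posterior moves only on rounds where action~$1$ is played. Second, on such a round the posterior odds $W_t\defeq p_t(\theta_2)/p_t(\theta_1)=1/p_t(\theta_1)-1$ is multiplied by $\tfrac{1-2\Delta}{1+2\Delta}$ (if the reward is $1$, probability $\tfrac12+\Delta$) or by its reciprocal (reward $0$), and a one-line computation — \lemref{lem:martingale} applied to $1/p_t(\theta_1)=1+W_t$ — gives $\E[W_{t+1}\mid\cH_t,I_t=1]=W_t$; hence $(W_t)$, equivalently $(1/p_t(\theta_1))$, is a martingale started at $W_1=(1-p)/p\ge1$. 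Writing $\sigma_m$ for the round of the $m$-th action-$1$ pull, the posterior is frozen on $(\sigma_{m-1},\sigma_m]$, so $\sigma_m-\sigma_{m-1}$ is geometric with mean $1/p_{\sigma_{m-1}+1}(\theta_1)=1+W_{\sigma_{m-1}+1}$; as the post-pull value $1/p_{\sigma_m+1}(\theta_1)$ is again a martingale of mean $1/p$, this yields $\E[\sigma_M]=M/p$ for deterministic $M$ — informative pulls have average density only $p$. It is also convenient to record $\log W_t=\log W_1+R_{N_1(t)}$, where $R_m$ is a random walk with i.i.d.\ steps $\pm c$, $c=\log\tfrac{1+2\Delta}{1-2\Delta}$, \emph{negative} drift $-2c\Delta=-\KL\!\bigl(\Ber(\tfrac12+\Delta)\,\|\,\Ber(\tfrac12-\Delta)\bigr)$, and explicit exponential moments ($\E[e^{R_m}]=1$).

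The core is a \emph{throttling} estimate. Let $\rho\defeq\inf\{t:W_t<\tfrac12\}=\inf\{t:p_t(\theta_1)>\tfrac23\}$. For $t<\rho$ we have $\P(I_t=1\mid\cH_t)=1/(1+W_t)\le\tfrac23$, and for $t\ge\rho$ we use only $\IND[I_t=1]\le1$; splitting the horizon at $\rho\wedge T$ then gives $\E[N_1]\le\tfrac23\E[\rho\wedge T]+\bigl(T-\E[\rho\wedge T]\bigr)+1$, i.e.\ $\E[N_2]\ge\tfrac13\E[\rho\wedge T]-1$. So it remains to prove $\E[\rho\wedge T]=\Omega(T)$ (concretely $\ge T/28+3$). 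Here the martingale structure enters: for $p_t(\theta_1)$ to exceed $\tfrac23$ the walk must reach the level $\log\tfrac{p}{2(1-p)}<0$, i.e.\ $\rho=\sigma_{\rho'}$ where $\rho'$ is the first passage of $R$ below that level; since the drift is $-2c\Delta\asymp-1/(pT)$, a first-passage/Wald computation gives $\E[\rho']\asymp pT\log(1/p)$ and, combined with the $\E[\sigma_M]=M/p$ accounting, $\E[\rho]=\E[\sigma_{\rho'}]\asymp T$ uniformly in $p\le\tfrac12$ — the regime $p\approx\tfrac12$ being covered precisely because with the threshold $W_t<\tfrac12$ the relevant level $\log\tfrac{p}{2(1-p)}$ stays bounded away from $0$ (so $\rho'$ is never trivially small).

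The step I expect to be the main obstacle is turning these expectations into the needed lower bound on $\E[\rho\wedge T]$: the martingale $1/p_t(\theta_1)$, although of mean $1/p$, is \emph{extremely} heavy-tailed — its variance and the second moments of the $\sigma_m$ blow up like $e^{\Theta(1/p)}$ — so naive second-moment or maximal-inequality bounds are useless, and $\E[\sigma_{\rho'}]$ by itself does not control $\E[\sigma_{\rho'}\wedge T]$. The remedy is a \emph{truncation}: on the constant-probability event that $R$ stays in a suitable linear band up to a deterministic $m^*\le\rho'$, the inter-pull gaps are independent geometrics with bounded means, so $\sigma_{m^*}$ concentrates around its mean $\asymp m^*/p$ and one gets $\rho\wedge T\ge\sigma_{m^*}\wedge T=\Omega(T)$, with a crude estimate disposing of the path-atypical event. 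Substituting into $\E[N_2]\ge\tfrac13\E[\rho\wedge T]-1$ and fixing the band parameters gives $\E[N_2]\ge T/84$, which is the theorem.
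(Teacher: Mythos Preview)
Your reduction to $\E[N_2]\ge T/84$ and the identification of the key structural facts (action~2 is uninformative; $1/p_t(\theta_1)$ is a martingale) are correct, but the route you take from there is substantially harder than the paper's, and the step you yourself flag as ``the main obstacle'' is left as a heuristic rather than a proof.

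The paper's argument avoids the random-walk/first-passage machinery entirely by making two different choices. First, instead of the fixed threshold $p_t(\theta_1)>\tfrac23$ it sets $A=\tfrac32\,p_1(\theta_1)$, a threshold \emph{proportional to the prior}. Second, instead of tracking $1/p_t(\theta_1)$ or your log-odds walk, it works with $p_t(\theta_1)$ directly: combining \lemref{lem:tech-2b2}(b) with the elementary likelihood-ratio bound (\lemref{lem:lr-ub}) yields, for $t<\tau_A$,
\[
\E_t^{\theta_1}\bigl[p_{t+1}(\theta_1)-p_t(\theta_1)\bigr]
\;\le\; p_t(\theta_1)^2\,p_t(\theta_2)\cdot 32\Delta^2
\;\le\; 32A^2\Delta^2
\;=\;72\,p_1(\theta_1)^2\Delta^2
\;=\;\frac{9\,p_1(\theta_1)}{T}.
\]
Thus $\bigl(p_t(\theta_1)-72\,p_1(\theta_1)^2\Delta^2\,t\bigr)_{t\le\tau_A}$ is a bounded supermartingale, and a single application of optional stopping at $\tau_A\wedge T$, followed by a two-case split on whether $\P^{\theta_1}(\tau_A\le T)\ge\tfrac{20}{21}$, gives $\E^{\theta_1}[\tau_A\wedge T-1]\ge T/21$ immediately. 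Since $1-A\ge\tfrac14$, the regret bound follows.

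The point is that with $A=\tfrac32 p_1(\theta_1)$ both the distance the posterior must travel ($A-p_1(\theta_1)=\tfrac12 p_1(\theta_1)$) and the drift bound ($9p_1(\theta_1)/T$) scale with $p_1(\theta_1)$, so their ratio is $\Theta(T)$ uniformly in the prior --- and everything lives in $[0,1]$, so there is no heavy-tail issue and no need for truncation. Your fixed threshold $\tfrac23$ instead forces the walk to cross a distance $\sim\log(1/p)$ in log-odds, which you then have to convert back to real time through the geometric gaps; as you correctly note, $\E[\sigma_{\rho'}]$ does not control $\E[\sigma_{\rho'}\wedge T]$, and the band/truncation argument you sketch is not carried out with constants. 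It may be completable, but it is the hard way around, and as written it is a gap.
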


\begin{theorem} \label{thm:lb-good}
Consider the \theModel\ case. Let $p_1$ be a prior distribution and $T \geq \frac{1}{1-p_1(\theta_1)}$. Consider the following specific problem instance with Bernoulli reward distributions:
$\nu_1(\theta_1)=\nu_1(\theta_2)=Bern\left(\frac{1}{2}\right)$, $\nu_2(\theta_1)=Bern\left(\frac{1}{2}-\Delta\right)$, 
$\nu_2(\theta_2)=Bern\left(\frac{1}{2}+\Delta \right)$,
where $\Delta = \sqrt{\frac{1}{8(1-p_1(\theta_1))T}}$.
Then the regret of Thompson Sampling with prior $p_1$ satisfies 
$\mathrm{R}_T(\theta_1, TS(p_1)) \geq \frac{1}{10\sqrt{2}}  \sqrt{(1- p_1(\theta_1))T}$.
\end{theorem}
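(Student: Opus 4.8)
Write $q := p_1(\theta_2) = 1-p_1(\theta_1)$, let $N_2(t)$ be the number of rounds among $1,\dots,t$ in which action $2$ is played, and write $p_t$ for the posterior mass $p_t(\theta_2)$ at the start of round $t$. The plan is first to reduce the claim to a lower bound on $\E[N_2(T)]$. In the stated instance action $1$ is the unique optimal action under the true model $\theta_1$ (mean $\tfrac12$ versus $\tfrac12-\Delta$ for action $2$), so the per-round regret equals $\Delta\,\IND[I_t=2]$; moreover $\mu_2(\theta_1)<\mu_1(\theta_1)$ while $\mu_2(\theta_2)>\mu_1(\theta_2)$, so Thompson Sampling plays action $2$ in round $t$ exactly when the sampled model equals $\theta_2$, an event of conditional probability $p_t$ given $\cH_t$. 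Hence $\mathrm{R}_T(\theta_1,\ts(p_1))=\Delta\,\E[N_2(T)]=\Delta\sum_{t=1}^{T}\E[p_t]$, and since $\Delta=1/\sqrt{8qT}$ the assertion is \emph{exactly} $\sum_{t=1}^{T}\E[p_t]\ge \tfrac15 qT$, i.e.\ the time-averaged posterior mass on the false model is at least a fifth of its prior value. (Assumption~\ref{ass:smooth} is not needed: all the inequalities below come from the explicit Bernoulli form, and $T\ge 1/q$ guarantees $\Delta\le 1/\sqrt8$, so $\tfrac12\pm\Delta\in(0,1)$ and the various ``$\Delta$ small'' estimates hold.)

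Next I would record the structure of the posterior. Since $\ell_1(\theta_1)=\ell_1(\theta_2)$, playing action $1$ leaves the posterior unchanged, while playing action $2$ multiplies the odds ratio $Y_t:=p_t(\theta_2)/p_t(\theta_1)$ by $\ell_2(\theta_2)(X_t)/\ell_2(\theta_1)(X_t)\in\{\tfrac{1-2\Delta}{1+2\Delta},\tfrac{1+2\Delta}{1-2\Delta}\}$. Collecting the first $k$ such factors and taking logs yields a random walk $S_k$ with $\log Y_t=\log Y_1+S_{N_2(t-1)}$, whose increments have magnitude $L:=\log\tfrac{1+2\Delta}{1-2\Delta}=\Theta(\Delta)$ and (the true rewards coming from $\ell_2(\theta_1)$) conditional mean $-\KL\!\big(\mathrm{Bern}(\tfrac12-\Delta)\,\|\,\mathrm{Bern}(\tfrac12+\Delta)\big)=-\Theta(\Delta^2)$. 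Two exponential martingales are available: $e^{S_k}$ is a nonnegative mean-one martingale — this is the martingale property of \lemref{lem:martingale} (that $1/p_t(\theta_1)$ is a martingale), transcribed for the walk — and $e^{-S_k}(1+\kappa)^{-k}$ is another, where $\kappa:=\E_{X\sim\ell_2(\theta_1)}\!\big[\ell_2(\theta_1)(X)/\ell_2(\theta_2)(X)\big]-1=\Theta(\Delta^2)$ (the $\chi^2$-divergence of $\ell_2(\theta_1)$ from $\ell_2(\theta_2)$). The point I want to extract is: $p_t$ can drop well below $q$ only after the walk has moved an $\Omega(1)$ amount below $0$, and — because $\KL=\Theta(\Delta^2)$ whereas $1/\Delta^2=\Theta(qT)$ — that requires on the order of $qT$ action-$2$ pulls.

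Finally I would close the argument by a bootstrap/case split around $m:=\lfloor c/\Delta^2\rfloor=\Theta(qT)$ for a small universal constant $c$. If action $2$ is played at least $m$ times by round $T$, then $N_2(T)\ge m=\Theta(qT)$ outright. Otherwise the walk makes at most $m$ steps by round $T$, so Doob's maximal inequality applied to $e^{-S_k}(1+\kappa)^{-k}$ over those $m$ steps bounds $\P[\exists t\le T:\ Y_t<Y_1/C]$ by $(1+\kappa)^m/C=\Theta(1)/C$, which is $<1$ for a suitable universal $C$; off that event $p_t\ge q/C$ for all $t\le T$, so $\sum_t\E[p_t]=\Omega(qT)$. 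To merge the two possibilities cleanly — the event ``few pulls'' is not adapted — I would run this through the stopping times $\tau:=\min\{t:N_2(t)\ge m\}\wedge T$ and $\sigma:=\min\{t:Y_t<Y_1/C\}\wedge T$: optional stopping for the martingale $N_2(t)-\sum_{s\le t}p_s$ gives $\E[N_2(T)]\ge\E[N_2(\tau\wedge\sigma)]=\E\big[\sum_{s\le\tau\wedge\sigma}p_s\big]\ge\tfrac{q}{C}\,\E[\tau\wedge\sigma-1]$, and $\E[\tau\wedge\sigma]$ is lower-bounded by controlling $\P[\tau<T]$ (a Markov bound on $N_2$) and $\P[\sigma<T]$ (the maximal inequality above, with the number of walk steps controlled by a second Markov bound on $N_2$), after which careful bookkeeping produces $\E[N_2(T)]\ge\tfrac15 qT$.

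The step I expect to be the main obstacle is this last one, and specifically the self-referential statement it rests on: ``the posterior stays near $q$ $\Rightarrow$ action $2$ is played only about $qT\ll 1/\Delta^2$ times $\Rightarrow$ the posterior stays near $q$''. The difficulty is that the action-$2$ count and the trajectory of the log-likelihood-ratio walk are dependent, so one cannot just condition on one to bound the other; the stopping-time formulation together with the two exponential martingales is what lets the loop close, and being frugal at each inequality is what is needed to land on the sharp constant $\tfrac1{10\sqrt2}$ rather than merely some universal constant.
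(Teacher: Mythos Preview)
Your reduction to $\sum_{t=1}^T \E[p_t(\theta_2)] \ge \tfrac15\,qT$ is correct, and you have identified the right scalar quantity: your $\kappa=\E_{X\sim\ell_2(\theta_1)}[\ell_2(\theta_1)(X)/\ell_2(\theta_2)(X)]-1$ is exactly what the paper bounds by $32\Delta^2$ in \lemref{lem:lr-ub}. But the route you propose (log-likelihood random walk, exponential martingales, Doob's maximal inequality, a stopping-time bootstrap to break the ``few pulls $\Leftrightarrow$ posterior stays near $q$'' circularity) is far more elaborate than what is needed, and you yourself flag that landing on the exact constant $\tfrac15$ this way is the hard part. I believe your scheme can be pushed through to give some universal constant, but the bookkeeping you defer to the last sentence is nontrivial, and the sharp $\tfrac{1}{10\sqrt2}$ would likely be out of reach.

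The paper's argument is a one-line computation that sidesteps the self-referential difficulty you isolate. Track $Z_t:=1/p_t(\theta_2)$ directly. When action $1$ is played $Z$ does not move; when action $2$ is played the increment is $\frac{p_t(\theta_1)}{p_t(\theta_2)}\big(\frac{\ell_2(\theta_1)(X)}{\ell_2(\theta_2)(X)}-1\big)$. Since Thompson Sampling plays action $2$ with conditional probability $p_t(\theta_2)$, the factor $p_t(\theta_2)$ \emph{cancels} the $1/p_t(\theta_2)$ in the odds ratio, and one gets
\[
\E_t^{\theta_1}\big[Z_{t+1}-Z_t\big]
= p_t(\theta_1)\,\E^{\theta_1}\!\left[\frac{\ell_2(\theta_1)(X_{2,t})}{\ell_2(\theta_2)(X_{2,t})}-1\right]
\le \kappa \le 32\Delta^2,
\]
a bound \emph{independent of $p_t$}. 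This is precisely what dissolves the circularity: you never need to know how often action $2$ was played or how far the walk has drifted. Summing, $\E[Z_t]\le q^{-1}+32(t-1)\Delta^2\le 5/q$ for all $t\le T$ (using $\Delta^2=1/(8qT)$), and Jensen applied to $x\mapsto 1/x$ gives $\E[p_t(\theta_2)]\ge q/5$ for every $t$, which is exactly the target inequality with the stated constant. In short: your exponential-martingale machinery is replaced by a single submartingale-type bound on $1/p_t(\theta_2)$ plus Jensen; the same $\chi^2$ quantity $\kappa$ appears, but as a per-step drift rather than inside a Doob inequality.
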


The lower bounds in the \theModel\ case easily imply the lower bounds in the \emph{general} case. 
\begin{corollary}(General Lower Bounds)  \label{cor:general_lb}
Consider the case with two actions and an arbitrary countable $\Theta$. Let $p_1$ be a prior over $\Theta$ and $\theta^* \in \Theta$ be the true model. Then, there exist problem instances where the regrets of Thompson Sampling are $\Omega(\sqrt{\frac{T}{p_1(\theta^* )}})$ and $\Omega(\sqrt{(1-p_1(\theta^* ))T})$ for small $p_1(\theta^* )$ and large  $p_1(\theta^* )$, respectively. 
\end{corollary}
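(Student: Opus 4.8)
The plan is to reduce \corref{cor:general_lb} to the two lower bounds already established for the \theModel\ case (\thmref{thm:lb-poor} and \thmref{thm:lb-good}) by a state-aggregation (``lumping'') argument: given an arbitrary countable $\Theta$ with prior $p_1$ and true model $\theta^*$, I would collapse every model other than $\theta^*$ into a single effective model by equipping all of them with \emph{identical} reward distributions, so that the posterior and the sampled action of Thompson Sampling depend only on the scalar $q_t \defeq p_t(\theta^*)$. Without loss of generality I relabel $\theta^*=\theta_1$.

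For the poor-prior regime I would keep the reward distributions of $\theta^*$ from \thmref{thm:lb-poor}, namely $\nu_1(\theta^*)=Bern(\tfrac12+\Delta)$, $\nu_2(\theta^*)=Bern(\tfrac12)$ with $\Delta=1/\sqrt{8\,p_1(\theta^*)T}$ (which lies in $(0,\tfrac12)$ since $T\ge 1/p_1(\theta^*)$), and assign to every $\eta\in\Theta\setminus\{\theta^*\}$ the common pair $\nu_1(\eta)=Bern(\tfrac12-\Delta)$, $\nu_2(\eta)=Bern(\tfrac12)$, i.e.\ the distributions of $\theta_2$ in that instance. The key steps are then: (1) since all models $\eta\neq\theta^*$ share the same likelihood functions, a single Bayes update multiplies each of their posterior masses by the same factor, so by induction on $t$ the quantity $q_t$ follows exactly the two-model posterior recursion with likelihoods $\ell(\theta_1),\ell(\theta_2)$, and $\sum_{\eta\neq\theta^*}p_t(\eta)=1-q_t$; (2) $\argmax_i\mu_i(\theta^*)=\{1\}$ while $\argmax_i\mu_i(\eta)=\{2\}$ for every $\eta\neq\theta^*$, so conditionally on $p_t$ the algorithm pulls arm $1$ with probability $q_t$ and arm $2$ with probability $1-q_t$, precisely as in the \theModel\ instance; (3) together with the fact that, given $I_t$, the reward is drawn from $\nu_{I_t}(\theta^*)$ in both problems, an induction shows the law of $(q_t,I_t,X_{I_t,t})_{t\le T}$ — and hence of $(I_t)_{t\le T}$, and hence $\reg_T(\theta^*,\ts(p_1))$, which depends only on that law and on the means $\mu_i(\theta^*)$ — coincides with that of Thompson Sampling on the \theModel\ instance of \thmref{thm:lb-poor} whose two-model prior puts mass $p_1(\theta^*)$ on $\theta_1$; (4) invoke \thmref{thm:lb-poor} (applicable because $p_1(\theta^*)\le\tfrac12$ in the small-$p_1(\theta^*)$ regime and $T\ge 1/p_1(\theta^*)$) to obtain $\Omega(\sqrt{T/p_1(\theta^*)})$.

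For the good-prior regime I would run the identical argument starting from \thmref{thm:lb-good}: set $\nu_1(\theta^*)=Bern(\tfrac12)$, $\nu_2(\theta^*)=Bern(\tfrac12-\Delta)$ with $\Delta=\sqrt{1/(8(1-p_1(\theta^*))T)}$, and give every $\eta\neq\theta^*$ the common pair $\nu_1(\eta)=Bern(\tfrac12)$, $\nu_2(\eta)=Bern(\tfrac12+\Delta)$. Here $\Delta\in(0,\tfrac12)$ because $T\ge 1/(1-p_1(\theta^*))$, the $\argmax$ structure is again $\{1\}$ for $\theta^*$ and $\{2\}$ for the merged models, so steps (1)--(3) go through verbatim and \thmref{thm:lb-good} yields $\Omega(\sqrt{(1-p_1(\theta^*))T})$.

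The step I expect to require the most care is (1): one must verify rigorously that aggregating the posterior mass over $\Theta\setminus\{\theta^*\}$ genuinely preserves the distribution of the whole process. This hinges on those models having exactly the same likelihood functions (so that $q_t$ is a sufficient statistic for both the Bayes update and the action distribution) and on the $\argmax$ sets being singletons (so that no tie-breaking convention can distinguish the aggregated problem from an honest two-model problem). Once this coupling is established, the rest is a direct quotation of the two-model lower bounds. The degenerate case $\Theta=\{\theta^*\}$ is vacuous, since then $p_1(\theta^*)=1$, the ``small $p_1(\theta^*)$'' claim is void, and the ``large $p_1(\theta^*)$'' bound reads $\Omega(0)$, which holds trivially.
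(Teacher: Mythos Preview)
Your proposal is correct and follows exactly the same approach as the paper's proof: construct the general-$\Theta$ instance by assigning to every $\eta\neq\theta^*$ the reward distributions of $\theta_2$ from the relevant \theModel\ instance, observe that Thompson Sampling on this instance behaves identically to Thompson Sampling on the two-model instance with prior $(\tilde p_1(\theta_1),\tilde p_1(\theta_2))=(p_1(\theta^*),1-p_1(\theta^*))$, and then quote \thmref{thm:lb-poor} and \thmref{thm:lb-good}. The paper simply asserts the equivalence of the two processes, whereas you spell out the coupling via steps (1)--(3); your added care about common likelihoods and singleton $\argmax$ sets is exactly what justifies that assertion.
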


\begin{remark}
These lower bounds show that the performance of Thompson Sampling can be quite sensitive to the choice of input prior, especially when the prior is poorly chosen.
\end{remark}

Due to space limit, we can only include the more important, novel or challenging parts of the analysis in the paper.  A complete proof, together with simulation results corroborating our theoretical findings, are given in a full version~\citep{Liu15Prior}.


\subsection{Comparison to Previous Results}
\label{sec:comparison}

Note that an upper bound in the \theKNModel\ case can be derived from an earlier result~\citep{Russo14Information}, which upper-bounds the Bayes regret, $\breg_T(TS(p_1))$:
$$ \reg_T(\theta_1 , TS(p_1)) \leq \frac{ \breg_T(TS(p_1))}{p_1(\theta_1 )} = O\left( \frac{\sqrt{H(q)KT}}{p_1(\theta_1 )} \right)\,, $$
where $\theta_1\in\Theta$ is the unknown, true model. On one hand, in the \theModel\ case, 
the above upper bound becomes $O\left( \sqrt{\log \left( \frac{1}{p_1(\theta_1)} \right) \frac{T}{p_1(\theta_1)}} \right)$ for small $p_1(\theta_1)$, and $O\left( \sqrt{\log \left( \frac{1}{1-p_1(\theta_1)} \right) (1-p_1(\theta_1))T} \right)$ for large $p_1(\theta_1)$. Our upper bounds in Theorem~\ref{thm:ub} remove the extraneous logarithmic terms in these upper bounds. On the other hand, the above general upper bound can be further upper bounded by $O\left( \frac{\sqrt{T}}{p_1(\theta_1 )} \right)$ for small $p_1(\theta_1 )$ and  $O\left( \sqrt{\log \left( \frac{1}{1-p_1(\theta_1)} \right) (1-p_1(\theta_1))T} \right)$ for large $p_1(\theta_1)$. We conjecture that these general upper bounds can be improved to match our lower bounds in Corollary~\ref{cor:general_lb}, especially for small $p_1(\theta_1)$.  But it remains open how to extend our proof techniques for the \theModel\ case to get tight general upper bounds.

It is natural to compare Thompson Sampling to exponentially weighted algorithms, a well-known family of algorithms that can also take advantage of prior knowledge. If we see each model $\theta \in \Theta$ as an expert who recommends the optimal action based on distributions specified by $\theta$, and use the prior $p_1$ as the initial weights assigned to the experts, then the EXP4 algorithm~\citep{ACFS03} has a regret of
$O\left(KT \gamma + \frac{1}{\gamma}\log\frac{1}{p_1(\theta^*)}\right)$,
with a parameter $\gamma\in(0,1)$.
For the sake of simplicity, we only do the comparison in the \theModel\ case. By trying to match or even beat the upper bounds in Theorem~\ref{thm:ub}, we reach the choice that $\gamma = \sqrt{H(p_1)/T}$. Assuming that  $\theta_1$ is the true model, the bound becomes 
 $O\left( \sqrt{\log \left( \frac{1}{p_1(\theta_1)} \right) \frac{T}{p_1(\theta_1)}} \right)$ for small $p_1(\theta_1)$, and  $O\left( \sqrt{\log \left( \frac{1}{1-p_1(\theta_1)} \right) (1-p_1(\theta_1))T} \right)$ for large $p_1(\theta_1)$. Thus, although EXP4 is not a Bayesian algorithm, it has the same worst-case dependence on prior as Thompson Sampling, up to logarithmic factors. This is partly explained by the fact that such algorithms are designed to perform well in the worst-case (adaptive adversarial) scenario.  On the contrary, by design, Thompson Sampling takes advantage of prior information more efficiently in most cases, especially when there is certain structure on the model space $\Theta$~\citep{BL13}. Note that in this paper, we do not impose any structure on $\Theta$, thus our lower bounds do not contradict existing results in the literature with non-informative priors (where $p(\theta^*)$ can be very small as $\Theta$ is typically large).

Finally, our proof techniques are new in the Thompson Sampling literature, to the best of our knowledge. The key observation is that the inverse of the posterior probability of the true underlying model is a martingale (\lemref{lem:martingale}). It allows us to use results and techniques from martingale theory to quantify the time and probability that the posterior distribution hits a certain threshold. Then, the regret of Thompson Sampling can be analyzed  separately before and after hitting times.

\section{Preliminaries}
\label{sec:prelim}

In this section, we study a fundamental martingale property of Thompson Sampling and its implications.  The results are essential to proving our upper bounds in \secref{sec:ub}.  Note that a similar property holds for posterior updates using Bayes rule, which however does not involve action selection.

Throughout this paper, for a random variable $Y$, we will use the shorthand $\E_t[Y]$ for the conditional expectation $\E[ Y | \mathcal{H}_{t} ]$. Moreover, we denote by $\E^{\theta}[Y]$ the expectation of $Y$ when $\theta$ is the true underlying model, i.e., when $X_{i,t}$ has distribution $\nu_i(\theta)$. The notation $\P^{\theta}[\cdot]$ is similarly defined.  Furthermore, we use the shorthand $a \wedge b$ for $\min\{a,b\}$.

\begin{lemma}{(Martingale Property)} \label{lem:martingale}
Assume that $\Theta$ is countable and that $\theta^* \in \Theta$ is the true reward-generating model.  Then, the stochastic process $( p_{t}(\theta^*)^{-1} )_{t \geq 1}$ is a martingale with respect to the filtration $(\mathcal{H}_{t})_{t \geq 1}$. 
\end{lemma}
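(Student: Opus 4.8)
The plan is to verify the martingale property directly from the Bayes update, taking the expectation over the randomness that enters between steps $t$ and $t+1$: the action choice $I_t$ (drawn from the posterior $p_t$, hence $\mathcal{H}_t$-measurable in distribution) and the observed reward $X_{I_t,t}$ (drawn from $\nu_{I_t}(\theta^*)$ since $\theta^*$ is the true model). First I would write, using the recursive Bayes rule stated in the introduction,
\[
\frac{1}{p_{t+1}(\theta^*)} = \frac{\sum_{\eta \in \Theta} p_t(\eta)\,\ell_{I_t}(\eta)(X_{I_t,t})}{p_t(\theta^*)\,\ell_{I_t}(\theta^*)(X_{I_t,t})}\,.
\]
Then I would condition on $\mathcal{H}_t$ and on the action $I_t = i$ (which, given $\mathcal{H}_t$, is independent of the reward $X_{i,t}$), and compute $\E\big[\, p_{t+1}(\theta^*)^{-1} \,\big|\, \mathcal{H}_t, I_t = i\,\big]$ by integrating over $X_{i,t} \sim \nu_i(\theta^*)$ with density $\ell_i(\theta^*)$ against the base measure $\nu$.

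The key computation is that, for each fixed $\eta$,
\[
\E^{\theta^*}\!\left[ \frac{\ell_i(\eta)(X_{i,t})}{\ell_i(\theta^*)(X_{i,t})} \,\Big|\, \mathcal{H}_t, I_t = i \right]
= \int \frac{\ell_i(\eta)(x)}{\ell_i(\theta^*)(x)}\,\ell_i(\theta^*)(x)\,\nu(dx)
= \int \ell_i(\eta)(x)\,\nu(dx) = 1\,,
\]
i.e. the likelihood ratio has expectation $1$ under the true model — the standard fact underlying the change-of-measure martingale. Summing over $\eta \in \Theta$ (using countability and Tonelli/Fubini to exchange sum and integral, the integrand being nonnegative) gives
\[
\E^{\theta^*}\!\left[ \frac{1}{p_{t+1}(\theta^*)} \,\Big|\, \mathcal{H}_t, I_t = i \right]
= \frac{1}{p_t(\theta^*)} \sum_{\eta \in \Theta} p_t(\eta)
= \frac{1}{p_t(\theta^*)}\,,
\]
which is independent of $i$. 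Hence averaging over $I_t \sim p_t$ leaves the right-hand side unchanged, yielding $\E_t[\,p_{t+1}(\theta^*)^{-1}\,] = p_t(\theta^*)^{-1}$; since $p_t(\theta^*)^{-1}$ is $\mathcal{H}_t$-measurable and (being $\le p_1(\theta^*)^{-1} \prod$ of bounded-by-$s$ ratios, or at any rate finite with the right integrability) the process is integrable, it is a martingale with respect to $(\mathcal{H}_t)_{t\ge 1}$.

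I expect the only real subtlety — rather than a genuine obstacle — to be the bookkeeping of what is random and what is conditioned: one must be careful that $I_t$ is chosen \emph{before} $X_{I_t,t}$ is revealed and is conditionally independent of it given $\mathcal{H}_t$, so that the inner integral over the reward can be performed arm-by-arm, and that the new history $\mathcal{H}_{t+1}$ adds exactly the pair $(I_t, X_{I_t,t})$. A secondary point is integrability: one should note $p_{t+1}(\theta^*)^{-1}$ is finite almost surely (the denominator $p_t(\theta^*)\ell_{I_t}(\theta^*)(X_{I_t,t})$ is positive $\nu$-a.s.) and has finite conditional expectation by the computation above, so the martingale is well-defined; Assumption~\ref{ass:smooth} is not needed here, consistent with the remark that the lemma holds even for unbounded rewards.
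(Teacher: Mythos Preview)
Your proposal is correct and follows essentially the same route as the paper: write $p_{t+1}(\theta^*)^{-1}$ via the Bayes update, condition on $\mathcal{H}_t$ and $I_t=i$, integrate the likelihood ratio against $\ell_i(\theta^*)\,d\nu$ to get $1$, and then average over $I_t$. One small wording issue: $I_t$ is not literally drawn from $p_t$ (that is a distribution on $\Theta$; rather $\theta_t\sim p_t$ and $I_t=\argmax_i \mu_i(\theta_t)$), but as you observe the inner expectation is independent of $i$, so this does not affect the argument.
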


\begin{proof}
First, recall that conditioned on $\mathcal{H}_{t}$, $p_t$ is deterministic. Then one has
\begin{eqnarray*}
\E_t^{\theta^*} [ p_{t+1}(\theta^*)^{-1} ]  &=& 
\E_t^{\theta^*} \left[ \frac{ \sum_{\eta \in \Theta} p_{t}(\eta) \ell_{I_t}(\eta)(X_{I_t,t})   }{ p_{t}(\theta^*)  \ell_{I_t}(\theta^*)(X_{I_t,t}) }   \right]  \\
  &=&  \sum_{i=1}^K  \P_t^{\theta^*}(I_t=i)  \E_t^{\theta^*} \left[ \frac{ \sum_{\eta \in \Theta} p_{t}(\eta)  \ell_{i}(\eta)(X_{i,t})   }{ p_{t}(\theta^*)  \ell_{i}(\theta^*)(X_{i,t}) } \right]   \\  
  &=&  \sum_{i=1}^K  \P_t^{\theta^*}(I_t=i)  \int   \frac{ \sum_{\eta \in \Theta} p_{t}(\eta) \ell_{i}(\eta)(x)   }{ p_{t}(\theta^*)  \ell_{i}(\theta^*)(x) } \ell_{i}(\theta^*)(x) \, \mathrm{d}\nu(x) \\ 
  &=& p_{t}(\theta^*)^{-1} \sum_{i=1}^K  \P_t^{\theta^*}(I_t=i) \int    \sum_{\eta \in \Theta} p_{t}(\eta)  \ell_{i}(\eta)(x) \,  \mathrm{d}\nu(x) \\ 
  &=& p_{t}(\theta^*)^{-1} \sum_{i=1}^K  \P_t^{\theta^*}(I_t=i) \, = \, p_{t}(\theta^*)^{-1} \,,
\end{eqnarray*} 
where the second last equality follows from the fact that $ \int  \ell_{i}(\eta)(x) \,  \mathrm{d}\nu(x) =1 $ for any $\eta \in \Theta$.  \qed
\end{proof}

Consider the \theModel\ case. Let $A,B \in (0,1)$ be two constants such that $A > p_1(\theta_1) > B$.  We define the following \emph{hitting time}s and \emph{hitting probabilities}:
$\tau_A = \inf \{t \geq 1, p_t(\theta_1) \geq A \}$,
$\tau_B = \inf \{t \geq 1, p_t(\theta_1) \leq B \}$,
$q_{A,B} = \P^{\theta_1}(\tau_A < \tau_B)$, and
$q_{B,A} = \P^{\theta_1}(\tau_A > \tau_B)$.
%
%
The martingale property above implies the following  results which will be used repeatedly in the proofs of our results. 
\begin{lemma} \label{lem:stop}
Consider the \theModel\ case with $\Delta>0$, where $\Delta$ is as defined in \thmref{thm:lb-poor}.  Then, we have $\tau_A < +\infty$ almost surely.  Furthermore, assume that $\tau_B<+\infty$ and that there exists constant $\gamma > 0$ so that $p_{\tau_B}(\theta_1) \geq \gamma $ almost surely, then 
\begin{align*}
q_{A,B} &= \frac{ \E^{\theta_1}[p_{\tau_B}(\theta_1)^{-1} | \tau_A > \tau_B] - p_1(\theta_1)^{-1} }{ \E^{\theta_1}[p_{\tau_B}(\theta_1)^{-1} | \tau_A > \tau_B ] - \E^{\theta_1}[p_{\tau_A}(\theta_1)^{-1} | \tau_A < \tau_B] } \,\, \text{ and }
\end{align*}
\begin{align*}
q_{B,A} &= \frac{ p_1(\theta_1)^{-1} - \E^{\theta_1}[p_{\tau_A}(\theta_1)^{-1}| \tau_A < \tau_B] }{ \E^{\theta_1}[p_{\tau_B}(\theta_1)^{-1} | \tau_A > \tau_B] - \E^{\theta_1}[p_{\tau_A}(\theta_1)^{-1} | \tau_A < \tau_B] }\,.
\end{align*}
Finally, $q_{B,A} \leq \frac{B}{p_1(\theta_1)}$ and $q_{B,A} \leq \frac{1-p_1(\theta_1)}{A-B} $.
\end{lemma}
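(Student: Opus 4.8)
The plan is to build everything on the martingale $M_t \defeq p_t(\theta_1)^{-1}$ supplied by \lemref{lem:martingale}, combined with the optional stopping theorem and a short gambler's-ruin computation. Since $(M_t)_{t\ge1}$ is a nonnegative $(\mathcal{H}_t)$-martingale under $\theta_1$, it converges $\P^{\theta_1}$-a.s.\ to a finite limit $M_\infty\ge1$; equivalently $p_t(\theta_1)\to p_\infty\defeq M_\infty^{-1}\in(0,1]$ a.s. To get $\tau_A<+\infty$ a.s.\ it then suffices to show $p_\infty=1$ a.s., since in that case $p_t(\theta_1)>A$ eventually. On the event $E\defeq\{p_\infty<1\}$ the masses $p_t(\theta_1)$ are bounded away from $0$ for all large $t$, so $\sum_t\P_t^{\theta_1}(I_t=1)=\sum_t p_t(\theta_1)=+\infty$ on $E$; by the conditional (L\'evy) Borel--Cantelli lemma, action $1$ is played infinitely often on $E$. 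In the instance of \thmref{thm:lb-poor} action $2$ is uninformative ($\ell_2(\theta_1)=\ell_2(\theta_2)$), while each play of action $1$ multiplies the ratio $p_t(\theta_2)/p_t(\theta_1)$ by $\ell_1(\theta_2)(X)/\ell_1(\theta_1)(X)$, a factor whose logarithm is bounded and has mean $-\KL(\nu_1(\theta_1)\,\|\,\nu_1(\theta_2))<0$ under $\theta_1$. The strong law of large numbers along the (a.s.\ infinite, on $E$) subsequence of plays of action $1$ then forces $p_t(\theta_2)/p_t(\theta_1)\to0$, i.e.\ $p_t(\theta_1)\to1$, on $E$ --- contradicting the definition of $E$. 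Hence $\P^{\theta_1}(E)=0$ and $\tau_A<+\infty$ a.s.

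For the two identities, set $\tau\defeq\tau_A\wedge\tau_B$. Under the stated hypotheses $\tau<+\infty$ a.s.\ (indeed $\tau\le\tau_A<\infty$), and the stopped process $(M_{t\wedge\tau})_t$ is uniformly bounded by $\gamma^{-1}$: for $t<\tau$ one has $B<p_t(\theta_1)<A$, so $M_t<B^{-1}\le\gamma^{-1}$, while $M_\tau\le\max\{A^{-1},\gamma^{-1}\}=\gamma^{-1}$ since $p_{\tau_A}(\theta_1)\ge A$, $p_{\tau_B}(\theta_1)\ge\gamma$, and $\gamma\le B<A$. Because $A>B$, the events $\{\tau_A<\tau_B\}$ and $\{\tau_A>\tau_B\}$ partition the sample space, so $q_{A,B}+q_{B,A}=1$, with $\tau=\tau_A$ on the first and $\tau=\tau_B$ on the second. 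The optional stopping theorem (valid thanks to the uniform bound) then gives
\[ p_1(\theta_1)^{-1}=\E^{\theta_1}[M_\tau]=q_{A,B}\,a+q_{B,A}\,b,\qquad a\defeq\E^{\theta_1}[p_{\tau_A}(\theta_1)^{-1}\mid\tau_A<\tau_B],\quad b\defeq\E^{\theta_1}[p_{\tau_B}(\theta_1)^{-1}\mid\tau_A>\tau_B]. \]
Substituting $q_{A,B}=1-q_{B,A}$ and solving this linear equation yields exactly the two displayed expressions for $q_{A,B}$ and $q_{B,A}$; note $a\le A^{-1}<p_1(\theta_1)^{-1}<B^{-1}\le b$, so the common denominator $b-a$ is strictly positive.

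For the final bounds, observe $1\le a\le A^{-1}$ (since $A\le p_{\tau_A}(\theta_1)\le1$) and $b\ge B^{-1}$ (since $p_{\tau_B}(\theta_1)\le B$). From $q_{B,A}=(p_1(\theta_1)^{-1}-a)/(b-a)$ and the elementary inequality $\frac{x-a}{y-a}\le\frac{x}{y}$, valid whenever $0<a$ and $0<x<y$ (here $x=p_1(\theta_1)^{-1}<b=y$), we get $q_{B,A}\le p_1(\theta_1)^{-1}/b\le B/p_1(\theta_1)$. For the other bound, bound the numerator above by $p_1(\theta_1)^{-1}-1$ using $a\ge1$ and the denominator below by $B^{-1}-A^{-1}$, which gives $q_{B,A}\le\frac{p_1(\theta_1)^{-1}-1}{B^{-1}-A^{-1}}=\frac{AB}{p_1(\theta_1)}\cdot\frac{1-p_1(\theta_1)}{A-B}\le\frac{1-p_1(\theta_1)}{A-B}$, where the last step uses $B<p_1(\theta_1)$ and $A<1$ to conclude $AB/p_1(\theta_1)<1$.

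I expect the only genuinely delicate step to be the first one: proving $\tau_A<+\infty$ a.s.\ requires going beyond the bare martingale identity --- convergence of the positive martingale $(M_t)$ rules out $p_\infty=0$ (the regime where action $1$ could be abandoned), and a separate SLLN argument along the plays of the informative action rules out $p_\infty\in(0,1)$. Once $\tau_A<+\infty$ is in hand, the rest is a routine gambler's-ruin argument, kept clean by the uniform bound $\gamma^{-1}$ on the stopped martingale and a one-line monotonicity inequality.
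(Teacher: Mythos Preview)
Your proof is correct, and for the optional--stopping identity and the two inequalities on $q_{B,A}$ it coincides with the paper's argument essentially line by line. The genuine difference is in how you establish $\tau_A<+\infty$ almost surely.

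The paper proves this via a \emph{regret} argument: on the event $E=\{\tau_A=+\infty\}$ one has $p_t(\theta_1)\le A$ for all $t$, so
\[
\breg_T(\ts(p_1))\;\ge\; p_1(\theta_1)\,\reg_T(\theta_1,\ts(p_1))\;\ge\; p_1(\theta_1)\,\P^{\theta_1}(E)\,\Delta(1-A)\,T,
\]
which grows linearly in $T$; this contradicts the $O(\sqrt{T})$ Bayes-regret upper bound of Bubeck and Liu, forcing $\P^{\theta_1}(E)=0$. Your route is instead purely probabilistic and self-contained: convergence of the nonnegative martingale $p_t(\theta_1)^{-1}$ pins $p_\infty\in(0,1]$, the L\'evy Borel--Cantelli lemma guarantees infinitely many plays of the informative arm whenever $p_\infty<1$, and a SLLN along those plays drives $p_t(\theta_1)\to1$.

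What each approach buys: your argument avoids importing the external $O(\sqrt{T})$ Bayes-regret bound and is therefore more elementary, but it leans on the specific structure of the \thmref{thm:lb-poor} instance (arm $2$ uninformative, so the likelihood-ratio increments along arm-$1$ plays are i.i.d.\ with strictly negative mean). The paper's argument uses nothing about the instance beyond $\Delta>0$, so it applies verbatim to \emph{any} \theModel\ problem --- which matters because \lemref{lem:stop} is invoked later (Propositions~\ref{pro:anchor}--\ref{pro:anchor3}) for general instances, not just the one in \thmref{thm:lb-poor}. Your SLLN argument can be patched to cover the general case (both arms possibly informative), but as written it proves only the literal statement of the lemma.
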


\begin{proof}
We first argue that $\tau_A < +\infty$ almost surely. Define the event $E=\{\tau_A= +\infty \}$. Under the event $E$, $p_t(\theta_1)$ is always upper bounded by $A$ for any $t$. Thus 
$$ \mathrm{R}_T(\theta_1, TS(p_1)) = \Delta \cdot \E^{\theta_1}  \sum_{t=1}^T  p_t(\theta_2) \geq  \P^{\theta_1}(E)\Delta(1-A)T. $$ 
It follows that  $$ \breg_T(TS(p_1)) \geq p_1(\theta_1)\reg_T(\theta_1 , TS(p_1)) \geq p_1(\theta_1) \P^{\theta_1}(E)\Delta(1-A)T .   $$ 
However, it was proven~\citep{BL13} that the Bayes risk $\breg_T(TS(p_1)) $ is always upper bounded by $O(\sqrt{T})$. Therefore we must have $ \P^{\theta_1}(E) = 0$; that is $\tau_A < +\infty$ almost surely. This implies that $p_{\tau_A \wedge \tau_B  }(\theta_1)$ is well defined and $q_{A,B}+q_{B,A}=1$.

Now, by \lemref{lem:martingale}, $( p_{t}(\theta_1)^{-1} )_{t \geq 1}$ is a martingale.  It is easy to verify that $\tau_A$ and $\tau_B$ are both stopping times with respect to the filtration $(\mathcal{H}_{t})_{t \geq 1}$. Then it follows from Doob's optional stopping theorem that for any $t$, $ \E^{\theta_1}[ p_{t \wedge \tau_A \wedge \tau_B  }(\theta_1)^{-1}] = p_1(\theta_1)^{-1}$. Moreover, for any $t \geq 1$, $p_{t \wedge \tau_A \wedge \tau_B  }(\theta_1)^{-1} \leq  \gamma^{-1}$ (Note that by definition, $\gamma \leq B$). Hence, by Lebesgue's dominated convergence theorem, $ \E^{\theta_1}[ p_{t \wedge \tau_A \wedge \tau_B  }(\theta_1)^{-1}] \longrightarrow \E^{\theta_1}[ p_{\tau_A \wedge \tau_B  }(\theta_1)^{-1}]  $  as $t \rightarrow +\infty$.  Thus,
\begin{eqnarray*}
p_1(\theta_1)^{-1} &=& \E^{\theta_1}[ p_{\tau_A \wedge \tau_B  }(\theta_1)^{-1}] \\
&=& q_{A,B} \E^{\theta_1}[ p_{\tau_A  }(\theta_1)^{-1} | \tau_A < \tau_B] +  q_{B,A} \E^{\theta_1}[ p_{\tau_B  }(\theta_1)^{-1} | \tau_A > \tau_B] \,.
\end{eqnarray*}
The above equality combined with $q_{A,B}+q_{B,A}=1$ gives the desired expressions for $q_{A,B}$ and $q_{B,A}$. Finally, we have 
\begin{eqnarray*}
q_{B,A} &=& \frac{ p_1(\theta_1)^{-1} - \E^{\theta_1}[p_{\tau_A}(\theta_1)^{-1}| \tau_A < \tau_B] }{ \E^{\theta_1}[p_{\tau_B}(\theta_1)^{-1} | \tau_A > \tau_B] - \E^{\theta_1}[p_{\tau_A}(\theta_1)^{-1} | \tau_A < \tau_B] } \\
&\leq &   \frac{ p_1(\theta_1)^{-1} }{ \E^{\theta_1}[p_{\tau_B}(\theta_1)^{-1} | \tau_A < \tau_B ]} \leq \frac{B}{p_1(\theta_1)}  \,
\end{eqnarray*}
and 
\begin{eqnarray*}
q_{B,A} &=& \frac{ p_1(\theta_1)^{-1} - \E^{\theta_1}[p_{\tau_A}(\theta_1)^{-1}| \tau_A < \tau_B] }{ \E^{\theta_1}[p_{\tau_B}(\theta_1)^{-1} | \tau_A > \tau_B] - \E^{\theta_1}[p_{\tau_A}(\theta_1)^{-1} | \tau_A < \tau_B] } \\
&\leq& \frac{ p_1(\theta_1)^{-1} - 1}{ B^{-1} - A^{-1} } 
= \frac{AB}{p_1(\theta_1)} \frac{1-p_1(\theta_1)}{A-B}  \leq \frac{1-p_1(\theta_1)}{A-B} \,.
\end{eqnarray*}
\qed
\end{proof}

\section{Upper Bounds}
\label{sec:ub}

In this section, we focus on the \theModel\ case. We present and prove our results on the upper bounds for the frequentist regret of Thompson Sampling.  Due to space limitation, we only sketch the proof for the poor-prior case (first part of \thmref{thm:ub}); complete proofs, including those for the good-prior case, will appear in a long version.

We start with a simple lemma that follows immediate from Assumption~\ref{ass:smooth}:

\begin{lemma} \label{lem:smooth}
Under Assumption~\ref{ass:smooth}, regardless of either $\theta_1$ or $\theta_2$ being the true underlying model,  for any $\theta \in \{\theta_1, \theta_2\}$, 
$s^{-1}\cdot p_{t}(\theta) \leq  p_{t+1}(\theta) \leq s \cdot p_{t}(\theta) \,\,\, \nu \text{-almost surely}$. 
\end{lemma}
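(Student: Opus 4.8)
The plan is to unwind one step of the Bayes update in the \theModel\ case, where $\Theta=\{\theta_1,\theta_2\}$, and exploit the fact that the normalizing denominator is then a \emph{convex combination} of just two likelihood values, each within a multiplicative factor $s$ of the other by Assumption~\ref{ass:smooth}. Concretely, for $\theta\in\{\theta_1,\theta_2\}$ Bayes rule gives
$$ \frac{p_{t+1}(\theta)}{p_t(\theta)} \;=\; \frac{\ell_{I_t}(\theta)(X_{I_t,t})}{p_t(\theta_1)\,\ell_{I_t}(\theta_1)(X_{I_t,t}) + p_t(\theta_2)\,\ell_{I_t}(\theta_2)(X_{I_t,t})}\,, $$
valid wherever the denominator is positive (where it vanishes the update is undefined, a $\nu$-null matter to be dispatched in a sentence; and when $p_t(\theta)=0$ the numerator of $p_{t+1}(\theta)$ is $0$ as well, so there is nothing to prove). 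Abbreviating $a=\ell_{I_t}(\theta_1)(X_{I_t,t})$, $b=\ell_{I_t}(\theta_2)(X_{I_t,t})$ and $D=p_t(\theta_1)a+p_t(\theta_2)b$ with $p_t(\theta_1)+p_t(\theta_2)=1$, the target reduces to $s^{-1}\le a/D\le s$ and $s^{-1}\le b/D\le s$.

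The key step is to feed in Assumption~\ref{ass:smooth}, which $\nu$-almost surely gives $s^{-1}a\le b\le s a$ (equivalently $s^{-1}b\le a\le s b$). For $\theta=\theta_1$, replacing $b$ in $D$ by the bounds $s^{-1}a$ and $sa$ and using $s>1$ yields $s^{-1}a\le D\le s a$, whence $s^{-1}\le a/D\le s$; multiplying through by $p_t(\theta_1)$ gives the claim for $\theta_1$. The case $\theta=\theta_2$ is entirely symmetric: bounding $a$ between $s^{-1}b$ and $sb$ gives $s^{-1}b\le D\le s b$, hence $s^{-1}\le b/D\le s$. Since every inequality used holds $\nu$-almost surely and none of them refers to which of $\theta_1,\theta_2$ actually generated the rewards, the conclusion holds regardless of the true model, exactly as stated.

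I expect no real obstacle: the lemma is essentially a one-line consequence of the update formula together with the assumption, which is why it is billed as ``simple.'' The only mild point worth noting is conceptual rather than computational — recognizing that, being a convex combination of $a$ and $b$, the denominator $D$ automatically lies between $\min(a,b)$ and $\max(a,b)$, so the single factor-$s$ comparison in Assumption~\ref{ass:smooth} simultaneously controls the ratio for \emph{both} choices of $\theta$ — plus the harmless bookkeeping of excluding the $\nu$-null set on which the posterior update is not well-defined.
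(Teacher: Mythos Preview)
Your proof is correct and follows essentially the same route as the paper: both unwind one Bayes step to write $p_{t+1}(\theta)/p_t(\theta)$ as the ratio of one likelihood to a convex combination of the two likelihoods, then invoke Assumption~\ref{ass:smooth} to sandwich that ratio between $s^{-1}$ and $s$. The only cosmetic difference is that the paper first divides through by $\ell_{I_t}(\theta_1)(X_{I_t,t})$ to write the ratio as $\bigl(p_t(\theta_1)+p_t(\theta_2)\ell_{I_t}(\theta_2)/\ell_{I_t}(\theta_1)\bigr)^{-1}$ before bounding, whereas you keep $a$, $b$, $D$ separate; the arguments are otherwise identical.
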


The next lemma describes how the posterior probability mass of the true model evolves over time.  It can be proved by direct, although a bit tedious, calculations.
\begin{lemma} \label{lem:tech-2b2}
Consider the \theModel\ case. We have the following inequalities concerning various functionals of the stochastic process $(p_t(\theta_1))_{t\geq 1}$.
\begin{description}
\item{\textbf{(a)} For $t \geq 1$, $\E_t^{\theta_1} \left[ \log(p_t(\theta_1)^{-1}) - \log(p_{t+1}(\theta_1)^{-1}) \right]$ \\ $\ge \frac{1}{2} \sum_{i \in \{1,2\} }  p_t(\theta_i)p_t(\theta_2)^2  |\mu_i(\theta_1)-\mu_i(\theta_2)|^2$. \vspace{1mm}}
%
%
\item{\textbf{(b)} For $t \geq 1$, $\E^{\theta_1}[p_{t+1}(\theta_1)] \geq \E^{\theta_1}[p_t(\theta_1)]$ and \\ $\E_t^{\theta_1} \left[ p_{t+1}(\theta_1) - p_t(\theta_1)  \right]   \leq  \sum_{i \in \{1,2\} } p_t(\theta_i)  p_{t}(\theta_1)  p_{t}(\theta_2)     \E^{\theta_1}\left[  \frac{\ell_{i}(\theta_1)(X_{i,t}) }{\ell_{i}(\theta_2)(X_{i,t}) }  - 1 \right]$.}
%
\item{\textbf{(c)} For $t \geq 1$, $\E_t^{\theta_1} \left[ (1-p_{t+1}(\theta_1))^{-1} - (1-p_t(\theta_1))^{-1} \right]$ \\
$=\sum_{i \in \{1,2\} } p_t(\theta_i) \frac{p_{t}(\theta_1)}{p_{t}(\theta_2)}  \E^{\theta_1}\left[  \frac{  \ell_{i}(\theta_1)(X_{i,t}) }{  \ell_{i}(\theta_2)(X_{i,t})  } - 1  \right]$ \\
$\geq \frac{p_{t}(\theta_1)^2}{2p_{t}(\theta_2)} |\mu_1(\theta_1)-\mu_1(\theta_2)|^2+   \frac{p_t(\theta_1)}{2} |\mu_2(\theta_1)-\mu_2(\theta_2)|^2$. \vspace{1mm}}
\item{\textbf{(d)} $\mathrm{R}_T(\theta_1, TS(p_1)) \leq \Delta T (1- p_1(\theta_1)) $.}
\end{description}
\end{lemma}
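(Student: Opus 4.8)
The plan is to treat the four parts uniformly: rewrite one application of Bayes' rule in terms of the likelihood ratio, use the fact that in the \theModel\ case Thompson Sampling plays action $i$ exactly when it has sampled model $\theta_i$, so that (after relabelling the actions, which is w.l.o.g.\ in the non-degenerate case where the two models have distinct optimal actions) $\P^{\theta_1}_t(I_t=i)=p_t(\theta_i)$ and, conditionally on $\{I_t=i\}$, $X_{i,t}\sim\nu_i(\theta_1)$; then recognise each one-step conditional expectation as a combination of two $f$-divergences between reward distributions, which I would lower bound through the reward means. Concretely, write $\bar\ell_{i,t}:=p_t(\theta_1)\ell_i(\theta_1)+p_t(\theta_2)\ell_i(\theta_2)$ for the posterior predictive density of action $i$ and $L_{i,t}:=\ell_i(\theta_1)(X_{i,t})/\ell_i(\theta_2)(X_{i,t})$. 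On $\{I_t=i\}$, Bayes' rule gives $p_{t+1}(\theta_1)/p_t(\theta_1)=\ell_i(\theta_1)(X_{i,t})/\bar\ell_{i,t}(X_{i,t})$ and $p_{t+1}(\theta_2)^{-1}=1+\tfrac{p_t(\theta_1)}{p_t(\theta_2)}L_{i,t}$; these two identities drive everything.

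For (a), taking $\E_t^{\theta_1}$ of $\log\big(p_{t+1}(\theta_1)/p_t(\theta_1)\big)$ yields $\sum_{i}p_t(\theta_i)\,\mathrm{KL}(\nu_i(\theta_1)\|\bar\nu_{i,t})$, where $\bar\nu_{i,t}=p_t(\theta_1)\nu_i(\theta_1)+p_t(\theta_2)\nu_i(\theta_2)$. Since the mean of $\bar\nu_{i,t}$ differs from $\mu_i(\theta_1)$ by exactly $p_t(\theta_2)(\mu_i(\theta_1)-\mu_i(\theta_2))$, and since $\mathrm{KL}(P\|Q)\ge \tfrac12(\E_P X-\E_Q X)^2$ for $[0,1]$-valued $X$ (Pinsker together with $\mathrm{TV}(P,Q)\ge\tfrac12|\E_P X-\E_Q X|$), each term is at least $\tfrac12 p_t(\theta_2)^2|\mu_i(\theta_1)-\mu_i(\theta_2)|^2$, which is (a). For (b), $\E_t^{\theta_1}[p_{t+1}(\theta_1)]=\sum_i p_t(\theta_i)p_t(\theta_1)\int \ell_i(\theta_1)^2/\bar\ell_{i,t}\,\mathrm{d}\nu\ge p_t(\theta_1)$ by Cauchy--Schwarz, which gives the submartingale statement after taking expectations; subtracting $p_t(\theta_1)$ and using $\ell_i(\theta_1)-\bar\ell_{i,t}=p_t(\theta_2)(\ell_i(\theta_1)-\ell_i(\theta_2))$ turns the drift into $\sum_i p_t(\theta_i)p_t(\theta_1)p_t(\theta_2)\int \ell_i(\theta_1)\,\tfrac{\ell_i(\theta_1)-\ell_i(\theta_2)}{\bar\ell_{i,t}}\,\mathrm{d}\nu$, and replacing $\bar\ell_{i,t}$ by $\ell_i(\theta_2)$ in the denominator only increases each integral, since the difference equals $-p_t(\theta_1)\int \ell_i(\theta_1)(\ell_i(\theta_1)-\ell_i(\theta_2))^2/(\bar\ell_{i,t}\ell_i(\theta_2))\,\mathrm{d}\nu\le 0$ (using $\ell_i(\theta_2)-\bar\ell_{i,t}=p_t(\theta_1)(\ell_i(\theta_2)-\ell_i(\theta_1))$).

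For (c), the two Bayes identities give $(1-p_{t+1}(\theta_1))^{-1}-(1-p_t(\theta_1))^{-1}=\tfrac{p_t(\theta_1)}{p_t(\theta_2)}(L_{i,t}-1)$ on $\{I_t=i\}$, so $\E_t^{\theta_1}$ of this produces the claimed equality; then $\E^{\theta_1}[L_{i,t}-1]=\chi^2(\nu_i(\theta_1)\|\nu_i(\theta_2))\ge \tfrac12|\mu_i(\theta_1)-\mu_i(\theta_2)|^2$ (from $\chi^2\ge\mathrm{KL}$ and the mean-difference bound used in (a), or directly from Cauchy--Schwarz with $\mathrm{Var}\le\tfrac14$), and plugging this into the two terms ($i=1$ with coefficient $p_t(\theta_1)^2/p_t(\theta_2)$, $i=2$ with coefficient $p_t(\theta_1)$) yields the stated lower bound. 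For (d), in the relevant instances $\theta_1$ is true, action $1$ is optimal, and action $2$ has suboptimality gap $\Delta$, so the per-round expected regret is $\Delta\,\E^{\theta_1}[p_t(\theta_2)]$; summing over $t$ and using the submartingale property from (b) (hence $\E^{\theta_1}[p_t(\theta_1)]\ge p_1(\theta_1)$ and $\E^{\theta_1}[p_t(\theta_2)]\le 1-p_1(\theta_1)$) gives $\mathrm{R}_T(\theta_1,\mathrm{TS}(p_1))\le \Delta T(1-p_1(\theta_1))$.

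The conceptual load is light; the work is in the careful bookkeeping of Bayes' rule in likelihood-ratio form and in getting the divergence estimates right — in particular, spotting that the relevant $f$-divergences ($\mathrm{KL}$ in (a), $\chi^2$ in (c)) are controlled purely by the gap between reward means via Pinsker / Cauchy--Schwarz and $\mathrm{Var}\le\tfrac14$, and checking the sign of the correction term when passing from $\bar\ell_{i,t}$ to $\ell_i(\theta_2)$ in (b), which is the one place where a naive bound would fail. The only genuinely structural point is the reduction $\P^{\theta_1}_t(I_t=i)=p_t(\theta_i)$, i.e.\ arranging that the two models recommend distinct actions (the complementary case being trivial, with zero regret).
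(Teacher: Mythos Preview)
Your proof is correct and follows essentially the same route as the paper: express one Bayes update via the likelihood ratio, condition on $\{I_t=i\}$ with $\P_t^{\theta_1}(I_t=i)=p_t(\theta_i)$, recognise the resulting one-step drifts as $f$-divergences, and lower-bound those by mean gaps via Pinsker. Parts (a), (c), (d) match the paper almost line for line (the paper packages your Pinsker-plus-mean-gap step as a separate lemma, $\mathrm{KL}(\nu_1,\alpha\nu_1+(1-\alpha)\nu_2)\ge\tfrac{(1-\alpha)^2}{2}|\mu_1-\mu_2|^2$, but the content is identical).

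The only tactical difference is in (b). For the submartingale inequality the paper uses $x-1\ge\log x$ to reduce to a nonnegative $\mathrm{KL}$, while you use Cauchy--Schwarz on $\int \ell_i(\theta_1)^2/\bar\ell_{i,t}\,\mathrm{d}\nu\ge 1$; both are one-line arguments. For the upper bound on the drift, you compute the sign of the correction term when replacing $\bar\ell_{i,t}$ by $\ell_i(\theta_2)$ in the denominator; the paper instead applies Jensen to the convex map $x\mapsto x^{-1}$, getting directly
\[
\frac{\ell_i(\theta_1)}{p_t(\theta_1)\ell_i(\theta_1)+p_t(\theta_2)\ell_i(\theta_2)}
\;\le\;
\ell_i(\theta_1)\Bigl(\frac{p_t(\theta_1)}{\ell_i(\theta_1)}+\frac{p_t(\theta_2)}{\ell_i(\theta_2)}\Bigr)
= p_t(\theta_1)+p_t(\theta_2)\,\frac{\ell_i(\theta_1)}{\ell_i(\theta_2)},
\]
which is slightly slicker but equivalent to your computation. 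Either way the argument goes through.
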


We now introduce some notation. Let $\Delta=\mu_1(\theta_1)-\mu_2(\theta_1)$, $\Delta_1=|\mu_1(\theta_1)-\mu_1(\theta_2)|$ and $\Delta_2=|\mu_2(\theta_1)-\mu_2(\theta_2)|$. Obviously, $\Delta \leq \Delta_1+\Delta_2$.  We assume $\Delta>0$ to avoid the generated case. To simplify notation, define the regret function $\mathrm{R}_T(\cdot)$ by $\mathrm{R}_T(p_1(\theta_1)) = \mathrm{R}_T(\theta_1, \ts(p_1))$.
Since the immediate regret of each step is at most $\Delta$, 
we immediately have $\mathrm{R}_T(p_1(\theta_1)) \le \Delta T$.
%
Furthermore, we have the following useful and intuitive monotone property, which can be proved by a dynamic-programming argument inspired by previous work~\cite[Section~3]{GM14}.
\begin{lemma} \label{lem:decreasing_regret}
$\mathrm{R}_T$ is a decreasing function of $p_1(\theta_1)$.
\end{lemma}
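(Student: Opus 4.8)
## Proof Plan for Lemma~\ref{lem:decreasing_regret}

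The plan is to show that $\mathrm{R}_T(p)$ is decreasing in $p = p_1(\theta_1)$ via a coupling/dynamic-programming argument on the posterior dynamics, in the spirit of \cite[Section~3]{GM14}. The key observation is that the posterior probability $p_t(\theta_1)$ is itself a Markov chain (together with the running regret), and the only way the prior enters the process is through the initial condition $p_1(\theta_1) = p$. So if I write $V_n(p)$ for the minimal expected additional regret accumulated over $n$ remaining steps when the current posterior mass on the true model $\theta_1$ is $p$, then $\mathrm{R}_T(p) = V_T(p)$ (here there is no minimization — TS fixes the policy — but the recursion structure is the same), and it suffices to show each $V_n$ is decreasing in $p$.

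First I would set up the one-step recursion. Conditioned on $p_t(\theta_1) = p$, Thompson Sampling draws $\theta_t \sim p_t$, so it plays the arm $I_t = \argmax_i \mu_i(\theta_t)$; the immediate expected regret is some function $g(p) \ge 0$ of $p$ and the fixed problem instance (it is linear: $g(p) = \Delta\,(1-p)$ when $\theta_1$ is optimal for arm choice, by Lemma~\ref{lem:tech-2b2}(d)), and then $p_{t+1}(\theta_1)$ is obtained from $p$ by one Bayes update with the observed reward, whose law depends only on $p$ and the instance. This gives
\begin{equation*}
V_n(p) = g(p) + \E^{\theta_1}\bigl[\, V_{n-1}\bigl(p_{t+1}(\theta_1)\bigr) \;\big|\; p_t(\theta_1) = p \,\bigr], \qquad V_0 \equiv 0.
\end{equation*}
Then I would argue by induction on $n$ that $V_n$ is decreasing. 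The immediate-regret term $g$ is decreasing (indeed affine decreasing). For the continuation term, I need two things: (i) the induction hypothesis that $V_{n-1}$ is decreasing, and (ii) a stochastic-monotonicity statement — that the distribution of $p_{t+1}(\theta_1)$ given $p_t(\theta_1)=p$ is stochastically increasing in $p$. Given (i) and (ii), $\E[V_{n-1}(p_{t+1}(\theta_1)) \mid p_t(\theta_1)=p]$ is decreasing in $p$ (expectation of a decreasing function under stochastically increasing laws), so $V_n$ is a sum of two decreasing functions, completing the induction.

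The main obstacle is establishing (ii), the stochastic monotonicity of the Bayes update in the prior, which also requires some care because the \emph{arm played} by TS changes with $p$ (TS plays arm $1$ vs arm $2$ with probabilities depending on $p$ through $q(i) = \P_{\theta\sim p_t}(i = \argmax_j \mu_j)$, which in the $2$-model case is just a threshold/linear function of $p$). One clean way around this is an explicit coupling: for $p \le p'$, couple the two reward streams so that after each step the posterior started from $p'$ dominates the one started from $p$. For a fixed arm $i$, the Bayes update map $p \mapsto \frac{p\,\ell_i(\theta_1)(x)}{p\,\ell_i(\theta_1)(x) + (1-p)\,\ell_i(\theta_2)(x)}$ is increasing in $p$ for every fixed outcome $x$, and the outcome law for arm $i$ is the same regardless of $p$ (it is $\nu_i(\theta_1)$, since $\theta_1$ is the true model), so for a fixed arm choice the coupling is trivial — use the same $x$. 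The remaining subtlety is that the arm TS picks differs; here I would use that which arm is ``better for $\theta_1$'' is fixed by the instance, exploit Lemma~\ref{lem:smooth} (one Bayes step moves the posterior by a bounded multiplicative factor) together with the monotone coupling of the randomization in TS's arm draw, and verify that mixing over the (monotone in $p$) arm-selection probabilities preserves stochastic order. I would carry out this coupling argument in detail as the technical heart of the proof, then fold it into the induction above. Everything else — the recursion, the base case $V_0 \equiv 0$, monotonicity of $g$ — is routine.
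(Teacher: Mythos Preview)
Your induction framework is the right one and matches the paper's, but step~(ii) --- stochastic monotonicity of the one-step posterior law in $p$ --- is \emph{false} in general, and this is exactly the ``mixing over arms'' step you flag as the main obstacle. Here is a concrete counterexample in the \theModel\ setting. Take $\nu_2(\theta_1)=\nu_2(\theta_2)$ (arm~2 uninformative) and $\nu_1(\theta_1)=\mathrm{Bern}(0.9)$, $\nu_1(\theta_2)=\mathrm{Bern}(0.1)$, with arm~1 optimal under $\theta_1$ and arm~2 under $\theta_2$. Conditional on $p_t(\theta_1)=p$, TS plays arm~1 with probability $p$ (posterior jumps to $9p/(8p+1)$ or $p/(9-8p)$) and arm~2 with probability $1-p$ (posterior stays at $p$). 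At $p=0.5$ one computes $\P^{\theta_1}(p_{t+1}\ge 0.15)=0.95$, while at $p=0.6$ the same probability is $0.94$. So the family of laws $\{p_{t+1}\mid p_t=p\}$ is \emph{not} stochastically increasing in $p$, and no coupling can manufacture it; appealing to Lemma~\ref{lem:smooth} does not help (and that lemma requires Assumption~\ref{ass:smooth}, which Lemma~\ref{lem:decreasing_regret} does not assume). What you actually need is much weaker than~(ii): only that $\E[V_{n-1}(p_{t+1})\mid p_t=p]$ is decreasing for the \emph{specific} functions $V_{n-1}$, not for all decreasing functions.

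The paper gets this weaker statement by conditioning on the first arm. Write $R_T^{(i)}(\alpha)$ for the regret of the policy that plays arm $i$ at step~1 and then runs TS, so that $R_T(\alpha)=\alpha R_T^{(1)}(\alpha)+(1-\alpha)R_T^{(2)}(\alpha)$. Your own observation (``for a fixed arm the coupling is trivial'') gives immediately that each $R_T^{(i)}$ is decreasing, by the induction hypothesis on $R_{T-1}$. The arm-mixing is then handled not by stochastic order but by the separate inequality $R_T^{(1)}(\alpha)\le R_T^{(2)}(\alpha)$, proved by its own induction: sandwich $R_{T-1}$ between $R_{T-1}^{(1)}$ and $R_{T-1}^{(2)}$ and use Fubini to swap the order of the arm-1 and arm-2 reward draws. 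With both pieces, for $\alpha<\beta$ one has
\[
R_T(\beta)=\beta R_T^{(1)}(\beta)+(1-\beta)R_T^{(2)}(\beta)\le \beta R_T^{(1)}(\alpha)+(1-\beta)R_T^{(2)}(\alpha)\le \alpha R_T^{(1)}(\alpha)+(1-\alpha)R_T^{(2)}(\alpha)=R_T(\alpha),
\]
the first inequality by monotonicity of each $R_T^{(i)}$ and the second because shifting weight from $R_T^{(2)}$ to $R_T^{(1)}$ decreases a convex combination when $R_T^{(1)}\le R_T^{(2)}$. So the missing ingredient in your plan is precisely this comparison lemma $R_T^{(1)}\le R_T^{(2)}$; it replaces the (false) stochastic monotonicity and is what makes the induction close.
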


The proofs of the upper bounds rely on several propositions that reveal interesting recursions of Thompson Sampling's regret as a function of the prior.  Although these propositions use similar analytic techniques, they differ in many important details.  Due to space limitation, we only sketch the proof of Proposition~\ref{pro:anchor}.

\begin{proposition} \label{pro:anchor}
Consider the \theModel\ case and assume that Assumption~\ref{ass:smooth} holds. Then for any $T>0$ and $p_1(\theta_1) \in (0,1)$, we have 
$$
\mathrm{R}_T(p_1(\theta_1)) \leq  \left( 96 \log \frac{3s}{2} + 6 \right) \sqrt{\frac{T}{p_1(\theta_1)}} + \mathrm{R}_T \left( \frac{1}{3} \right)\,.
$$
\end{proposition}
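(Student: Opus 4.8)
The plan is to reduce to the case $p_1(\theta_1) < 1/3$ (for $p_1(\theta_1)\ge 1/3$ the bound is immediate from \lemref{lem:decreasing_regret}, which gives $\mathrm{R}_T(p_1(\theta_1))\le\mathrm{R}_T(1/3)$), and then to split the regret at $\tau:=\inf\{t\ge 1:p_t(\theta_1)\ge 1/3\}$, the first time the posterior mass on the true model reaches the anchor level $1/3$. This $\tau$ is a stopping time and is a.s.\ finite by \lemref{lem:stop}. Writing $r_t\ge 0$ for the immediate regret at step $t$, I would split $\E^{\theta_1}\sum_{t=1}^{T}r_t$ according to whether $t<\tau$ or $t\ge\tau$. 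For the second part, the tower rule applied at $\tau$ identifies its value as $\E^{\theta_1}[\mathbf{1}\{\tau\le T\}\,\mathrm{R}_{T-\tau+1}(p_\tau(\theta_1))]$, since conditionally on $\mathcal{H}_\tau$ the remaining play is Thompson Sampling with prior $p_\tau$, which in the \theModel\ case is determined by $p_\tau(\theta_1)\ge 1/3$; by \lemref{lem:decreasing_regret} and monotonicity of $\mathrm{R}_T$ in the horizon, this is at most $\mathrm{R}_T(1/3)$. So everything reduces to bounding the pre-anchor regret by $(96\log\tfrac{3s}{2}+6)\sqrt{T/p_1(\theta_1)}$.

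Since $r_t\le\Delta$ and, for $t<\tau$, $p_t(\theta_2)>2/3$, the pre-anchor regret is at most $\Delta\,\E^{\theta_1}[\min\{\tau-1,T\}]\le\tfrac32\Delta\,\E^{\theta_1}\big[\sum_{t<\tau\wedge T}p_t(\theta_2)\big]$, and it is also trivially at most $\Delta T$. For the nontrivial estimate I would use the martingale machinery of \secref{sec:prelim}. By \lemref{lem:martingale}, $(p_t(\theta_1)^{-1})_t$ is a martingale; by \lemref{lem:tech-2b2}(a), $\log p_t(\theta_1)^{-1}$ is a supermartingale whose one-step expected decrease is at least $\tfrac12\sum_i p_t(\theta_i)p_t(\theta_2)^2|\mu_i(\theta_1)-\mu_i(\theta_2)|^2$, which for $t<\tau$ is $\gtrsim p_t(\theta_1)\Delta_1^2+p_t(\theta_2)\Delta_2^2$. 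Applying Doob's optional stopping at the bounded time $\tau\wedge T$ and using \lemref{lem:smooth} to control the overshoot (whence $\log p_\tau(\theta_1)^{-1}\ge\log(3/s)$) gives
\[
\E^{\theta_1}\Big[\sum_{t<\tau\wedge T}\big(p_t(\theta_1)\Delta_1^2+p_t(\theta_2)\Delta_2^2\big)\Big]=O\!\Big(\log\tfrac{s}{p_1(\theta_1)}\Big).
\]
When $\Delta_2\ge\Delta_1$ (so $\Delta\le 2\Delta_2$), this controls $\Delta_2^2\,\E^{\theta_1}[\sum_{t<\tau\wedge T}p_t(\theta_2)]$; combining the two bounds $\Delta_2 T$ and $O(\log(s/p_1(\theta_1)))/\Delta_2$ by AM--GM yields $O(\sqrt{T\log(s/p_1(\theta_1))})$, which is $O\big((\log\tfrac{3s}{2}+1)\sqrt{T/p_1(\theta_1)}\big)$ after $\log\tfrac1{p_1(\theta_1)}\le\tfrac1{p_1(\theta_1)}$ and $\sqrt{\log s}\le\log\tfrac{3s}{2}$.

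The hard part is the complementary case $\Delta_1>\Delta_2$, i.e.\ when the optimal action is also the more informative one. Then the optimal action is pulled with probability only $p_t(\theta_1)$, which can dip far below $p_1(\theta_1)$ before $\tau$, so the displayed drift inequality only controls $\E^{\theta_1}[\sum_{t<\tau}p_t(\theta_1)\Delta_1^2]$ --- a quantity that can be a factor $\approx 1/p_1(\theta_1)$ smaller than the $\E^{\theta_1}[\sum_{t<\tau}p_t(\theta_2)]$ appearing in the regret. To bridge this gap I would bound the expected time the posterior spends at small values by summing, over a dyadic family of levels $\beta$, the hitting-probability estimate $\P^{\theta_1}(\text{posterior reaches }\beta\text{ before }1/3)\le\beta/p_1(\theta_1)$ from \lemref{lem:stop} against the expected sojourn time per level --- the latter estimated from the drifts, here using the $(1-p_t(\theta_1))^{-1}$-potential of \lemref{lem:tech-2b2}(c), whose overshoot at $\tau$ is controlled by \lemref{lem:smooth} and accounts for the $\log\tfrac{3s}{2}$ factor --- and then balancing the result against $\Delta_1 T$ by AM--GM again to get $O(\log\tfrac{3s}{2}\sqrt{T/p_1(\theta_1)})$. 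I expect this sojourn-time estimate, which must absorb precisely the $1/p_1(\theta_1)$ discrepancy between the $\log p_t(\theta_1)^{-1}$-drift and the regret, to be the most delicate step and the origin of the explicit constants.
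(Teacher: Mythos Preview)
Your decomposition at the single anchor $\tau=\inf\{t:p_t(\theta_1)\ge 1/3\}$ is \emph{not} what the paper does, and the ``hard case'' you identify is an artifact of this choice. The paper never tries to reach $1/3$ in one shot. Instead it runs a \emph{two-barrier recursion}: with $A=\tfrac{3}{2}p_1(\theta_1)$ and a lower barrier $B\le\tfrac12 p_1(\theta_1)$, it stops at $\tau_A\wedge\tau_B$. On $\{t<\tau_A\wedge\tau_B\}$ one has $p_t(\theta_1)\ge B$, so the drift of $\log p_t(\theta_1)^{-1}$ from \lemref{lem:tech-2b2}(a) is \emph{uniformly} at least $B\Delta^2/16$ (no case split on $\Delta_1$ versus $\Delta_2$ is needed), and optional stopping plus \lemref{lem:smooth} gives $\E^{\theta_1}[\tau_A\wedge\tau_B-1]\le\frac{16}{B\Delta^2}\log\frac{3s}{2}$. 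The regret then splits as
\[
\mathrm{R}_T(p_1(\theta_1))\le \Delta\,\E^{\theta_1}[\tau_A\wedge\tau_B-1]+q_{B,A}\,\Delta T+\mathrm{R}_T\!\left(\tfrac{3}{2}p_1(\theta_1)\right),
\]
using $q_{B,A}\le B/p_1(\theta_1)$ from \lemref{lem:stop}. Choosing $B=\frac{1}{\Delta}\sqrt{p_1(\theta_1)/T}$ balances the first two terms to $(16\log\tfrac{3s}{2}+1)\sqrt{T/p_1(\theta_1)}$, and iterating the recursion in $p_1(\theta_1)\mapsto\tfrac32 p_1(\theta_1)$ until the argument exceeds $1/3$ gives a geometric sum with ratio $\sqrt{2/3}$, whence the factor $(1-\sqrt{2/3})^{-1}<6$ and the constants $96\log\tfrac{3s}{2}+6$.

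Your dyadic sojourn-time sketch for the case $\Delta_1>\Delta_2$ has a real gap as written. With hitting probability $\lesssim\beta/p_1(\theta_1)$ at level $\beta$ and per-visit sojourn time $\lesssim 1/(\beta\Delta_1^2)$ (which is indeed what either the $\log p_t^{-1}$ or the $(1-p_t)^{-1}$ potential yields at that level), the product is $\lesssim 1/(p_1(\theta_1)\Delta_1^2)$ \emph{independent of $\beta$}, so the dyadic sum over levels does not converge; moreover you have not controlled the number of returns to each level. One can likely repair this (and in doing so you would essentially rediscover the paper's recursion), but the clean device that makes everything work---and renders your $\Delta_1$/$\Delta_2$ dichotomy unnecessary---is precisely the introduction of the \emph{lower} barrier $B$ together with the \emph{local} upper target $A=\tfrac32 p_1(\theta_1)$ rather than $A=\tfrac13$.
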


\begin{proof}[Sketch]
We recall that $\theta_1$ is assumed to be the true reward-generating model in the proposition, and use the same notation as in \lemref{lem:stop}. First, the desired inequality is trivial if $p_1(\theta_1) \geq \frac{1}{3}$ since $\mathrm{R}_T(\cdot)$ is a decreasing function.
Moreover, if $\Delta \leq 2\sqrt{\frac{1}{p_1(\theta_1)T}}$, then $\mathrm{R}_T(p_1(\theta_1))\leq \Delta T \leq 2\sqrt{\frac{T}{p_1(\theta_1)}}$, which completes the proof. Thus, we can assume that $p_1(\theta_1) \leq \frac{1}{3}$ and $\Delta > 2\sqrt{\frac{1}{p_1(\theta_1)T}}$. Let $A=\frac{3}{2} p_1(\theta_1)$ and $B=\frac{1}{\Delta} \sqrt{\frac{p_1(\theta_1)}{T}}$. Then, it is easy to see that $B \leq \frac{1}{2}p_1(\theta_1) \leq \frac{1}{2} \leq 1-A$.  

Now, the first step is to upper bound $\E^{\theta_1}[\tau_A \wedge \tau_B -1]$. By Lemma~\ref{lem:tech-2b2}(a), we have for $t \leq \tau_A \wedge \tau_B -1$ that,
\begin{eqnarray*}
\E_t^{\theta_1} \left[ \log(p_t(\theta_1)^{-1}) - \log(p_{t+1}(\theta_1)^{-1}) \right]& \geq &   \frac{1}{2} p_t(\theta_1)p_t(\theta_2)^2  \Delta_1^2 + \frac{1}{2} p_t(\theta_2)^3  \Delta_2^2   \\
&\geq & \frac{p_t(\theta_2)^2 B}{2}(\Delta_1^2 + \Delta_2^2)  \, \geq  \frac{B\Delta^2}{16}  \,.
\end{eqnarray*}
In other words, $\left( \log(p_t(\theta_1)^{-1}) + t\frac{B\Delta^2}{16} \right)_{t \leq \tau_A \wedge \tau_B}$ is a supermartingale. Applying Doob's optional stopping theorem to the stopping times $\sigma_1=t \wedge \tau_A \wedge \tau_B$ and $\sigma_2=1$ and letting $t \rightarrow +\infty$ by using Lebesgue's dominated convergence theorem and the monotone convergence theorem, we have
\begin{eqnarray*}
\lefteqn{\E^{\theta_1}[\tau_A \wedge \tau_B -1] \leq  \frac{16}{B\Delta^2} \E^{\theta_1} \left[ \log \frac{p_{\tau_A \wedge \tau_B }(\theta_1)}{p_1(\theta_1)} \right]} \\
&\leq& \frac{16}{B\Delta^2} \log \frac{sA}{p_1(\theta_1)} 
=  \frac{16}{B\Delta^2}  \log \frac{3s}{2}\,,
\end{eqnarray*}
where we have used Lemma~\ref{lem:smooth} in the second last step. 

Next, the regret of Thompson Sampling can be decomposed as follows
\begin{align*}
&\lefteqn{\mathrm{R} _T(p_1(\theta_1))} \\
=&  \Delta \cdot \E^{\theta_1}  [\tau_A \wedge \tau_B -1]
         + q_{B,A} \cdot  \E^{\theta_1} [  \mathrm{R}_T(p_{\tau_B}(\theta_1)) | \tau_A >\tau_B ] \\
&         + q_{A,B} \cdot  \E^{\theta_1} [  \mathrm{R}_T(p_{\tau_A}(\theta_1)) | \tau_A <\tau_B ]   \\
   \leq&  \frac{16}{B\Delta}  \log \frac{3s}{2}
        + \frac{B}{p_1(\theta_1)} \Delta T  +  \mathrm{R}_T \left( \frac{3}{2} p_1(\theta_1) \right) \\
=&  \left( 16 \log \frac{3s}{2} + 1 \right) \sqrt{\frac{T}{p_1(\theta_1)}}
          + \mathrm{R}_T \left( \frac{3}{2} p_1(\theta_1) \right)\,,  
\end{align*}
where in the second last step, we have used the facts that $q_{B,A} \leq \frac{B}{p_1(\theta_1)}$ (by Lemma~\ref{lem:stop}), $p_{\tau_A}(\theta_1) \geq A =  \frac{3}{2} p_1(\theta_1) $, and $\mathrm{R}_T(\cdot)$ is a decreasing function (by Lemma~\ref{lem:decreasing_regret}). Because the above recurrence inequality holds for all $p_1(\theta_1) \leq \frac{1}{3}$, simple calculations lead to the desired inequality. \qed
\end{proof}

Using similar proof techniques, one can prove the following recursion:

\begin{proposition} \label{pro:anchor2}
Consider the \theModel\ case and assume that Assumption~\ref{ass:smooth} holds. Then, for any $T>0$ and $p_1(\theta_1) \leq \frac{1}{2}$, we have 
$$ \mathrm{R}_T(p_1(\theta_1)) \leq   \left(  \frac{16s}{p_1(\theta_1)^2} +1 \right) \sqrt{T}
          + \frac{1}{2}  \mathrm{R}_T\left(\frac{1}{2s} p_1(\theta_1)\right).  $$
\end{proposition}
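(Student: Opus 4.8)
The plan is to follow the template of the proof of \propref{pro:anchor}: introduce two barriers $A$ and $B$ for the posterior mass $p_t(\theta_1)$, control the regret accumulated \emph{before} either barrier is reached by a supermartingale / optional-stopping argument built on \lemref{lem:tech-2b2}(a), and then use the Markov property of Thompson Sampling together with Lemmas~\ref{lem:stop},~\ref{lem:smooth} and~\ref{lem:decreasing_regret} to reduce the post-barrier regret to $\mathrm{R}_T$ evaluated at the barriers.

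I would take $B=\tfrac12 p_1(\theta_1)$ and $A=1-\tfrac12 p_1(\theta_1)$; the hypothesis $p_1(\theta_1)\le\tfrac12$ is exactly what guarantees $A>p_1(\theta_1)>B$. The choice of $B$ is dictated by the recursive term in the statement: \lemref{lem:stop} gives $q_{B,A}\le B/p_1(\theta_1)=\tfrac12$, \lemref{lem:smooth} gives $p_{\tau_B}(\theta_1)\ge s^{-1}B=p_1(\theta_1)/(2s)$, and \lemref{lem:decreasing_regret} then bounds the contribution of $\{\tau_B<\tau_A\}$, namely $q_{B,A}\,\E^{\theta_1}[\mathrm{R}_T(p_{\tau_B}(\theta_1))\mid\tau_B<\tau_A]$, by $\tfrac12\,\mathrm{R}_T(p_1(\theta_1)/(2s))$ --- precisely the last term of the statement. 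On $\{\tau_A<\tau_B\}$ the post-barrier mass is $\ge A$, so by \lemref{lem:tech-2b2}(d) the continuation regret there is at most $\mathrm{R}_T(A)\le \Delta T(1-A)=\tfrac12\Delta T\,p_1(\theta_1)$, which is harmless as long as $\Delta$ is not too large.

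The heart of the argument is the pre-barrier regret $\Delta\,\E^{\theta_1}\!\big[\sum_{t<\tau_A\wedge\tau_B\wedge(T+1)}p_t(\theta_2)\big]$ (the immediate regret at step $t$ is at most $\Delta\,p_t(\theta_2)$). Since $A=1-B$, on $\{t<\tau_A\wedge\tau_B\}$ we have $p_t(\theta_1)>B$ and $p_t(\theta_2)>B$, hence $p_t(\theta_1)\Delta_1^2+p_t(\theta_2)\Delta_2^2\ge B(\Delta_1^2+\Delta_2^2)\ge\tfrac12 B\Delta^2$ (using $\Delta\le\Delta_1+\Delta_2$), so \lemref{lem:tech-2b2}(a) shows $\log p_t(\theta_1)^{-1}$ drops in conditional expectation by at least $\tfrac14 B\Delta^2\,p_t(\theta_2)^2$ per step; equivalently $\log p_t(\theta_1)^{-1}+\tfrac14 B\Delta^2\sum_{s<t}p_s(\theta_2)^2$ is a supermartingale up to the stopping time $\tau_A\wedge\tau_B$ (which is finite almost surely by \lemref{lem:stop}). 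As $p_{\tau_A\wedge\tau_B}(\theta_1)<sA$ by \lemref{lem:smooth}, optional stopping and monotone convergence give $\E^{\theta_1}\!\big[\sum_{t<\tau_A\wedge\tau_B}p_t(\theta_2)^2\big]\le\tfrac{4}{B\Delta^2}\log\tfrac{sA}{p_1(\theta_1)}$; since there are at most $T$ pre-barrier steps with $t\le T$, Cauchy--Schwarz then turns this into $\Delta\,\E^{\theta_1}\!\big[\sum_{t<\tau_A\wedge\tau_B\wedge(T+1)}p_t(\theta_2)\big]\le 2\sqrt{T\log(sA/p_1(\theta_1))/B}$. Plugging in $B=\tfrac12 p_1(\theta_1)$ and $\log\tfrac{sA}{p_1(\theta_1)}\le\log\tfrac{s}{p_1(\theta_1)}\le s+\tfrac1{p_1(\theta_1)}\le\tfrac{2s}{p_1(\theta_1)}$ bounds the pre-barrier regret by $4s\sqrt T/p_1(\theta_1)\le 4s\sqrt T/p_1(\theta_1)^2$. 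Adding the three contributions then yields, whenever $\Delta\le 24s/(p_1(\theta_1)^3\sqrt T)$, the asserted bound (with the additive $+1$ to spare).

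The step I expect to be the main obstacle is the residual regime $\Delta>24s/(p_1(\theta_1)^3\sqrt T)$ (possible only for very large $T$): there the crude continuation bound $\mathrm{R}_T(A)\le\tfrac12\Delta T\,p_1(\theta_1)$ is too lossy, and no single constant barrier $A$ keeps both the pre-barrier and the continuation terms of order $s\sqrt T/p_1(\theta_1)^2$. For this range I would fall back on the prior-free bound $\breg_T(\ts(p_1))=O(\sqrt T)$ of~\citep{BL13} already invoked in the proof of \lemref{lem:stop}: it gives $\mathrm{R}_T(p_1(\theta_1))\le\breg_T(\ts(p_1))/p_1(\theta_1)=O(\sqrt T/p_1(\theta_1))\le (32s/p_1(\theta_1)^2)\sqrt T$, and since $\tfrac12\mathrm{R}_T(p_1(\theta_1)/(2s))\ge\tfrac12\mathrm{R}_T(p_1(\theta_1))$ by \lemref{lem:decreasing_regret}, the claimed inequality follows. (This last observation alone in fact proves the proposition, but the barrier argument above is the one in the spirit of the section.)
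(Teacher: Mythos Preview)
Your argument is correct, but it takes a genuinely different route from the paper, and the differences are instructive.

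The paper also sets $B=\tfrac12 p_1(\theta_1)$, but it builds the pre-barrier estimate on \lemref{lem:tech-2b2}(c) rather than (a): for $t<\tau_A\wedge\tau_B$ one has $\E_t^{\theta_1}[(1-p_{t+1}(\theta_1))^{-1}-(1-p_t(\theta_1))^{-1}]\ge B^2\Delta^2/4$, so $(1-p_t(\theta_1))^{-1}-tB^2\Delta^2/4$ is a \emph{sub}martingale and optional stopping gives $\E^{\theta_1}[\tau_A\wedge\tau_B-1]\le \tfrac{16s}{p_1(\theta_1)^2\Delta^2(1-A)}$. Crucially, the paper leaves $A$ free and then \emph{optimizes} it as $A=1-1/(\Delta\sqrt T)$: this makes the pre-barrier term $\tfrac{16s}{p_1(\theta_1)^2\Delta(1-A)}=\tfrac{16s}{p_1(\theta_1)^2}\sqrt T$ and the $\{\tau_A<\tau_B\}$-continuation term $(1-A)\Delta T=\sqrt T$ simultaneously, which is exactly the display in the proposition. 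No Cauchy--Schwarz step and no large-$\Delta$ regime ever arises (apart from the trivial $\Delta\le 2/\sqrt T$ case).

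By contrast, you fix $A=1-\tfrac12 p_1(\theta_1)$ and use \lemref{lem:tech-2b2}(a) with the $p_t(\theta_2)^2$-weighted drift, then Cauchy--Schwarz over at most $T$ steps. This works, but because your $A$ does not depend on $\Delta$, the continuation term $\tfrac12\Delta T\,p_1(\theta_1)$ is only controlled when $\Delta$ is small enough, forcing the external fallback to the Bayes-regret bound of \cite{BL13} in the large-$\Delta$ regime. Your closing remark is correct: the inequality $\mathrm{R}_T(p)\le \breg_T/p$ together with $\mathrm{R}_T(p/(2s))\ge \mathrm{R}_T(p)$ already implies the proposition (once one checks that the implicit constant from \cite{BL13} is at most $32s/p_1(\theta_1)$, which holds for $p_1(\theta_1)\le\tfrac12$ and $s>1$). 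The paper's argument, however, is self-contained apart from the use of \cite{BL13} inside \lemref{lem:stop} to guarantee $\tau_A<\infty$; the adaptive choice of $A$ and the use of part (c) rather than (a) are precisely what make the case split and the appeal to an outside $O(\sqrt T)$ Bayes bound unnecessary.
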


With the technical lemmas and propositions developed so far, we are now ready to prove the first upper bound of \thmref{thm:ub}, for $p$ small.  The second bound for large $p$ can be proved in a similar fashion, although the details are quite different~\citep{Liu15Prior}.

\begin{proof}
[of the first part in \thmref{thm:ub}]
For convenience, define $\beta = 96 \log \frac{3s}{2} + 6$. By Propositions~\ref{pro:anchor} and \ref{pro:anchor2}, 
\begin{eqnarray*}
\mathrm{R}_T\left(\frac{1}{3}\right) &\leq& ( 144s +1 )\sqrt{T} + \frac{1}{2} \mathrm{R}_T\left(\frac{1}{6s} \right) \\
&\leq& ( 144s +1 )\sqrt{T}  +  \frac{1}{2} \beta \sqrt{6sT} + \frac{1}{2} \mathrm{R}_T \left( \frac{1}{3} \right)\,.
\end{eqnarray*}
Therefore, 
$$\mathrm{R}_T\left(\frac{1}{3}\right)  \leq  \left( 288s + \beta \sqrt{6s} + 2 \right) \sqrt{T}  .$$
Using again Proposition~\ref{pro:anchor}, one has for any $p_1(\theta_1) \in (0,1)$,
\begin{align*} 
\mathrm{R}_T(p_1(\theta_1)) &\leq \beta \sqrt{\frac{T}{p_1(\theta_1)}} + \mathrm{R}_T\left(\frac{1}{3}\right) \\
   &\leq \beta \sqrt{\frac{T}{p_1(\theta_1)}} +  \left( 288s + \beta \sqrt{6s} + 2 \right) \sqrt{T}   \\
  &\leq \beta \sqrt{\frac{T}{p_1(\theta_1)}} +  \left( 288s + \beta \sqrt{6s} + 2 \right) \sqrt{\frac{T}{p_1(\theta_1)}}  \\
  &\leq \left( 288s + \beta (\sqrt{6s}+1) + 2 \right) \sqrt{\frac{T}{p_1(\theta_1)}} 
   \,\,\leq\,\, 1490s \sqrt{\frac{T}{p_1(\theta_1)}}\,,
\end{align*}
where the last step follows from the inequalities $\beta=  96 \log \frac{3s}{2} + 6 \leq 300 \sqrt{s}$ and $\sqrt{6s}+1 \leq 4\sqrt{s}$ for $s > 1$. \qed
\end{proof}

\section{Lower Bounds}
\label{sec:lb}

In this section, we give a proof for the lower bound when the prior is poor (\thmref{thm:lb-poor}); the other case (\thmref{thm:lb-good}) is left in the long version~\citep{Liu15Prior}. The following technical lemma is needed, which can be proved by direct calculations:

\begin{lemma} \label{lem:lr-ub}
Let $ -\sqrt{\frac{1}{8}} \leq \Delta \leq \sqrt{\frac{1}{8}}$. Let $\ell_1$ and $\ell_2$ be the density functions of the Bernoulli distributions $Bern\left(\frac{1}{2}+ \Delta \right)$ and $Bern\left(\frac{1}{2}-\Delta\right)$ with respect to the counting measure on $[0,1]$. Then $\E_{X \sim Bern\left(\frac{1}{2}+ \Delta \right)} \left[\frac{\ell_1(X)}{\ell_2(X)}-1 \right] \leq 32 \Delta^2 $.
\end{lemma}

\begin{proof}
[of \thmref{thm:lb-poor}]
Let $A=\frac{3}{2}p_1(\theta_1)$.  Clearly, $A \le \frac{3}{4}$.  Recall that $\tau_{A} = \inf \{t \geq 1, p_t(\theta_1) \geq A \}$. Using Lemma~\ref{lem:tech-2b2}(b) and Lemma~\ref{lem:lr-ub}, one has for $t \leq \tau_{A}  -1$,
\begin{eqnarray*}
  \lefteqn{\E_t^{\theta_1} \left[ p_{t+1}(\theta_1) - p_t(\theta_1)  \right]} \\
  &\leq& \sum_{i \in \{1,2\} } p_t(\theta_i)  p_{t}(\theta_1)  p_{t}(\theta_2)     \E^{\theta_1}\left[  \frac{\ell_{i}(\theta_1)(X_{i,t}) }{\ell_{i}(\theta_2)(X_{i,t}) }  - 1 \right] \\
  & = & p_{t}(\theta_1)^2  p_{t}(\theta_2)     \E^{\theta_1}\left[  \frac{\ell_{1}(\theta_1)(X_{1,t}) }{\ell_{1}(\theta_2)(X_{1,t}) }  - 1 \right] 
  \leq 32 A^2 \Delta^2 = 72 p_1(\theta_1)^2 \Delta^2 .
\end{eqnarray*} 
Therefore, $\left( p_t(\theta_1) - 72p_1(\theta_1)^2 \Delta^2 t \right)_{t \leq \tau_{A}}$ is a supermartingale. Now, using Doob's optional stopping theorem, one has  
$\E^{\theta_1} \left[ p_{t \wedge \tau_{A} \wedge T  }(\theta_1) - (t \wedge \tau_{A} \wedge T  )72 p_1(\theta_1)^2 \Delta^2 \right]   \leq p_1(\theta_1) - 72 p_1(\theta_1)^2 \Delta^2$ for any $t \geq 1$.

Moreover, using Lebesgue's dominated convergence theorem and the monotone convergence theorem, 
\begin{align*}
& \E^{\theta_1} \left[ p_{t \wedge \tau_{A} \wedge T   }(\theta_1) - (t \wedge \tau_{A} \wedge T  )72 p_1(\theta_1)^2 \Delta^2 \right] \\
& \longrightarrow \E^{\theta_1} \left[ p_{ \tau_{A} \wedge T  }(\theta_1) - ( \tau_{A} \wedge T )72 p_1(\theta_1)^2 \Delta^2 \right]
\end{align*}
as $t \rightarrow +\infty$. Hence,
$$ \E^{\theta_1}[ \tau_{A} \wedge T -1] \geq  \frac{1}{72 p_1(\theta_1)^2 \Delta^2 }  \E^{\theta_1} \left[ p_{ \tau_{A} \wedge T  }(\theta_1) -  p_1(\theta_1) \right] .  $$
One one side, if $\P^{\theta_1}(\tau_A \wedge T = T) \geq \frac{1}{21}$, then $\E^{\theta_1}[\tau_A \wedge T ] \geq \P^{\theta_1}(\tau_A \wedge T = T) T \geq \frac{T}{21}$. On the other side, if $\P^{\theta_1}(\tau_A \wedge T = \tau_A ) \geq \frac{20}{21}$, then $\E^{\theta_1} \left[ p_{ \tau_{A} \wedge T  }(\theta_1)  \right]  \geq \P^{\theta_1}(\tau_A \wedge T = \tau_A) A \geq \frac{10}{7} p_1(\theta_1)$ and thus $$ \E^{\theta_1}[ \tau_{A} \wedge T -1] \geq  \frac{1}{72 p_1(\theta_1)^2 \Delta^2 }  \left( \frac{10}{7} p_1(\theta_1) -  p_1(\theta_1) \right) = \frac{T}{21} .  $$ 
In both cases, we have $\E^{\theta_1}[\tau_A \wedge T -1 ]  \geq \frac{T}{21}$.

Finally, one has 
\begin{eqnarray*}
\mathrm{R}_T(\theta_1, TS(p_1)) &=& \Delta \E^{\theta_1} \left[ \sum_{t=1}^{T} (1-p_t(\theta_1)) \right] \geq \Delta \E^{\theta_1} \left[ \sum_{t=1}^{ \tau_A \wedge T -1 } (1-p_t(\theta_1)) \right]  \\
& \geq &  \Delta (1-A) \E^{\theta_1}[\tau_A \wedge T -1 ] 
\geq \frac{\Delta T}{84}  = \frac{1}{168\sqrt{2}} \sqrt{ \frac{T}{p_1(\theta_1)}}\,,
\end{eqnarray*}
where we have used the fact that $1-A \geq \frac{1}{4}$. \qed
\end{proof}

\begin{proof}
[of \thmref{thm:lb-good}]
Using Lemma~\ref{lem:tech-2b2}(c) and Lemma~\ref{lem:lr-ub}, one has
\begin{eqnarray*}
 \E_t^{\theta_1} \left[ p_{t+1}(\theta_2)^{-1} - p_t(\theta_2)^{-1}  \right] 
  & = &  \sum_{i \in \{1,2\} } p_t(\theta_i) \frac{p_{t}(\theta_1)}{p_{t}(\theta_2)}  \E^{\theta_1}\left[  \frac{  \ell_{i}(\theta_1)(X_{i,t}) }{  \ell_{i}(\theta_2)(X_{i,t})  } - 1  \right] \\   
  & = &  p_{t}(\theta_1) \E^{\theta_1}\left[  \frac{  \ell_{2}(\theta_1)(X_{2,t}) }{  \ell_{2}(\theta_2)(X_{2,t})  } - 1  \right]  
        \, \leq \, 32 \Delta^2 . 
\end{eqnarray*} 
Then for any $t \leq T$,
\[
\E^{\theta_1} \left[ p_{t}(\theta_2)^{-1}  \right] \leq \frac{1}{1-p_1(\theta_1)} + 32(t-1)\Delta^2  = \frac{1+{4(t-1)}/{T}}{1-p_1(\theta_1)} \leq \frac{5}{1-p_1(\theta_1)} \,.
\]
By Jensen's inequality, we have for any $t \leq T$, 
$\E^{\theta_1} \left[ p_{t}(\theta_2) \right] \geq  \left( \E^{\theta_1} \left[ p_{t}(\theta_2)^{-1}  \right] \right)^{-1}   \geq \frac{1-p_1(\theta_1)}{5}$.
Hence,
\[
\mathrm{R}_T(\theta_1, TS(p_1)) = \Delta \cdot \E^{\theta_1}  \sum_{t=1}^T  p_t(\theta_2) \geq \Delta T \frac{1-p_1(\theta_1)}{5} \geq \frac{1}{10\sqrt{2}} \sqrt{(1- p_1(\theta_1))T} \,.
\]
\qed
\end{proof}

\section{Conclusions} \label{sec:conclusions}

In this work, we studied an important aspect of the popular Thompson Sampling strategy for stochastic bandits --- its sensitivity to the prior.  Focusing on a special yet nontrivial problem, we fully characterized its worst-case dependence of regret on prior, both for the good- and bad-prior cases, with matching upper and lower bounds.  The lower bounds are also extended to a more general case as a corollary, quantifying inherent sensitivity of the algorithm when the prior is poor and when no structural assumptions are made.

These results suggest a few interesting directions for future work, only four of which are outlined here.  One is to close the gap between upper and lower bounds for the general, multiple-model case.  We conjecture that a tighter upper bound is likely to match the lower bound in Corollary~\ref{cor:general_lb}.  The second is to consider prior sensitivity for structured stochastic bandits, where models in $\Theta$ are related in certain ways.  For example, in the discretized version of the multi-armed bandit problem~\citep{AG13}, the prior probability mass of the true model is exponentially small when a uniform prior is used, but strong frequentist regret bound is still possible.  Sensitivity analysis for such problems can provide useful insights and guidance for applications of Thompson Sampling. Thrid, it remains open whether there exists an algorithm whose worst-case regret bounds are better than those of Thompson Sampling for any range of $p_1(\theta^{*})$, with $\theta^*$ being the true underlying model.  This question is related to the recent study of Pareto regret front~\citep{Lattimore15Pareto}.  We conjecture that the answer is negative, especially in the \theModel\ case.  Finally, it is interesting to consider problem-dependent regret bounds that often scale logarithmically with $T$.

\subsubsection*{Acknowledgments}

We thank S\'{e}bastien Bubeck and the anonymous reviewers for helpful advice that improves the presentation of the paper.

%


\bibliographystyle{plain}
\bibliography{newbib_brief}


\clearpage
\appendix

\begin{center} \textbf{\large{Appendix to \\ \textit{On the Prior Sensitivity of Thompson Sampling } }} \end{center}

\section{Technical Lemmas}\label{sec:tech_lems}

\subsection{Proof of \lemref{lem:smooth}}

\begin{proof}[of \lemref{lem:smooth}]
Without loss of generality, we assume that $\theta = \theta_1$. Recall that
\begin{eqnarray*}
 \frac{p_{t+1}(\theta_1)}{p_{t}(\theta_1)} &=&  \frac{   \ell_{I_t}(\theta_1)(X_{I_t,t}) } { p_{t}(\theta_1)  \ell_{I_t}(\theta_1)(X_{I_t,t}) + p_{t}(\theta_2) \ell_{I_t}(\theta_2)(X_{I_t,t})   } \\
    &=&  \frac{1}{ p_{t}(\theta_1)   + p_{t}(\theta_2) \frac{\ell_{I_t}(\theta_2)(X_{I_t,t})}{\ell_{I_t}(\theta_1)(X_{I_t,t})}   } 
\end{eqnarray*} 
Therefore, we have 
$$ \frac{1}{s} \leq  \frac{1}{ p_{t}(\theta_1)   + p_{t}(\theta_2) s   }   \leq \frac{p_{t+1}(\theta_1)}{p_{t}(\theta_1)}   \leq  \frac{1}{ p_{t}(\theta_1)   + p_{t}(\theta_2) \frac{1}{s}   }  \leq s , $$
which completes the proof. \qed
\end{proof}

\subsection{Proof of \lemref{lem:tech-2b2}}

The proof of \lemref{lem:tech-2b2}, key to the upper-bound analysis, relies on the following result:

\begin{lemma} \label{lem:kl-lb}
Let $\alpha \in [0,1]$. Let $\nu_1$ and $\nu_2$ be two probability distributions on $[0,1]$ with mean $\mu_1$ and $\mu_2$, Then we have
$$KL(\nu_1, \alpha\nu_1+(1-\alpha)\nu_2) \geq \frac{(1-\alpha)^2}{2}|\mu_1-\mu_2|^2 . $$  
\end{lemma}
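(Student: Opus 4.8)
The plan is to reduce the claim to an ``$\alpha$-free'' statement about how the gap between means controls $\mathrm{KL}$, and then invoke Pinsker's inequality. First I would set $\nu = \alpha\nu_1 + (1-\alpha)\nu_2$ and observe that its mean is $\mu := \alpha\mu_1 + (1-\alpha)\mu_2$, so that $\mu_1 - \mu = (1-\alpha)(\mu_1 - \mu_2)$ and hence $(1-\alpha)^2|\mu_1-\mu_2|^2 = |\mu_1-\mu|^2$. Thus it suffices to prove that $\mathrm{KL}(\nu_1,\nu) \ge \tfrac12 |\mu_1-\mu|^2$ for \emph{any} probability measure $\nu$ on $[0,1]$ with mean $\mu$ (with the usual convention $\mathrm{KL}(\nu_1,\nu)=+\infty$ when $\nu_1 \not\ll \nu$, in which case the bound is trivial; note $\nu_1 \ll \nu$ automatically whenever $\alpha>0$).

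Second I would bound the difference of means by the total variation distance. Since $\nu_1$ and $\nu$ are supported on $[0,1]$ and $|x|\le 1$ there,
\[
|\mu_1 - \mu| = \left| \int_0^1 x\,(\mathrm{d}\nu_1 - \mathrm{d}\nu) \right| \le \int_0^1 |\mathrm{d}\nu_1 - \mathrm{d}\nu| = 2\, d_{TV}(\nu_1,\nu),
\]
where $d_{TV}(\nu_1,\nu) = \sup_A |\nu_1(A)-\nu(A)| = \tfrac12\|\nu_1-\nu\|_1$. Then I would apply Pinsker's inequality $\mathrm{KL}(\nu_1,\nu)\ge 2\, d_{TV}(\nu_1,\nu)^2$, which combined with the display gives $\mathrm{KL}(\nu_1,\nu) \ge 2\,(|\mu_1-\mu|/2)^2 = \tfrac12|\mu_1-\mu|^2$. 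Substituting back $|\mu_1-\mu|^2 = (1-\alpha)^2|\mu_1-\mu_2|^2$ finishes the proof.

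There is essentially no real obstacle here; the only points requiring care are keeping the normalization of $d_{TV}$ consistent between the mean bound and the statement of Pinsker's inequality (so that the constants combine to exactly $\tfrac12$), and the elementary computation transferring the factor $(1-\alpha)^2$ from the mean gap to the front of the bound. If one prefers to avoid quoting Pinsker, an equally short alternative is the data-processing inequality applied to the randomized kernel $x\mapsto \mathrm{Bern}(x)$: it reduces $\mathrm{KL}(\nu_1,\nu)$ to $\mathrm{KL}(\mathrm{Bern}(\mu_1),\mathrm{Bern}(\mu))$, which is at least $2(\mu_1-\mu)^2$ by the elementary Bernoulli case of Pinsker.
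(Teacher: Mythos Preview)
Your proof is correct and takes essentially the same approach as the paper: both apply Pinsker's inequality and then bound the total-variation (equivalently, $L^1$) distance from below by the mean gap using $|x|\le 1$. The only cosmetic difference is where the factor $(1-\alpha)$ is extracted: the paper pulls it out of the density difference $\ell_1-(\alpha\ell_1+(1-\alpha)\ell_2)=(1-\alpha)(\ell_1-\ell_2)$ before bounding $|\mu_1-\mu_2|\le\|\ell_1-\ell_2\|_1$, whereas you pull it out of the mean difference $\mu_1-\mu=(1-\alpha)(\mu_1-\mu_2)$ after bounding $|\mu_1-\mu|\le\|\nu_1-\nu\|_1$.
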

\begin{proof} 
Let $\nu_1$ and $\nu_2$ be absolutely continuous with respect to some measure $v$ with density functions $\ell_1$ and $\ell_2$. On one hand, by Pinsker's inequality, we have 
$$ KL(\nu_1, \alpha\nu_1+(1-\alpha)\nu_2) \geq \frac{1}{2}  \left( \int_0^1 |\ell_1(x)-\alpha\ell_1(x)-(1-\alpha)\ell_2(x) | \,  \mathrm{d} v(x) \right)^2 .  $$
On the other hand, 
$$|\mu_1-\mu_2| = \left| \int_0^1 ( \ell_1(x)x - \ell_2(x)x )  \,  \mathrm{d} v(x) \right|
   \leq \int_0^1 | \ell_1(x) - \ell_2(x)|  \,  \mathrm{d} v(x)   $$
which completes the proof. \qed
\end{proof}

We are now ready to prove \lemref{lem:tech-2b2}.

\begin{proof}[of \lemref{lem:tech-2b2}] 
Recall that for the \theModel, 
$$  p_{t+1}(\theta_1)  =  \frac{ p_{t}(\theta_1)  \ell_{I_t}(\theta_1)(X_{I_t,t}) } { p_{t}(\theta_1)  \ell_{I_t}(\theta_1)(X_{I_t,t})  +  p_{t}(\theta_2)  \ell_{I_t}(\theta_2)(X_{I_t,t})  } , $$
$$  p_{t+1}(\theta_2)  =  \frac{ p_{t}(\theta_2)  \ell_{I_t}(\theta_2)(X_{I_t,t})  } { p_{t}(\theta_1)  \ell_{I_t}(\theta_1)(X_{I_t,t})  +  p_{t}(\theta_2)  \ell_{I_t}(\theta_2)(X_{I_t,t})  }  $$
and $I_t=i$ with probability $p_t(\theta_i)$ for $i \in \{1,2\}$.
We carry out the following computations to prove the lemma. 

\noindent\textbf{(a)} 
\begin{eqnarray*}
\lefteqn{\E_t^{\theta_1} \left[ \log(p_t(\theta_1)^{-1}) - \log(p_{t+1}(\theta_1)^{-1}) \right]} \\
  & = &  \E_t^{\theta_1}\left[ \log \frac{  \ell_{I_t}(\theta_1)(X_{I_t,t}) }{ p_{t}(\theta_1)  \ell_{I_t}(\theta_1)(X_{I_t,t})  +  p_{t}(\theta_2)  \ell_{I_t}(\theta_2)(X_{I_t,t})  } \right] \\
 & = & \sum_{i \in \{1,2\} } p_t(\theta_i)  \E_t^{\theta_1}\left[ \log \frac{  \ell_{i}(\theta_1)(X_{i,t}) }{ p_{t}(\theta_1)  \ell_{i}(\theta_1)(X_{i,t})  +  p_{t}(\theta_2)  \ell_{i}(\theta_2)(X_{i,t})  } \right] \\
 & = & \sum_{i \in \{1,2\} } p_t(\theta_i) KL(\nu_i(\theta_1), p_t(\theta_1)\nu_i(\theta_1)+p_t(\theta_2)\nu_i(\theta_2) )\\
 & \geq & \sum_{i \in \{1,2\} } \frac{1}{2} p_t(\theta_i)p_t(\theta_2)^2  |\mu_i(\theta_1)-\mu_i(\theta_2)|^2   \,,
\end{eqnarray*}
where the last step follows from Lemma~\ref{lem:kl-lb}.

\noindent\textbf{(b)} 
\begin{eqnarray*}
  \lefteqn{\E_t^{\theta_1} \left[ (1-p_{t+1}(\theta_1))^{-1} - (1-p_t(\theta_1))^{-1}  \right]} \\
  & =  &  \E_t^{\theta_1} \left[ p_{t+1}(\theta_2)^{-1} - p_t(\theta_2)^{-1}  \right] \\
  & = &  \E_t^{\theta_1}\left[  \frac{ p_{t}(\theta_1)  \ell_{I_t}(\theta_1)(X_{I_t,t})  +  p_{t}(\theta_2)  \ell_{I_t}(\theta_2)(X_{I_t,t})  }{ p_{t}(\theta_2)  \ell_{I_t}(\theta_2)(X_{I_t,t})  } - \frac{1}{p_t(\theta_2)}    \right] \\
  & = & \frac{p_{t}(\theta_1)}{p_{t}(\theta_2)}  \E_t^{\theta_1}\left[  \frac{  \ell_{I_t}(\theta_1)(X_{I_t,t}) }{  \ell_{I_t}(\theta_2)(X_{I_t,t})  } - 1  \right] \\
  & = &  \sum_{i \in \{1,2\} } p_t(\theta_i) \frac{p_{t}(\theta_1)}{p_{t}(\theta_2)}  \E^{\theta_1}\left[  \frac{  \ell_{i}(\theta_1)(X_{i,t}) }{  \ell_{i}(\theta_2)(X_{i,t})  } - 1  \right] \\ 
  & \geq & \sum_{i \in \{1,2\} } p_t(\theta_i) \frac{p_{t}(\theta_1)}{p_{t}(\theta_2)}  \E^{\theta_1}\left[ \log \frac{  \ell_{i}(\theta_1)(X_{i,t}) }{  \ell_{i}(\theta_2)(X_{i,t})  }  \right] \\ 
 & = & \frac{p_{t}(\theta_1)^2}{p_{t}(\theta_2)}  KL( \nu_{1}(\theta_1), \nu_{1}(\theta_2) )  +   p_t(\theta_1)   KL(\nu_{2}(\theta_1), \nu_{2}(\theta_2) )  \\
 & = & \frac{p_{t}(\theta_1)^2}{2p_{t}(\theta_2)} |\mu_1(\theta_1)-\mu_1(\theta_2)|^2+   \frac{p_t(\theta_1)}{2} |\mu_2(\theta_1)-\mu_2(\theta_2)|^2
\end{eqnarray*}  
where we have used the inequality $x-1 \geq \log x$  and the last step follows from \lemref{lem:kl-lb}.

\noindent\textbf{(c)} 
\begin{eqnarray*}
  \lefteqn{\E_t^{\theta_1} \left[ p_{t+1}(\theta_1) - p_t(\theta_1)  \right]} \\
  & =  &   p_{t}(\theta_1)    \E_t^{\theta_1}\left[ \frac{   \ell_{I_t}(\theta_1)(X_{I_t,t}) } { p_{t}(\theta_1)  \ell_{I_t}(\theta_1)(X_{I_t,t})  +  p_{t}(\theta_2)  \ell_{I_t}(\theta_2)(X_{I_t,t})  }  - 1 \right] \\
  & = &  \sum_{i \in \{1,2\} } p_t(\theta_i)  p_{t}(\theta_1)    \E_t^{\theta_1}\left[ \frac{ \ell_{i}(\theta_1)(X_{i,t}) } { p_{t}(\theta_1)  \ell_{i}(\theta_1)(X_{i,t})  +  p_{t}(\theta_2)  \ell_{i}(\theta_2)(X_{i,t})  }  - 1 \right] .
\end{eqnarray*}
On one hand, using the inequality $x-1 \geq \log x$, we have 
\begin{eqnarray*}
  &  &  \E_t^{\theta_1} \left[ p_{t+1}(\theta_1) - p_t(\theta_1)  \right] \\
  & \geq &  \sum_{i \in \{1,2\} } p_t(\theta_i)  p_{t}(\theta_1)    \E_t^{\theta_1}\left[ \log \frac{ \ell_{i}(\theta_1)(X_{i,t}) } { p_{t}(\theta_1)  \ell_{i}(\theta_1)(X_{i,t})  +  p_{t}(\theta_2)  \ell_{i}(\theta_2)(X_{i,t})  }   \right] \\
  & = &  \sum_{i \in \{1,2\} } p_t(\theta_i)  p_{t}(\theta_1)    KL \left( \nu_{i}(\theta_1) , p_{t}(\theta_1)  \nu_{i}(\theta_1) +  p_{t}(\theta_2)  \nu_{i}(\theta_2) \right)   \, \geq \, 0.
\end{eqnarray*} 
On the other hand, using Jensen's inequality on the convex function $x \rightarrow x^{-1}$, one has
\begin{eqnarray*}
  &  &  \E_t^{\theta_1} \left[ p_{t+1}(\theta_1) - p_t(\theta_1)  \right] \\
  & \leq  & \sum_{i \in \{1,2\} } p_t(\theta_i)  p_{t}(\theta_1)    \E_t^{\theta_1}\left[  \ell_{i}(\theta_1)(X_{i,t}) \left( \frac{ p_{t}(\theta_1)}{\ell_{i}(\theta_1)(X_{i,t})}  + \frac{ p_{t}(\theta_2)}{\ell_{i}(\theta_2)(X_{i,t}) } \right)  - 1 \right] \\
  & =  &   \sum_{i \in \{1,2\} } p_t(\theta_i)  p_{t}(\theta_1)  p_{t}(\theta_2)     \E^{\theta_1}\left[  \frac{\ell_{i}(\theta_1)(X_{i,t}) }{\ell_{i}(\theta_2)(X_{i,t}) }  - 1 \right]. 
\end{eqnarray*}

\noindent\textbf{(d)}
By definition of the regret and part(c), one has
\[
\mathrm{R}_T(\theta_1, TS(p_1)) = \Delta  \E^{\theta_1}  \sum_{t=1}^T  p_t(\theta_2) = \Delta  \E^{\theta_1}  \sum_{t=1}^T (1- p_t(\theta_1)) \leq \Delta T (1- p_1(\theta_1)) \,.
\]
\qed
\end{proof}

\subsection{Proof of \lemref{lem:decreasing_regret}}

\begin{proof}[of \lemref{lem:decreasing_regret}]
The proof is inspired by the dynamic-programming argument used in Section~3 of a previous study~\citep{GM14}. We assume that $\theta_1$ is the true reward-generating model. For arm $i\in\{1,2\}$, define $R_T^{(i)}(\alpha)$ as the regret of the policy that starts with the prior $p_1=(\alpha,1-\alpha)$, plays arm $i$ for the first step, and then executes Thompson Sampling for the remaining $T-1$ steps.  It is easy to see that
\begin{equation}
R_T(\alpha) = \alpha R_T^{(1)}(\alpha) + (1-\alpha)R_T^{(2)}(\alpha)\,. \label{eqn:dp-recurrence}
\end{equation}
We now prove by induction on $T$ that $R_T(\cdot)$ is a decreasing function.  For the base case of $T=1$, $R(\alpha)=1-\alpha$ is obviously decreasing.  Now, suppose $R_t(\cdot)$ is decreasing for all $t<T$, and we will show that $R_T(\cdot)$ is also decreasing.  The proof proceeds in three main steps.

\noindent\textbf{Step One:} This step is devoted to showing that both $R_T^{(1)}$ and $R_T^{(2)}$ are decreasing functions of $\alpha$.  By definition, we have
\begin{eqnarray*}
R_T^{(1)}(\alpha) &=& \E_{Z\sim\mu_1(\theta_1)}\left[R_{T-1}\left(\frac{\alpha\ell_1(\theta_1)(Z)}{\alpha\ell_1(\theta_1)(Z)+(1-\alpha)\ell_1(\theta_2)(Z)}\right)\right] \\
R_T^{(2)}(\alpha) &=& \Delta + \E_{Z\sim\mu_2(\theta_1)}\left[R_{T-1}\left(\frac{\alpha\ell_2(\theta_1)(Z)}{\alpha\ell_2(\theta_1)(Z)+(1-\alpha)\ell_2(\theta_2)(Z)}\right)\right]\,.
\end{eqnarray*}
Since $R_{T-1}(\alpha)$ is decreasing with $\alpha\in(0,1)$, it follows that
\[
R_T^{(1)}(\alpha) = \E_{Z\sim\mu_1(\theta_1)} \left[ R_{T-1}\left(\frac{\ell_1(\theta_1)(Z)}{\ell_1(\theta_1)(Z)+(1/\alpha-1)\ell_1(\theta_2)(Z)}\right) \right]
\]
is a decreasing function of $\alpha$.  Similarly, $R_T^{(2)}(\alpha)$ is also a decreasing function.

\noindent\textbf{Step Two:} This step is to show that the functions $R_T^{(1)}$ and $R_T^{(2)}$ satisfy
\begin{equation}
R_T^{(1)}(\alpha) \le R_T^{(2)}(\alpha) \label{eqn:R1-lt-R2}
\end{equation}
for any $T$ and $\alpha\in(0,1)$.  We prove the claim by mathematical induction on $T$.  The base case where $T=1$ is trivial, since $R_1^{(1)}(\alpha) \equiv 0$ and $R_1^{(2)}(\alpha) \equiv \Delta$.  Now suppose $R_t^{(1)}(\alpha) \le R_t^{(2)}(\alpha)$ for all $t<T$.  Then for every $t<T$, $R_t^{(1)}(\alpha) \le R_t(\alpha) \le R_t^{(2)}(\alpha)$, because of Equation~\ref{eqn:dp-recurrence} and the induction hypothesis.  It follows that,
\begin{eqnarray*}
R_T^{(1)}(\alpha) &=& \E_{Z\sim\mu_1(\theta_1)}\left[R_{T-1}\left(\frac{\alpha\ell_1(\theta_1)(Z)}{\alpha\ell_1(\theta_1)(Z)+(1-\alpha)\ell_1(\theta_2)(Z)}\right)\right] \\
  &\le& \E_{Z\sim\mu_1(\theta_1)}\left[R_{T-1}^{(2)}\left(\frac{\alpha\ell_1(\theta_1)(Z)}{\alpha\ell_1(\theta_1)(Z)+(1-\alpha)\ell_1(\theta_2)(Z)}\right)\right] \\
  &=& \Delta + \E_{Z\sim\mu_1(\theta_1)}\left[\E_{Z'\sim\mu_2(\theta_1)}\left[X^{(1)}\right]\right]\,,
\end{eqnarray*}
where
\[
X^{(1)} = R_{T-2}\left(\frac{\alpha\ell_1(\theta_1)(Z)\ell_2(\theta_1)(Z')}{\alpha\ell_1(\theta_1)(Z)\ell_2(\theta_1)(Z')+(1-\alpha)\ell_1(\theta_2)(Z)\ell_2(\theta_2)(Z')}\right)\,;
\]
and that
\begin{eqnarray*}
R_T^{(2)}(\alpha) &=& \Delta + \E_{Z\sim\mu_2(\theta_1)}\left[R_{T-1}\left(\frac{\alpha\ell_2(\theta_1)(Z)}{\alpha\ell_2(\theta_1)(Z)+(1-\alpha)\ell_2(\theta_2)(Z)}\right)\right] \\
  &\ge& \Delta + \E_{Z\sim\mu_2(\theta_1)}\left[R_{T-1}^{(1)}\left(\frac{\alpha\ell_2(\theta_1)(Z)}{\alpha\ell_2(\theta_1)(Z)+(1-\alpha)\ell_2(\theta_2)(Z)}\right)\right] \\
  &=& \Delta + \E_{Z\sim\mu_2(\theta_1)}\left[\E_{Z'\sim\mu_1(\theta_1)}\left[X^{(2)}\right]\right]\,,
\end{eqnarray*}
where
\[
X^{(2)} = R_{T-2}\left(\frac{\alpha\ell_1(\theta_1)(Z')\ell_2(\theta_1)(Z)}{\alpha\ell_1(\theta_1)(Z')\ell_2(\theta_1)(Z) + (1-\alpha)\ell_1(\theta_2)(Z')\ell_2(\theta_2)(Z)}\right)\,.
\]
Thus, $R_T^{(2)}(\alpha) \ge R_T^{(1)}(\alpha)$ by Fubini's theorem.

\noindent\textbf{Step Three:} This step finishes the induction step, based on results established in the previous two steps.  For any $0<\alpha<\beta<1$, we have
\begin{eqnarray*}
R_T(\beta) &=& \beta R_T^{(1)}(\beta) + (1-\beta)R_T^{(2)}(\beta) \\
&\le& \beta R_T^{(1)}(\alpha) + (1-\beta)R_T^{(2)}(\alpha) \\
&\le& \alpha R_T^{(1)}(\alpha) + (1-\alpha)R_T^{(2)}(\alpha) \\
&=& R_T(\alpha)\,,
\end{eqnarray*}
where the equalities are from Equation~\ref{eqn:dp-recurrence}, the first inequality is from the monotonicity of $R_T^{(i)}(\cdot)$ established in Step One, and the second is from \eqnref{eqn:R1-lt-R2}.  We have thus proved that $R_t(\cdot)$ is a decreasing function for $t=T$, and finished the inductive step. \qed
\end{proof}

\subsection{Markov Property}

Another fundamental, although intuitive, property of Thompson Sampling is that the posterior distribution it maintains over the set of models forms a Markov process.  This property is used in the proofs of multiple propositions in later sections.

\begin{lemma}{(Markov Property)} \label{lem:markov}
Regardless of the true underlying model, the stochastic process $( p_{t})_{t \geq 1}$ is a Markov process.
\end{lemma}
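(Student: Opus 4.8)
The plan is to produce an explicit, time-homogeneous transition kernel $P$ on the space of probability distributions over $\Theta$ such that $\E[g(p_{t+1}) \mid \mathcal{H}_t] = (Pg)(p_t)$ for every bounded measurable $g$, and then deduce the Markov property with respect to the natural filtration of $(p_t)_{t\ge1}$ by the tower property. The starting observation is that Bayes' rule expresses $p_{t+1}$ as a fixed measurable function of $p_t$, the pulled arm $I_t$, and the observed reward $X_{I_t,t}$: setting
\[
\Phi(p,i,x)(\theta) = \frac{p(\theta)\,\ell_i(\theta)(x)}{\sum_{\eta\in\Theta} p(\eta)\,\ell_i(\eta)(x)}\,,
\]
one has $p_{t+1} = \Phi(p_t, I_t, X_{I_t,t})$, with $\Phi$ not depending on $t$ nor on any other part of the history.

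The crux is to show that the conditional law of the pair $(I_t, X_{I_t,t})$ given $\mathcal{H}_t$ is a measurable function of $p_t$ alone. On the one hand, $I_t$ is drawn, using external randomness independent of the past, according to the posterior optimality probabilities $\rho(p_t)(i) := \P_{\theta\sim p_t}\big(i\in\argmax_j \mu_j(\theta)\big)$ (fixing any tie-breaking rule), which is a function of $p_t$ only. On the other hand, conditionally on $\mathcal{H}_t$ and on $\{I_t=i\}$, the reward $X_{i,t}$ is distributed as $\nu_i(\theta^*)$ by the i.i.d.\ reward assumption, and in particular its conditional distribution does not depend on $\mathcal{H}_t$. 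Hence the conditional distribution of $(I_t, X_{I_t,t})$ given $\mathcal{H}_t$ is the kernel "draw $i\sim\rho(p_t)$, then $x\sim\nu_i(\theta^*)$", and pushing this forward through $\Phi(p_t,\cdot,\cdot)$ defines the desired kernel $P$, which is independent of $t$.

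To finish, note that each $p_s$ is a deterministic function of $\mathcal{H}_s$ and $\mathcal{H}_s\subseteq\mathcal{H}_t$ for $s\le t$, so $\sigma(p_1,\ldots,p_t)\subseteq\mathcal{H}_t$; applying the tower property,
\[
\E[g(p_{t+1})\mid p_1,\ldots,p_t] = \E\big[(Pg)(p_t)\mid p_1,\ldots,p_t\big] = (Pg)(p_t) = \E[g(p_{t+1})\mid p_t]\,,
\]
which is precisely the Markov property (and, since $p_t$ is $\mathcal{H}_t$-measurable, the same computation shows $(p_t)$ is in fact Markov with respect to $(\mathcal{H}_t)$, which is the form actually used later). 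I expect the second step to be the only one requiring genuine care: verifying that the new observation's conditional distribution given the \emph{entire} history collapses to a function of the current posterior is exactly where the i.i.d.\ reward assumption (together with the independence of the external randomization used to sample $I_t$) is essential — without it, the reward at time $t$ could carry information about the past beyond what is summarized by $p_t$, and the Markov property would break.
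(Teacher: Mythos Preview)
Your proposal is correct and follows essentially the same approach as the paper: write $p_{t+1}$ as a deterministic function of $(p_t, I_t, X_{I_t,t})$ via Bayes' rule, observe that the conditional law of $(I_t, X_{I_t,t})$ given the past depends only on $p_t$ (because $I_t$ is sampled from the current posterior using fresh randomness and $X_{i,t}$ is i.i.d.\ under the fixed true model $\theta^*$), and conclude. Your version is more carefully written---you explicitly construct the transition kernel, distinguish the history filtration $(\mathcal{H}_t)$ from the natural filtration $\sigma(p_1,\ldots,p_t)$, and invoke the tower property to pass between them---whereas the paper's proof is a two-sentence sketch; but the underlying argument is identical.
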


\begin{proof}[of \lemref{lem:markov}]
Let $\theta^*$ be the true underlying model. Recall that $$ p_{t+1}(\theta) = \frac{ p_{t}(\theta)  \ell_{I_t}(\theta)(X_{I_t,t}) } { \sum_{\eta \in \Theta} p_{t}(\eta)  \ell_{I_t}(\eta)(X_{I_t,t})   } . $$
Note that $I_t$ is drawn from $p_t$ independent of the past and $X_{i,t}$ is drawn from $\mu_i(\theta^*)$. Hence, the distribution of $p_{t+1}$ only depends on $p_t$ and $\mu_i(\theta), i=1,\hdots, K, \theta \in \Theta$. The reward distributions  $\mu_i(\theta)$ are fixed before the evolution of the process $p_t$. Thus, the distribution of $p_{t+1}$ only depends on $p_t$, not on $p_s, s=1, \hdots, t-1$. This shows that $p_t$ is a Markov process. \qed
\end{proof}

\section{Proof of \corref{cor:general_lb}}   \label{sec:general_lb-proof}

\begin{proof}[of \corref{cor:general_lb}]
Let $\tilde{p}_1$ be the prior over $\{\theta_1, \theta_2 \}$ defined as $\tilde{p}_1(\theta_1)=p_1(\theta^*)$ and  $\tilde{p}_1(\theta_2)=p_1(\Theta \backslash \{ \theta^* \})$. By \thmref{thm:lb-poor}, there exists a \theModel\ problem instance $\mathcal{P}$ (defined by $\nu_i(\theta_j), i,j=1,2$) where the regret of TS with prior $\tilde{p}_1$ is $\Omega(\sqrt{\frac{T}{\tilde{p}_1(\theta_1)}})$ for small $\tilde{p}_1(\theta_1)$. Now consider the problem instance $\mathcal{Q}$ for the general $\Theta$ case defined as $\nu_i(\theta^*)=\nu_i(\theta_1)$ for $i=1,2$ and $\nu_i(\theta)=\nu_i(\theta_2)$ for $i=1,2$ and $\theta \in \Theta \backslash \{ \theta^* \}$. It is easy to see that Thompson Sampling with prior $p_1$ under $\mathcal{Q}$ has exactly the same regret as Thompson Sampling with prior $\tilde{p}_1$ under $\mathcal{P}$. Thus, under $\mathcal{Q}$, the regret of Thompson Sampling with prior $p_1$ is $\Omega(\sqrt{\frac{T}{\tilde{p}_1(\theta_1)}}) = \Omega(\sqrt{\frac{T}{p_1(\theta^*)}})$ for small $p_1(\theta^*)$. The $ \Omega(\sqrt{ (1- p_1(\theta^*))T })$ lower bound for large $p_1(\theta^*)$ can be similarly obtained. \qed
\end{proof}

\section{Proof of \thmref{thm:ub}} \label{sec:ub-proof}

In the main text, we only sketch the proof for the first part of the theorem.  Here, a full proof is presented, which requires an additional proposition that plays a similar role as Propositions~\ref{pro:anchor} and \ref{pro:anchor2}.  Its proof is given in \secref{sec:anchor3}.

\begin{proposition} \label{pro:anchor3}
Consider the \theModel\ case and assume that Assumption~\ref{ass:smooth} holds. We also assume that $\Delta \geq \frac{1}{\sqrt{(1-p_1(\theta_1))T}}$ and define the function $\mathrm{Q}_T(\cdot)$ by $\mathrm{Q}_T(x) = \mathrm{R}_T(1-x)$. Then for any $T>0$ and  $p_1(\theta_2) \leq \frac{1}{8s^2}$, we have
\begin{align*}
&\mathrm{Q}_T(p_1(\theta_2)) - \mathrm{Q}_T(\frac{1}{4s^2}p_1(\theta_2)) \\
&\leq 360 s^4 \sqrt{p_1(\theta_2)T}  +  \frac{4}{11s}\left(  \mathrm{Q}_T\left(4s^2 p_1(\theta_2) \right) - \mathrm{Q}_T(p_1(\theta_2)) \right)\,.
\end{align*}
\end{proposition}

\begin{proof}[of \thmref{thm:ub}]

\noindent\textbf{Proof of the First Inequality:} Let $\beta=  96 \log \frac{3s}{2} + 6$. By Propositions~\ref{pro:anchor} and \ref{pro:anchor2}, 
\begin{align*}
\mathrm{R}_T\left(\frac{1}{3}\right)  &\leq  ( 144s +1 )\sqrt{T} + \frac{1}{2} \mathrm{R}_T\left(\frac{1}{6s} \right) \\
 &\leq   ( 144s +1 )\sqrt{T}  +  \frac{1}{2} \beta \sqrt{6sT} + \frac{1}{2} \mathrm{R}_T \left( \frac{1}{3} \right)\,.
\end{align*}
Therefore, 
$$\mathrm{R}_T\left(\frac{1}{3}\right)  \leq  \left( 288s + \beta \sqrt{6s} + 2 \right) \sqrt{T}  .$$
Using again Proposition~\ref{pro:anchor}, one has for any $p_1(\theta_1) \in (0,1)$,
\begin{eqnarray*} 
\mathrm{R}_T(p_1(\theta_1)) &\leq& \beta \sqrt{\frac{T}{p_1(\theta_1)}} + \mathrm{R}_T\left(\frac{1}{3}\right)    \\
   &\leq & \beta \sqrt{\frac{T}{p_1(\theta_1)}} +  \left( 288s + \beta \sqrt{6s} + 2 \right) \sqrt{T}   \\
  &\leq & \beta \sqrt{\frac{T}{p_1(\theta_1)}} +  \left( 288s + \beta \sqrt{6s} + 2 \right) \sqrt{\frac{T}{p_1(\theta_1)}}  \\
  &\leq & \left( 288s + \beta (\sqrt{6s}+1) + 2 \right) \sqrt{\frac{T}{p_1(\theta_1)}} \\
  &\leq& 1490s \sqrt{\frac{T}{p_1(\theta_1)}}\,,
\end{eqnarray*}
where the last step follows from the inequalities $\beta=  96 \log \frac{3s}{2} + 6 \leq 300 \sqrt{s}$ and $\sqrt{6s}+1 \leq 4\sqrt{s}$ for $s > 1$.

\noindent\textbf{Proof of the Second Inequality:}
Fix $p_1(\theta_1) \geq 1-\frac{1}{8s^2}$. First, if $\Delta \leq \frac{1}{\sqrt{(1-p_1(\theta_1))T}}$, then by Lemma~\ref{lem:tech-2b2}(d), $\mathrm{R}_T(p_1(\theta_1)) \leq (1-p_1(\theta_1))\Delta T \leq \sqrt{(1-p_1(\theta_1))T}$. Hence, we can assume that $\Delta \geq \frac{1}{\sqrt{(1-p_1(\theta_1))T}}$.  It follows from \propref{pro:anchor3}  that for any integer  $h \geq 1$, as long as $(4s^2)^{h-1}p_1(\theta_2) \leq \frac{1}{8s^2}$, one has
\begin{eqnarray*}
 \lefteqn{\mathrm{Q}_T(p_1(\theta_2)) - \mathrm{Q}_T(\frac{1}{4s^2}p_1(\theta_2))} \\
 &\leq & \sum_{k=0}^{h-1}  \left( \frac{4}{11s} \right)^k 360s^4 \sqrt{(4s^2)^k p_1(\theta_2)T} + \left( \frac{4}{11s} \right)^h \mathrm{Q}_T((4s^2)^h p_1(\theta_2))  \\ 
 &\leq & \sum_{k=0}^{h-1}  \left( \frac{8}{11} \right)^k 360s^4 \sqrt{p_1(\theta_2)T} + \left( \frac{4}{11s} \right)^h \mathrm{Q}_T((4s^2)^h p_1(\theta_2))  \\ 
 &\leq &  1320 s^4 \sqrt{p_1(\theta_2)T} + \left( \frac{4}{11s} \right)^h \mathrm{Q}_T((4s^2)^h p_1(\theta_2)) \,.
\end{eqnarray*}
Let $h$ be the smallest integer such that $(4s^2)^{h}p_1(\theta_2) > \frac{1}{8s^2}$. On one hand,  $(4s^2)^{h-1}p_1(\theta_2) \leq \frac{1}{8s^2}$ implies that $1 - (4s^2)^{h}p_1(\theta_2) \geq \frac{1}{2}$. Using the first inequality of Theorem~\ref{thm:ub}  and the fact that the function $\mathrm{R}_T(\cdot) $ is decreasing, one has 
\begin{align*}
\mathrm{Q}_T((4s^2)^h p_1(\theta_2)) &= \mathrm{R}_T(1-(4s^2)^h p_1(\theta_2)) \\
&\leq \mathrm{R}_T(\frac{1}{2})
\leq 1490s \sqrt{2T}\,.
\end{align*}
On the other hand, $(4s^2)^{h}p_1(\theta_2) > \frac{1}{8s^2}$ implies that
$ 2\sqrt{2}s \sqrt{p_1(\theta_2)} > \left(\frac{1}{2s}\right)^{h} > \left( \frac{4}{11s} \right)^h. $
Hence, for $p_1(\theta_2) \leq \frac{1}{8s^2}$,
\begin{align*}
\mathrm{Q}_T(p_1(\theta_2)) - \mathrm{Q}_T(\frac{1}{4s^2}p_1(\theta_2)) &\leq (1320 s^4+5960 s^2) \sqrt{p_1(\theta_2)T} \\
&\leq 7280 s^4 \sqrt{p_1(\theta_2)T}\,.
\end{align*}
Thus, for any integer $m$, one has
\begin{eqnarray*}
  \mathrm{Q}_T(p_1(\theta_2)) 
 &\leq & \sum_{k=0}^{m-1}  7280 s^4 \sqrt{\left( \frac{1}{4s^2} \right)^k p_1(\theta_2)T} +  \mathrm{Q}_T \left( \left( \frac{1}{4s^2} \right)^m p_1(\theta_2) \right)  \\ 
 &\leq & \sum_{k=0}^{m-1}  \left( \frac{1}{2} \right)^k 7280 s^4 \sqrt{ p_1(\theta_2)T} +  \mathrm{R}_T \left(1- \left( \frac{1}{4s^2} \right)^m p_1(\theta_2) \right)  \\ 
 &\leq &  14560 s^4 \sqrt{ p_1(\theta_2)T} +  \left( \frac{1}{4s^2} \right)^m p_1(\theta_2) \Delta T \,,
\end{eqnarray*}
where we have used Lemma~\ref{lem:tech-2b2}(d) in the last step. Finally, letting $m$ go to infinity, we get \\ $\mathrm{Q}_T(p_1(\theta_2)) \leq 14560 s^4 \sqrt{ p_1(\theta_2)T}  $, that is, $\mathrm{R}_T(p_1(\theta_1)) \leq 14560 s^4 \sqrt{ (1-p_1(\theta_1))T}$.
\qed
\end{proof}

\section{Proof of \propref{pro:anchor}} \label{sec:anchor}

\begin{proof}[of \propref{pro:anchor}]
In this proof, we consider the case where $\theta_1$ is the true reward-generating model. We use the notation defined in Lemma~\ref{lem:stop}. First, the desired inequality is trivial if $p_1(\theta_1) \geq \frac{1}{3}$ since $\mathrm{R}_T(\cdot)$ is a decreasing function by \lemref{lem:decreasing_regret}. Let $p_1(\theta_1) \leq \frac{1}{3}$, $A=\frac{3}{2} p_1(\theta_1)$ and take $B>0$ such that $B \leq \frac{1}{2}p_1(\theta_1)$. The exact value of $B$ will be specified later. It is easy to see that $A \leq \frac{1}{2}$ and $B \leq \frac{1}{2} \leq 1-A$. We decompose the rest of the proof into three steps. 

\noindent\textbf{Step One:} This step is devoted to upper bounding $\E^{\theta_1}[\tau_A \wedge \tau_B -1]$. Note that by the definition of $\tau_A$ and $\tau_B$, one has for $t \leq \tau_A \wedge \tau_B -1$, $B \leq p_t(\theta_1) \leq A \leq \frac{1}{2}$ and $p_t(\theta_2) \geq 1-A \geq \frac{1}{2} \geq B $. Thus, by Lemma~\ref{lem:tech-2b2}(a), we have for $t \leq \tau_A \wedge \tau_B -1$,
\begin{eqnarray*}
 \lefteqn{\E_t^{\theta_1} \left[ \log(p_t(\theta_1)^{-1}) - \log(p_{t+1}(\theta_1)^{-1}) \right]} \\
& \geq &   \frac{1}{2} p_t(\theta_1)p_t(\theta_2)^2  \Delta_1^2 + \frac{1}{2} p_t(\theta_2)^3  \Delta_2^2   \\
&\geq & \frac{p_t(\theta_2)^2 B}{2}(\Delta_1^2 + \Delta_2^2)  \, \geq  \frac{B\Delta^2}{16}  \,,
\end{eqnarray*}
where we have used $\Delta_1^2+\Delta_2^2 \geq \frac{1}{2}(\Delta_1+\Delta_2)^2 \geq \frac{1}{2}\Delta^2$. Rearranging, we get for $t \leq \tau_A \wedge \tau_B -1$,
$$\E_t^{\theta_1} \left[ \log(p_{t+1}(\theta_1)^{-1}) + (t+1)\frac{B\Delta^2}{16} \right] 
  \leq \log(p_t(\theta_1)^{-1}) + t\frac{B\Delta^2}{16} .$$
In other words, $\left( \log(p_t(\theta_1)^{-1}) + t\frac{B\Delta^2}{16} \right)_{t \leq \tau_A \wedge \tau_B}$ is a supermartingale. 

Now, using Doob's optional stopping theorem, one has for any $t \geq 1$, 
$$\E^{\theta_1} \left[ \log(p_{t \wedge \tau_A \wedge \tau_B }(\theta_1)^{-1}) + (t \wedge \tau_A \wedge \tau_B)\frac{B\Delta^2}{16} \right]   \leq \log(p_1(\theta_1)^{-1}) + \frac{B\Delta^2}{16} .$$
Also, by Lemma~\ref{lem:smooth}, $\log(p_{t \wedge \tau_A \wedge \tau_B }(\theta_1)^{-1}) \leq \log\left(\frac{s}{B}\right)$ for any $t \leq 1$.  Using Lebesgue's dominated convergence theorem and the monotone convergence theorem, 
\begin{eqnarray*}
& &\E^{\theta_1} \left[ \log(p_{t \wedge \tau_A \wedge \tau_B }(\theta_1)^{-1}) + (t \wedge \tau_A \wedge \tau_B)\frac{B\Delta^2}{16} \right] \\
&&\longrightarrow \E^{\theta_1} \left[ \log(p_{\tau_A \wedge \tau_B }(\theta_1)^{-1}) + (\tau_A \wedge \tau_B)\frac{B\Delta^2}{16} \right]   \end{eqnarray*}
as $t \rightarrow +\infty$. Hence,
$$ \E^{\theta_1}[\tau_A \wedge \tau_B -1] \leq  \frac{16}{B\Delta^2} \E^{\theta_1} \left[ \log \frac{p_{\tau_A \wedge \tau_B }(\theta_1)}{p_1(\theta_1)}  \right]    \leq \frac{16}{B\Delta^2}  \log \frac{sA}{p_1(\theta_1)} 
=  \frac{16}{B\Delta^2}  \log \frac{3s}{2}  ,  $$
where we have used Lemma~\ref{lem:smooth} in the second last step. 

\noindent\textbf{Step Two:} In this step, we establish a recurrence inequality for the regret function $\mathrm{R}_T(\cdot)$. By \lemref{lem:markov}, $(p_t(\theta_1))_{t\geq 1}$ and  $(p_t(\theta_2))_{t\geq 1}$ are both Markov processes. Thus, the regret of Thompson Sampling can be decomposed as follows
\begin{eqnarray*}
\mathrm{R}_T(p_1(\theta_1))  &=& \Delta \cdot \E^{\theta_1}  \sum_{t=1}^T  p_t(\theta_2)  \\
   &=& \Delta \cdot \E^{\theta_1}  \sum_{t=1}^{\tau_A \wedge \tau_B -1}  p_t(\theta_2)
         + q_{B,A} \cdot  \E^{\theta_1} [  \mathrm{R}_T(p_{\tau_B}(\theta_1)) | \tau_A >\tau_B ]  \\
   &&  +\, q_{A,B} \cdot  \E^{\theta_1} [  \mathrm{R}_T(p_{\tau_A}(\theta_1)) | \tau_A <\tau_B ]   \\
   &\leq &  \Delta \cdot \E^{\theta_1}  [\tau_A \wedge \tau_B -1]
        + q_{B,A}  \Delta T  + \E^{\theta_1} [  \mathrm{R}_T(p_{\tau_A}(\theta_1)) | \tau_A <\tau_B ]   \\
   &\leq &  \frac{16}{B\Delta}  \log \frac{3s}{2}
        + \frac{B}{p_1(\theta_1)} \Delta T  +  \mathrm{R}_T \left( \frac{3}{2} p_1(\theta_1) \right) ,  
\end{eqnarray*} 
where in the last step, we have used the facts that $q_{B,A} \leq \frac{B}{p_1(\theta_1)}$ (by \lemref{lem:stop}), $p_{\tau_A}(\theta_1) \geq A =  \frac{3}{2} p_1(\theta_1) $, and $\mathrm{R}_T(\cdot)$ is a decreasing function (\lemref{lem:decreasing_regret}).

\noindent\textbf{Step Three:} The recurrence inequality established in the previous step and an appropriate choice of the parameter $B$ allow us to get the desired upper bound on $\mathrm{R}_T(p_1(\theta_1))$. 
On one side, if $\Delta \leq 2\sqrt{\frac{1}{p_1(\theta_1)T}}$, then $\mathrm{R}_T(p_1(\theta_1))\leq \Delta T \leq 2\sqrt{\frac{T}{p_1(\theta_1)}}$. On the other side, if $\Delta > 2\sqrt{\frac{1}{p_1(\theta_1)T}}$, we take $B=\frac{1}{\Delta} \sqrt{\frac{p_1(\theta_1)}{T}}$. This choice of $B$ is eligible since $\frac{1}{\Delta} \sqrt{\frac{p_1(\theta_1)}{T}} \leq \frac{1}{2} p_1(\theta_1)$. Then for any $p_1(\theta_1) \leq \frac{1}{3}$,
\begin{eqnarray*}
\mathrm{R}_T(p_1(\theta_1))  &\leq &  \frac{16}{B\Delta}  \log \frac{3s}{2}
        + \frac{B}{p_1(\theta_1)} \Delta T  +  \mathrm{R}_T \left( \frac{3}{2} p_1(\theta_1) \right) \\ 
 & = & \left( 16 \log \frac{3s}{2} + 1 \right) \sqrt{\frac{T}{p_1(\theta_1)}}
          + \mathrm{R}_T \left( \frac{3}{2} p_1(\theta_1) \right) .   
\end{eqnarray*}
It follows that for any integer  $h \geq 1$, as long as $\left(\frac{3}{2}\right)^{h-1} p_1(\theta_1) \leq \frac{1}{3}$, one has
\begin{eqnarray*}
\mathrm{R}_T(p_1(\theta_1)) &\leq & \sum_{k=0}^{h-1} \left( 16 \log \frac{3s}{2} + 1 \right) \sqrt{\left(\frac{2}{3}\right)^k\frac{T}{p_1(\theta_1)}} + \mathrm{R}_T \left( \left(\frac{3}{2}\right)^h p_1(\theta_1) \right) \\ 
   &\leq & \left(1-\sqrt{\frac{2}{3}}\right)^{-1}  \left( 16 \log \frac{3s}{2} + 1 \right) \sqrt{\frac{T}{p_1(\theta_1)}} + \mathrm{R}_T \left( \left(\frac{3}{2}\right)^h p_1(\theta_1) \right) \\ 
   &\leq &  \left( 96 \log \frac{3s}{2} + 6 \right) \sqrt{\frac{T}{p_1(\theta_1)}} + \mathrm{R}_T \left( \left(\frac{3}{2}\right)^h p_1(\theta_1) \right) .     
\end{eqnarray*}
Finally, by taking $h$ to be the smallest integer such that $\left(\frac{3}{2}\right)^h p_1(\theta_1) > \frac{1}{3}$ and using the fact that the function $\mathrm{R}_T(\cdot) $ is decreasing (\lemref{lem:decreasing_regret}), we get
$$\mathrm{R}_T(p_1(\theta_1)) \leq  \left( 96 \log \frac{3s}{2} + 6 \right) \sqrt{\frac{T}{p_1(\theta_1)}} + \mathrm{R}_T \left( \frac{1}{3} \right) , $$
which completes the proof. \qed
\end{proof}

\section{Proof of \propref{pro:anchor2}} \label{sec:anchor2}

\begin{proof}[of \propref{pro:anchor2}]
In this proof, we consider the case where $\theta_1$ is the true reward-generating model. We use the notation defined in Lemma~\ref{lem:stop}. Fix $T>0$ and $p_1(\theta_1) \leq \frac{1}{2}$. Let $B=\frac{1}{2} p_1(\theta_1)$ and take $A > p_1(\theta_1) $. The exact value of $A$ will be specified later. We decompose the proof into three steps. \\

\noindent\textbf{Step One:} This step is devoted to upper bounding $\E^{\theta_1}[\tau_A \wedge \tau_B -1]$. By \lemref{lem:tech-2b2}(c), we have for $t \leq \tau_A \wedge \tau_B -1$,
\begin{eqnarray*}
 \lefteqn{\E_t^{\theta_1} \left[ (1-p_{t+1}(\theta_1))^{-1} - (1-p_t(\theta_1))^{-1}  \right]} \\
 &\geq& \frac{p_{t}(\theta_1)^2}{2p_{t}(\theta_2)} \Delta_1^2 +  \frac{p_t(\theta_1)}{2} \Delta_2^2  \\
 &\geq& \frac{1}{2} B^2 \Delta_1^2 +  \frac{1}{2} B \Delta_2^2  \, \geq \, \frac{B^2 \Delta^2}{4} \,,
\end{eqnarray*}
where we have used $\Delta_1^2+\Delta_2^2 \geq \frac{1}{2}(\Delta_1+\Delta_2)^2 \geq \frac{1}{2}\Delta^2$. Rearranging, we get for $t \leq \tau_A \wedge \tau_B -1$,
$$ \E_t^{\theta_1} \left[ (1-p_{t+1}(\theta_1))^{-1} - (t+1) \frac{B^2 \Delta^2}{4} \right] \geq  (1-p_t(\theta_1))^{-1} - t \frac{B^2 \Delta^2}{4}\,.$$
In other words, $\left( (1-p_t(\theta_1))^{-1} - t \frac{B^2 \Delta^2}{4} \right)_{t \leq \tau_A \wedge \tau_B}$ is a submartingale.

Now, using Doob's optional stopping theorem, one has for any $t \geq 1$, 
$$ \E^{\theta_1} \left[ (1-p_{t \wedge \tau_A \wedge \tau_B}(\theta_1))^{-1} - (t \wedge \tau_A \wedge \tau_B) \frac{B^2 \Delta^2}{4} \right] \geq  (1-p_1(\theta_1))^{-1} -  \frac{B^2 \Delta^2}{4}.$$
Moreover, by Lemma~\ref{lem:smooth}, 
$$(1-p_{t \wedge \tau_A \wedge \tau_B}(\theta_1))^{-1} = p_{t \wedge \tau_A \wedge \tau_B}(\theta_2)^{-1}   \leq s \cdot p_{t \wedge \tau_A \wedge \tau_B -1}(\theta_2)^{-1} \leq \frac{s}{1-A} $$
for any $t \leq 1$.  Using Lebesgue's dominated convergence theorem and the monotone convergence theorem, 
\begin{eqnarray*}
&& \E^{\theta_1} \left[ (1-p_{t \wedge \tau_A \wedge \tau_B}(\theta_1))^{-1} - (t \wedge \tau_A \wedge \tau_B) \frac{B^2 \Delta^2}{4} \right] \\
&& \longrightarrow \E^{\theta_1} \left[ (1-p_{\tau_A \wedge \tau_B}(\theta_1))^{-1} - (\tau_A \wedge \tau_B) \frac{B^2 \Delta^2}{4} \right]
\end{eqnarray*}
as $t \rightarrow +\infty$. Hence,
\begin{eqnarray*}
\E^{\theta_1}[\tau_A \wedge \tau_B -1] &\leq& \frac{4}{B^2 \Delta^2} \E^{\theta_1} \left[ (1-p_{\tau_A \wedge \tau_B}(\theta_1))^{-1} \right] \\
&\leq& \frac{4s}{B^2 \Delta^2 (1-A)} \\
&=& \frac{16s}{p_1(\theta_1)^2 \Delta^2 (1-A)} \,.
\end{eqnarray*} 

\noindent\textbf{Step Two:} In this step, we establish a recurrence inequality for the regret function $\mathrm{R}_T(\cdot)$. By \lemref{lem:markov}, $(p_t(\theta_1))_{t\geq 1}$ and  $(p_t(\theta_2))_{t\geq 1}$ are both Markov processes. Thus, the regret of Thompson Sampling can be decomposed as follows
\begin{eqnarray*}
\mathrm{R}_T(p_1(\theta_1))  &=& \Delta \cdot \E^{\theta_1}  \sum_{t=1}^T  p_t(\theta_2)  \\
   &=& \Delta \cdot \E^{\theta_1}  \sum_{t=1}^{\tau_A \wedge \tau_B -1}  p_t(\theta_2)
         + q_{B,A} \cdot  \E^{\theta_1} [  \mathrm{R}_T(p_{\tau_B}(\theta_1)) | \tau_A >\tau_B ]  \\
   &&  +\, q_{A,B} \cdot  \E^{\theta_1} [  \mathrm{R}_T(p_{\tau_A}(\theta_1)) | \tau_A <\tau_B ]   \\
   &\leq &  \Delta  \E^{\theta_1}  [\tau_A \wedge \tau_B -1]
        + q_{B,A} \cdot  \E^{\theta_1} [  \mathrm{R}_T(p_{\tau_B}(\theta_1)) | \tau_A >\tau_B ] \\
  &&  +  \E^{\theta_1} [  \mathrm{R}_T(p_{\tau_A}(\theta_1)) | \tau_A <\tau_B ]     \\
   &\leq &  \frac{16s}{p_1(\theta_1)^2 \Delta (1-A)} 
        +  \frac{1}{2}  \mathrm{R}_T\left(\frac{1}{2s} p_1(\theta_1)\right)  +  (1-A)\Delta T   ,  
\end{eqnarray*} 
where in the last step, we have used the facts that  $q_{B,A} \leq \frac{B}{p_1(\theta_1)} = \frac{1}{2}$ (by Lemma~\ref{lem:stop}), $p_{\tau_B}(\theta_1) \geq \frac{B}{s} = \frac{1}{2s} p_1(\theta_1) $ (by Lemma~\ref{lem:smooth}),  $ \mathrm{R}_T(p_{\tau_A}(\theta_1)) \leq   (1-p_{\tau_A}(\theta_1))\Delta T \leq (1-A)\Delta T $ (by Lemma~\ref{lem:tech-2b2}(d))and $\mathrm{R}_T(\cdot)$ is a decreasing function (\lemref{lem:decreasing_regret})

\noindent\textbf{Step Three:} Finally, we establish the desired recurrence inequality by appropriately choosing the value of $A$. On one side, if $\Delta \leq \frac{2}{\sqrt{T}}$, then $\mathrm{R}_T(p_1(\theta_1))\leq \Delta T \leq 2\sqrt{T}$. On the other side, if $\Delta > \frac{2}{\sqrt{T}}$, we take $A=1-\frac{1}{\Delta\sqrt{T}}$. This choice of $A$ is eligible since $1-\frac{1}{\Delta\sqrt{T}} \geq \frac{1}{2} \geq p_1(\theta_1)$. Then for any $p_1(\theta_1) \leq \frac{1}{2}$,
\begin{eqnarray*}
\mathrm{R}_T(p_1(\theta_1))  &\leq & \frac{16s}{p_1(\theta_1)^2 \Delta (1-A)} 
        +  \frac{1}{2}  \mathrm{R}_T\left(\frac{1}{2s} p_1(\theta_1)\right)  +  (1-A)\Delta T   \\ 
 &\leq & \left(  \frac{16s}{p_1(\theta_1)^2} +1 \right) \sqrt{T}
          + \frac{1}{2}  \mathrm{R}_T\left(\frac{1}{2s} p_1(\theta_1)\right).   
\end{eqnarray*}
\qed
\end{proof} 

\section{Proof of \propref{pro:anchor3}} \label{sec:anchor3}

\begin{proof}[of \propref{pro:anchor3}]
In this proof, we consider the case where $\theta_1$ is the true reward-generating model. We use the notation defined in Lemma~\ref{lem:stop}. Fix $T>0$ and $p_1(\theta_1) \geq 1-\frac{1}{8s^2}$. Let $A=1-\frac{1}{4s^2}(1- p_1(\theta_1))$ and $B=1-4s(1- p_1(\theta_1))$. Then it is easy to see that $A > p_1(\theta_1) > B$ and $B \geq \frac{1}{2}$. The proof is decomposed into two steps.

\noindent\textbf{Step One:} This step is devoted to upper bounding $\E^{\theta_1}[\tau_A \wedge \tau_B -1]$. By \lemref{lem:tech-2b2}(c), we have for $t \leq \tau_A \wedge \tau_B -1$,
\begin{eqnarray*}
 \lefteqn{\E_t^{\theta_1} \left[ (1-p_{t+1}(\theta_1))^{-1} - (1-p_t(\theta_1))^{-1}  \right]} \\
 &\geq& \frac{p_{t}(\theta_1)^2}{2p_{t}(\theta_2)} \Delta_1^2 +  \frac{p_t(\theta_1)}{2} \Delta_2^2  \\
 &\geq&  \frac{1}{2} B^2 \Delta_1^2 + \frac{1}{2} B \Delta_2^2 \, \geq \, \frac{\Delta^2}{16} 
\end{eqnarray*}
where we have used $\Delta_1^2+\Delta_2^2 \geq \frac{1}{2}(\Delta_1+\Delta_2)^2 \geq \frac{1}{2}\Delta^2$. Rearranging, we get for $t \leq \tau_A \wedge \tau_B -1$,
$$ \E_t^{\theta_1} \left[ (1-p_{t+1}(\theta_1))^{-1} - (t+1)\frac{B^2}{16}  \right] \geq  (1-p_t(\theta_1))^{-1} - t\frac{B^2}{16} .$$
In other words, $\left( (1-p_t(\theta_1))^{-1} - t\frac{\Delta^2}{16}\right)_{t \leq \tau_A \wedge \tau_B}$ is a submartingale.

Now, using Doob's optional stopping theorem, one has for any $t \geq 1$, 
$$ \E^{\theta_1} \left[ (1-p_{t \wedge \tau_A \wedge \tau_B}(\theta_1))^{-1} - (t \wedge \tau_A \wedge \tau_B)\frac{\Delta^2}{16}  \right] \geq  (1-p_1(\theta_1))^{-1} - \frac{\Delta^2}{16} .$$
Moreover, by Lemma~\ref{lem:smooth}, 
$$(1-p_{t \wedge \tau_A \wedge \tau_B}(\theta_1))^{-1} = p_{t \wedge \tau_A \wedge \tau_B}(\theta_2)^{-1}   \leq s \cdot p_{t \wedge \tau_A \wedge \tau_B -1}(\theta_2)^{-1} \leq \frac{s}{1-A} $$
for any $t \geq 1$.  Using Lebesgue's dominated convergence theorem and the monotone convergence theorem, 
\begin{eqnarray*}
&& \E^{\theta_1} \left[ (1-p_{t \wedge \tau_A \wedge \tau_B}(\theta_1))^{-1} - (t \wedge \tau_A \wedge \tau_B)\frac{\Delta^2}{16}   \right] \\
&& \longrightarrow \E^{\theta_1} \left[ (1-p_{\tau_A \wedge \tau_B}(\theta_1))^{-1} - (\tau_A \wedge \tau_B)\frac{\Delta^2}{16}  \right]   \end{eqnarray*} 
as $t \rightarrow +\infty$. Hence,
$$ \E^{\theta_1}[\tau_A \wedge \tau_B -1] \leq \frac{16}{\Delta^2} \E^{\theta_1} \left[ (1-p_{\tau_A \wedge \tau_B}(\theta_1))^{-1} \right]  \leq  \frac{16s}{\Delta^2 (1-A)} . $$ 

\noindent\textbf{Step Two:} In this step, we establish the desired recurrence inequality.  By \lemref{lem:markov}, $(p_t(\theta_1))_{t\geq 1}$ and $(p_t(\theta_2))_{t\geq 1}$ are both Markov processes. Thus, the regret of Thompson Sampling can be decomposed as follows
\begin{eqnarray*}
\mathrm{R}_T(p_1(\theta_1))  &=& \Delta \cdot \E^{\theta_1}  \sum_{t=1}^T  p_t(\theta_2)  \\
   &=& \Delta \cdot \E^{\theta_1}  \sum_{t=1}^{\tau_A \wedge \tau_B -1}  p_t(\theta_2)
       +\, q_{A,B} \cdot  \E^{\theta_1} [  \mathrm{R}_T(p_{\tau_A}(\theta_1)) | \tau_A <\tau_B ]   \\ 
   &&  + \, q_{B,A} \cdot  \E^{\theta_1} [  \mathrm{R}_T(p_{\tau_B}(\theta_1)) | \tau_A >\tau_B ]  \\
   &\leq &  \Delta (1-B) \E^{\theta_1}  [\tau_A \wedge \tau_B -1] \\
  && + \, q_{A,B} \cdot  \mathrm{R}_T(A) + \, q_{B,A} \cdot   \mathrm{R}_T(1-s(1-B))  \\
   &\leq &  256 s^4 \sqrt{(1-p_1(\theta_1))T} + \, q_{A,B} \cdot  \mathrm{R}_T(1-\frac{1}{4s^2}(1- p_1(\theta_1))) \\
    & &  + \, q_{B,A} \cdot   \mathrm{R}_T\left(1-4s^2(1- p_1(\theta_1))\right) , 
\end{eqnarray*} 
where in last two steps, we have used the definition of $A, B$ and the facts that  $ p_{\tau_A}(\theta_1) \geq A$, $p_{\tau_B}(\theta_1) = 1- p_{\tau_B}(\theta_2)  \geq 1-s(1-B)$ (\lemref{lem:smooth}) and $\mathrm{R}_T(\cdot)$ is a decreasing function (\lemref{lem:decreasing_regret}).

Rearranging the newly obtained inequality, we get for $p_1(\theta_1) \geq 1-\frac{1}{8s^2}$,
\begin{eqnarray*}
 & &  \mathrm{R}_T(p_1(\theta_1)) - \mathrm{R}_T(1-\frac{1}{4s^2}(1- p_1(\theta_1)))  \\ 
 &\leq &  \frac{256s^4}{1-q_{B,A}} \sqrt{(1-p_1(\theta_1))T} \\
 && + \frac{q_{B,A}}{1-q_{B,A}} \left(  \mathrm{R}_T\left(1-4s^2(1- p_1(\theta_1)) \right) - \mathrm{R}_T(p_1(\theta_1)) \right) .
\end{eqnarray*}  
By Lemma~\ref{lem:stop}, 
$$q_{B,A} \leq \frac{1-p_1(\theta_1)}{A-B} = \frac{1}{4s-\frac{1}{4s^2}} \leq \frac{4}{15s} \leq  \frac{4}{15}.$$
Therefore, we obtain the desired recurrence inequality by observing that
$$ \frac{256s^4}{1-q_{B,A}} \leq \frac{3840s^4}{11} \leq 360 s^4\,,$$ and $$\frac{q_{B,A}}{1-q_{B,A}} \leq \frac{\frac{4}{15s}}{1-\frac{4}{15s}} \leq \frac{4}{11s}\,.$$
\qed
\end{proof}

\section{Experiments}
\label{sec:exp}

In this section, we give empirical evidence that the actual regret incurred by Thompson sampling is consistent with what theory predicts.  In particular, we show that the regret does indeed scale linearly with $\sqrt{1/p}$ and $\sqrt{1-p}$, respectively, for the good- and bad-prior cases, where $p$ is the prior probability mass of the true model.

We consider the \theNModel\ case, with Bernoulli rewards and $\theta_1$ being the true model.  The bandit problem is as described in Theorems~\ref{thm:lb-poor} and \ref{thm:lb-good} when $N=2$ with poor- and good-priors, respectively.  However, we make a more natural choice of fixing $\Delta=0.05$, as opposed to making it a function of $p$ and $T$ (required by the theorems).  Furthermore, when $N>2$, we introduce randomness into $\{\theta_2,\ldots,\theta_N\}$ to generate different models as follows.  For the poor-prior case, the reward is $0.5$ for $a=1$ and $0.5-\Delta'$ for $a=2$, where $\Delta'\sim\mathrm{Unif}[\frac{\Delta}{2},\frac{3\Delta}{2}]$; the good-prior case is constructed similarly.  Therefore, under every model other than $\theta_1$, the optimal action is $a=2$, whose per-step regret is $\Delta$ (since $\theta_1$ is actually the true model and $a=1$ is the true optimal action).

We place a prior probability mass $p>0$ to $\theta_1$, and assigns the rest of probability mass uniformly on the other $N-1$ models.  We run Thompson sampling with this prior, denoted $p_1$, for $T=10000$ steps; the cumulative regret over the $T$ steps is averaged over $2000$ independent runs of the algorithm, to yield a reliable empirical estimate of $\reg_T(\theta_1, \ts(p_1))$.  

\figref{fig:ts} shows the relation between $\reg_T(\theta_1, \ts(p_1))$ and $p$, for both the good- and bad-prior cases, with $N\in\{2,5\}$.  The $y$-axis is the average cumulative regret.  The left panel has $\sqrt{1/p}$ as the $x$-axis, with $p\in\{0.001, 0.002, 0.005, 0.01, 0.02, 0.05, 0.1\}$; the right panel has $\sqrt{1-p}$ as the $x$-axis, with $p\in\{0.995,$ $0.998,$  $0.999, 0.9995, 0.9998, 0.9999\}$.  As predicted by our upper/lower bounds, both plots show a scaling that is nearly linear, especially for the small-$p$ case.  For the large-$p$ case, the linear effect is more prominent when $p$ gets close to $1$ (that is, towards left end of the $x$-axis), as suggested by \thmref{thm:lb-good}.

More interestingly, we can see a similar scaling for $N=5$, although we have not provided the corresponding upper bounds in this work.  These empirical results indicate that the lower bounds in Corollary~\ref{cor:general_lb} may be tight, while the upper bound derived directly from previous results~\citep{Russo14Information} (see \secref{sec:comparison}) may not.

\begin{figure}
\centering
\begin{tabular}{cc}
\includegraphics[width=0.5\columnwidth]{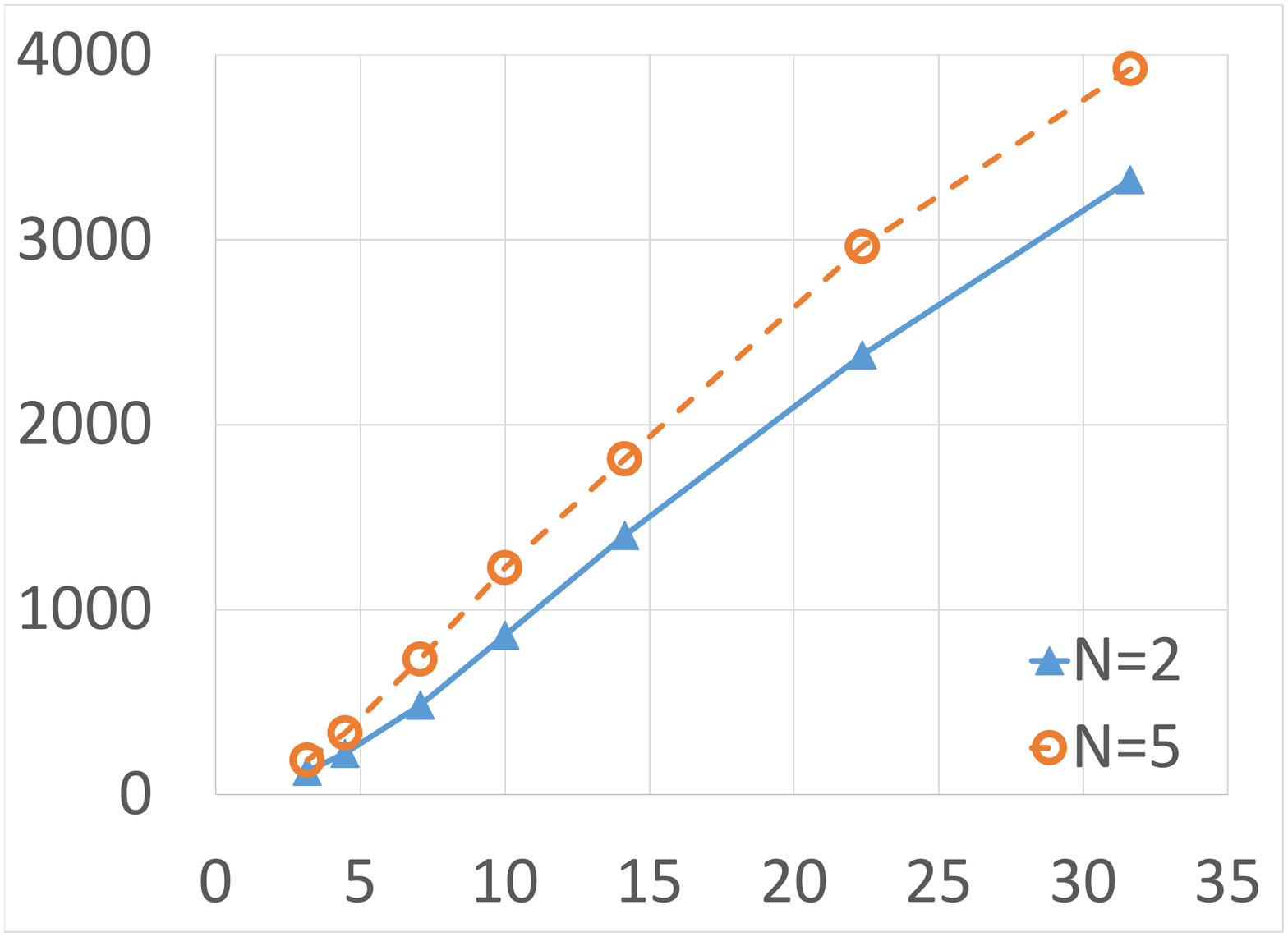} &
\includegraphics[width=0.5\columnwidth]{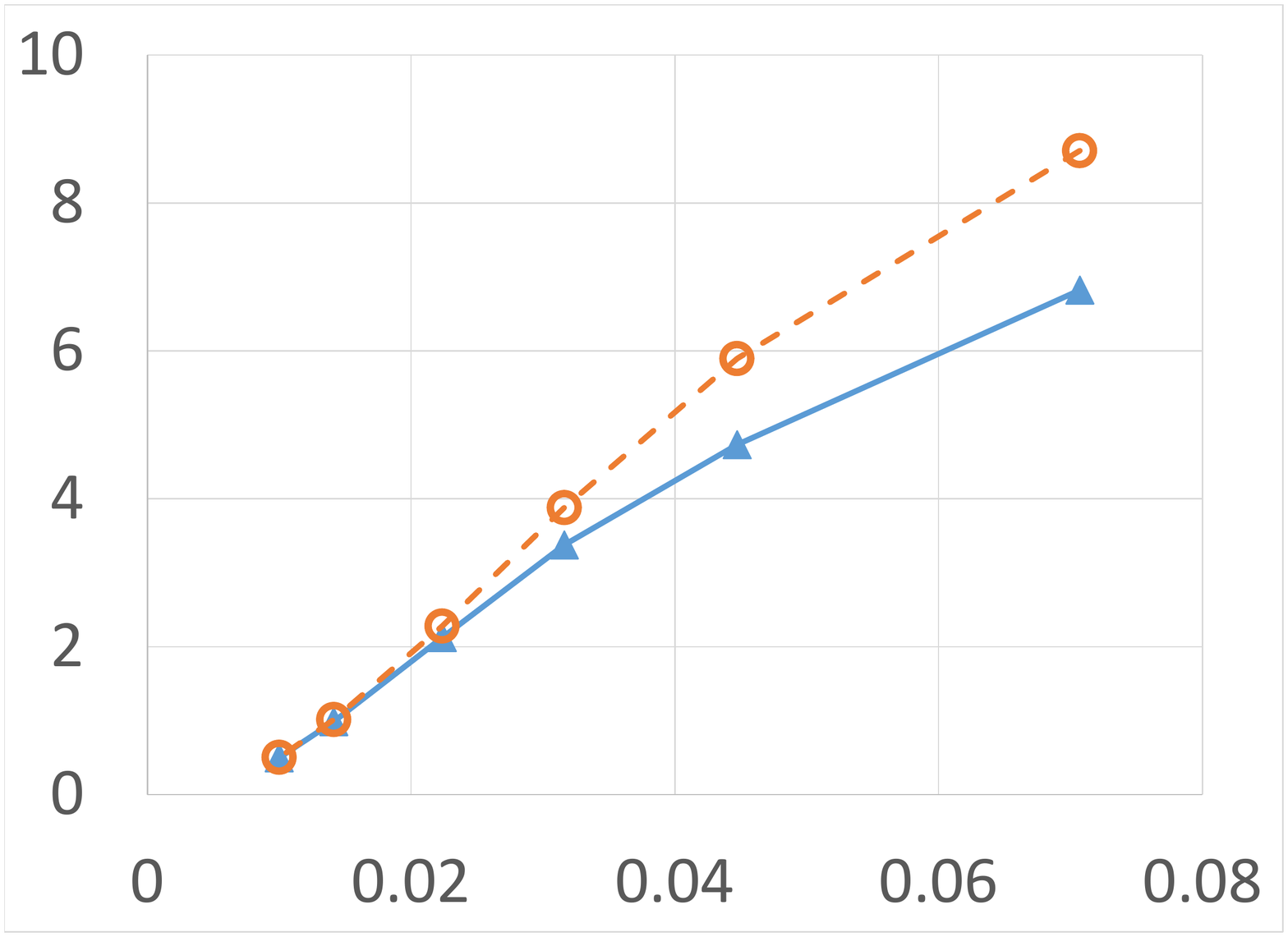}
\end{tabular}
\caption{Empirical cumulative regret $\reg_T(\theta_1, \ts(p_1))$, averaged over $2000$ runs, for the poor-prior (left) and good-prior (right) cases.  The $y$-axis is $\reg_T(\theta_1, \ts(p_1))$.  The $x$-axis is $\sqrt{1/p}$ for the left panel, and $\sqrt{1-p}$ for the right. \todo{TODO: (1) update with newer results; (2) error bars}} \label{fig:ts}
\end{figure}

\end{document}